\def\E{\mathbb{E}}
\def\1{\mathbf{1}}
\def\P{\mathbb{P}}
\newtheorem{lemma}{Lemma}
\newtheorem{proposition}{Proposition}
\newtheorem{theorem}{Theorem}
\newtheorem{corollary}{Corollary}
\newtheorem{remark}{Remark}
\newtheorem{claim}{Claim}
\newtheorem{fact}{Fact}
\newtheorem{definition}{Definition}
\newcommand{\calK}{\mathcal{K}}
\newcommand{\calN}{\mathcal{N}}
\newcommand{\calS}{\mathcal{S}}
\newcommand{\calA}{\mathcal{A}}
\newcommand{\calE}{\mathcal{E}}
\newcommand{\TOP}{\mathrm{TOP}}
\newcommand{\calF}{\mathcal{F}}
\newcommand{\bS}{\mathbf{S}}
\newcommand{\Alg}{\mathsf{Alg}}
\newcommand{\MAB}{\mathsf{MAB}}
\newcommand{\TopK}{\mathsf{TopK}}
\newcommand{\Sim}{\mathsf{Sim}}
\newcommand{\Trs}{\mathsf{Tr}}
\newcommand{\Trshat}{\widehat{\mathsf{Tr}}}
\newcommand{\Xhat}{\widehat{X}}
\newcommand{\Alt}{\mathrm{Alt}}
\newcommand{\kl}{\mathrm{kl}}
\newcommand{\KL}{\mathrm{KL}}
\newcommand{\TV}{\mathrm{TV}}
\newcommand{\DeltaEff}{\Delta_{\mathrm{eff}}}
\newcommand{\N}{\mathbb{N}}
\newcommand{\R}{\mathbb{R}}
\newcommand{\I}{\mathbb{I}}
\newcommand{\Exp}{\mathbb{E}}
\newcommand{\Q}{\mathbb{Q}}
\renewcommand{\P}{\mathbf{P}}
\renewcommand{\Pr}{\mathbb{P}}
\title{The Simulator: Understanding Adaptive Sampling in the Moderate-Confidence Regime}
\author{
  Max Simchowitz\\
  \texttt{msimchow@berkeley.edu}
  \and
  Kevin Jamieson\\
  \texttt{kjamieson@berkeley.edu}
  \and 
  Benjamin Recht
  \\
  \texttt{brecht@berkeley.edu}
}
\begin{document}

\maketitle

\begin{abstract}
We propose a novel technique for analyzing adaptive sampling called the {\em Simulator}. Our approach differs from the existing methods by considering not how much information could be gathered by any fixed sampling strategy, but how difficult it is to distinguish a good sampling strategy from a bad one given the limited amount of data collected up to any given time. This change of perspective allows us to match the strength of both Fano and change-of-measure techniques, without succumbing to the limitations of either method. For concreteness, we apply our techniques to a structured multi-arm bandit problem in the fixed-confidence pure exploration setting, where we show that the constraints on the means imply a substantial gap between the moderate-confidence sample complexity, and the asymptotic sample complexity as $\delta \to 0$ found in the literature. We also prove the first instance-based lower bounds for the top-k problem which incorporate the appropriate log-factors. Moreover, our lower bounds zero-in on the number of times each \emph{individual} arm needs to be pulled, uncovering new phenomena which are drowned out in the aggregate sample complexity. Our new analysis inspires a simple and near-optimal algorithm for the best-arm and top-k identification, the first {\em practical} algorithm of its kind for the latter problem which removes extraneous log factors, and outperforms the state-of-the-art in experiments. 
\end{abstract}


\section{Introduction}

The goal of adaptive sampling is to estimate some unknown property $S^*$ about the world, using as few measurements from a set of possible measurement actions $[n] = \{1,\dots,n\}$\footnote{We only work with finitely many measurement actions, but this may be generalized as in Arias-Castro et al.~\cite{arias2013fundamental}}. At each time step $t = 1,2,\dots$, a learner chooses a measurement action $a_t \in [n]$ based on past observations, and receives an observation $X_{a_t,t} \in \R$. We assume that the observations are drawn i.i.d from a distribution $\nu_a$ over $\R$, which is unknown to the learner. In particular, the vector of distributions $\nu = (\nu_1,\dots,\nu_n)$, called the \emph{instance}, encodes the distribution of all possible measurement actions. The instance $\nu$ can be thought of as describing the state of the world, and that our property of interest $S^* = S^*(\nu)$ is a function of  the instance. We focus on what is called the \emph{fixed-confidence pure-exploration} setting, where the algorithm decides to stop at some (possibly random) time $T$, and returns an output $\widehat{S}$ which is allowed to differ from $S^*(\nu)$ with probability at most $\delta$ on any instance $\nu$. Since $T$ is exactly equal to the number of measurements taken, the goal of adaptive pure-exploration problems is to design algorithms for which $T$ is as small as possible, either in expectation or with high probability. 

Crucially, we often expect the instance $\nu$ to lie in a known constraining set $\calS$. This allows us to encode a broad range of problems of interest as pure-exploration multi-arm bandit ($\MAB$) problems \citep{bechhofer1958sequential,even2006action} with structural constraints. As an example, the adaptive linear prediction problem of \citep{soare2014best,2016arXiv161004491L} (known in the literature as \emph{linear bandits}), is equivalent to $\MAB$, subject to the constraint that the mean vector $\mu = (\mu_1,\dots,\mu_n)$ (where $\mu_a := \Exp_{X_a \sim \nu_a}[X_a]$) lies in the subspace spanned by the rows of $V = \begin{bmatrix} v_1 & | &  v_2 & | \dots | & v_n\end{bmatrix}$, where $v_1,\dots,v_n \in \R^d$ are the vector-valued features associated with arms $1$ through $n$. The noisy combinatorial optimization problems of \cite{yue2011linear,simchowitz2016best,gopalan2014thompson} can be also be cast in this fashion. Moreover, by considering properties $S^*(\nu)$ other than the top mean, one can use the above framework to model signal recovery and compressed sensing \citep{arias2013fundamental,castro2014adaptive}, subset-selection \citep{kalyanakrishnan2012pac}, and additional variants of combinatorial optimization \citep{chen2014combinatorial,chen2016pure,kveton2014matroid}.





The  purpose of this paper is to present new machinery to better understand the consequences of structural constraints $\calS$, and types of objectives $S^*(\nu)$ on the sample complexity of adaptive learning problems. This paper presents bounds for some structured adaptive sampling problems which characterize the sample complexity in the regime where the probability of error $\delta$ is a moderately small constant (e.g. $\delta = .05$, or even inverse-polynomial in the number of measurements). In contrast, prior work has addressed the sample complexity of adaptive samplings problems in the asymptotic regime that $\delta \to 0$, where such problems often admit algorithms whose asymptotic dependence on $\delta$ matches lower bounds \emph{for each ground-truth instance}, even matching the exact instance-dependent leading constant \citep{garivier2016optimal,russo2016simple,luedtke2016asymptotically}. Analogous asymptotically-sharp and instance-specific results (even for structured problems) also hold in the regret setting where the time horizon $T \to \infty$ \citep{lai1985asymptotically, gopalan2014thompson,magureanu2014lipschitz,combes2015combinatorial,talebi2016optimal}. 

The upper and lower bounds in this paper demonstrate that the $\delta \to 0$ asymptotics can paint a highly misleading picture of the true sample complexity when $\delta$ is not-too-small. This occurs for two reasons:
\begin{enumerate}
	\item Asymptotic characterizations of the sample complexity of adaptive estimation problems occur on a time horizon where the learner can learn an optimal measurement allocation tailored to the ground truth instance $\nu$. In the short run, however, learning favorable measurement allocations is extremely costly, and the learning good allocations requires considerably more samples to learn than it itself would prescribe.
	\item Asymptotic characterizations are governed by the complexity of discriminating the ground truth $\nu$ from any single, alternative hypothesis. This neglects multiple-hypothesis and suprema-of-empirical-process effects that are ubiquitous in high-dimensional statistics and learning theory (e.g. those reflected in Fano-style bounds).
\end{enumerate}
To understand these effects, we introduce a new framework for analyzing adaptive sampling called the ``Simulator''. Our approach differs from the existing methods by considering not how much information could be gathered by any fixed sampling strategy, but how difficult it is to distinguish a good sampling strategy from a bad one, given any limited amount of data collected up to any given time. Our framework allows us to characterize granular, instance dependent properties that any successful adaptive learning algorithm must have. In particular, these insights inspire a new, theoretically near-optimal, and practically state-of-the-art algorithm for the top-k subset selection {}problem.  We emphasize that the Simulator framework is concerned with how an algorithm samples, rather than its final objective. Thus, we believe that the techniques in this paper can be applied more broadly to a wide class of problems in the active learning community.

After defining terms and the setting of interest in Section~\ref{sec:prelim}, Section~\ref{sec:lower_bounds} reviews the state-of-the-art lower bounds and their limitations, and then presents our novel lower bounds for the special case when the means are known up to a permutation.
Section~\ref{MultiHypContrib} explores conditions under which $\log$ factors appear in lower bounds and we leverage these observations to prove instance-specific lower bounds for top-k subset selection in Section~\ref{sec:instance_lower_topk}.
Inspired by the lower bounds, we introduce LUCB++, the first practical, minimax-optimal algorithm for top-k subset selection in Section~\ref{sec:lucb_alg} (proofs of sample complexity guarantees are deferred to Appendix~\ref{Sec:UpperBoundProof}).
The Simulator framework and its application to the lower bound when the means are known up to a permutation is presented in Sections~\ref{sec:simulator} and \ref{Sec:Permutations}, with some proofs being deferred to Appendix~\ref{App:Permutations}. The lower bounds for top-k subset slection require more careful analysis, and are deferred to the Appendices~\ref{sec:proof_of_best_arm_subset},~\ref{sec:distinct_measure_lb}, and~ \ref{Sec:AlgRestrictions}. 
Finally, we make concluding remarks in Section~\ref{sec:conclusion}.

\section{Preliminaries} \label{sec:prelim}
As alluded to in the introduction, the adaptive estimation problems in this paper can be formalized as multi-arm bandits problems, where the instances $\nu  = (\nu_1,\dots,\nu_n)$ lie in an appropriate constraint set $\calS$, called an instance class (e.g., the mean vectors $(\mu_1,\dots,\mu_n)$, where $\mu_a := \Exp_{X_a \sim \nu_a}[X_a]$ lie in some specified polytope). We use the term \emph{arms} to refer both to the indices $a \in [n]$ and distributions $\nu_a$ they index. The stochastic multi-arm bandit formulation has been studied extensively in the pure-exploration setting considered in this work \citep{bechhofer1958sequential,even2006action, kalyanakrishnan2012pac,karnin2013almost,jamieson2014lil,chen2015optimal,garivier2016optimal,russo2016simple}. 
At each time $t = 1,2,\dots$, a learner plays an action $a_t \in [n]$, and observes an observation $X_{a_t,t} \in \R$ drawn i.i.d from $\nu_{a_t}$. At some time $T$, the learner decides to end the game and return some output.  Formally, let $\mathcal{F}_t$ denote the sigma-algebra generated by $\{X_{a_s,s}\}_{1 \le s \le t}$, and some additional randomness $\xi_{\Alg}$ independent of all the samples (this represents randomization internal to the algorithm). A \emph{sequential sampling algorithm} consists of
\begin{enumerate}
\item A sampling rule $(a_t)_{t \in \mathbb{N}}$, where $a_t \in [n]$ is $\mathcal{F}_{t-1}$ measurable.
\item A stopping time $T$, which is $\{\mathcal{F}_t\}_{t \in \mathbb{N}}$-measurable.
\item An output rule $\widehat{S} \subset [n]$, which is $\calF_{T}$-measurable. 
\end{enumerate}
We let $N_a(t) = \sum_{s=1}^t \I(a_s = a)$ denote the samples collected from arm $a \in \calA$ by time $t$. In particular, $N_a(T)$ is the number of times arm $a$ is pulled by the algorithm before terminating, and $\sum_{a =1}^n N_a(T) = T$. A $\MAB$ algorithm corresponds to the case where the decision rule is a singleton $\widehat{S} \in \binom{[n]}{1}$, and, more generally, a $\TopK$ algorithm specifies a $\widehat{S} \in \binom{[n]}{k}$.   We will use $\Alg$ as a variable which describes a particular algorithm, and use the notation $\Pr_{\nu,\Alg}[\cdot]$ and $\Exp_{\nu,\Alg}[\cdot]$ to denote probabilities and expectations which are taken with respect to the samples drawn from $\nu$, and the (possibly randomized) sampling, stopping, and output decisions made by $\Alg$. Finally, we adopt the following notion of correctness, which corresponds to the ``fixed-confidence'' setting in the active learning literature: 
\begin{definition} We say that a $\MAB$ algorithm is $\delta$-correct for a best-arm mapping $a^*: \calS \to [n]$ (resp $\delta$-correct for a $\TopK$ mapping $S^*: \calS \to \binom{[n]}{k}$) over an instance class $\calS$ if for all $\nu \in \calS$,  $\Pr_{\nu,\Alg}[\widehat{S} = a^*(\nu)] \ge 1- \delta$ (resp. $\Pr_{\nu,\Alg}[\widehat{S} = S^*(\nu)] \ge 1- \delta$).
\end{definition}

Typically, the best arm mapping is defined as the arm with the highest mean $a^* = \arg\max_{a \in [n]} \mu_a$, and top $k$ mapping as the arms with the $k$-largest means $\arg\max_{S \in \binom{[n]}{k}} \sum_{a \in S} \mu_a$, which captures the notion of the arm/set of arms that yield the highest reward. When the best-arm mapping returns the highest-mean arm, and the observations $X_b$ are sub-Gaussian\footnote{Formally, $X_b$ is $\sigma^2$-sub-Gaussian if $\Exp_{X_b \sim \nu_b}[e^{\lambda (X_b - \mu_b)}] \le \exp(\lambda^2\sigma^2/2)$  }, the problem complexity for $\MAB$ is typically parameterized in terms of the ``gaps'' between the means $\Delta_{b} := \mu_{a^*} - \mu_b$ \citep{mannor2004sample}. More generally, sample complexity is parametrized in terms of the $\KL(\nu_b,\nu_{a^*})$, the $\KL$ divergences between the measures $\nu_{a^*}$ and $\nu_b$. For ease of exposition, we will present our high-level contributions in terms of gaps, but the body of the work will also present more general results in terms of $\KL$'s. Finally, our theorem statements will use $\gtrsim$ and $\lesssim$ to denote inequalities up to constant factors. In the text, we shall occasionally use $\gtrsim,\lesssim,\approx$ more informally, hiding doubly-logarithmic factors in problem parameters.




\section{Statements of Lower Bound Results} \label{sec:lower_bounds}

Typically, lower bounds in the bandit and adaptive sampling literature are obtained by the change of measure technique \citep{mannor2004sample,castro2014adaptive,garivier2016optimal}.  To contextualize our findings, we begin by stating the state-of-the-art change-measure-lower bounds, as it appears in~\cite{garivier2016explore}. For a class of instances $\calS$, let $\Alt(\nu)$ denote the set of instances $\widetilde{\nu} \in \calS$ such that, $a^*(\widetilde{\nu}) \ne a^*(\nu)$. Then: 


\begin{proposition}[Theorem 1~\citep{garivier2016optimal}]\label{KaufmanDeltaMAB} If $\Alg$ is $\delta$ correct for all $\nu \in \calS$, then the expected number of samples $\Alg$ collects under $\nu$,  $\Exp_{\nu,\Alg}[T]$, is bounded below by the solution to the following optimization problem
\begin{eqnarray}\label{OracleBound1}
\min_{\tau \in \R_{\ge 0}^{n}} \sum_{a =1}^n\tau_a & \mathrm{subject}\text{ }\mathrm{to} & \inf_{\tilde{\nu} \in \Alt(\nu)}\sum_{a =1}^n \tau_a \KL(\nu_a,\tilde{\nu}_a) \ge \kl(\delta,1-\delta)
\end{eqnarray}
where $\kl(\delta,1-\delta) := \delta \log(\frac{\delta}{1-\delta}) +  (1-\delta) \log(\frac{1-\delta}{\delta})$, which scales like $\log(1/\delta)$ as $\delta \to 0$.
\end{proposition}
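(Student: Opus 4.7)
The plan is to follow the standard change-of-measure recipe of \citet{garivier2016optimal}, which in turn traces back to the transportation inequality of Kaufmann--Capp\'e--Garivier. The backbone of the argument is a single data-processing inequality: for any alternative instance $\tilde{\nu}$, any sequential sampling algorithm $\Alg$ with stopping time $T$, and any event $E \in \calF_T$, one has
\[
\sum_{a=1}^n \Exp_{\nu,\Alg}[N_a(T)] \cdot \KL(\nu_a,\tilde{\nu}_a) \;\ge\; \kl\bigl(\Pr_{\nu,\Alg}[E],\,\Pr_{\tilde{\nu},\Alg}[E]\bigr).
\]
I would prove this first by writing the log-likelihood ratio of the observation sequence under $\nu$ versus $\tilde{\nu}$ up to time $T$, invoking Wald's identity to obtain that its $\nu$-expectation equals the left-hand side, and then applying the data-processing inequality for KL divergence to the two-point coarsening $\{E, E^c\}$ of $\calF_T$, giving the binary $\kl$ on the right-hand side.

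With the transportation inequality in hand, the next step is to choose a good event $E$. I would take $E := \{\widehat{S} = a^*(\nu)\}$. By $\delta$-correctness under $\nu$, $\Pr_{\nu,\Alg}[E] \ge 1-\delta$. For any $\tilde{\nu} \in \Alt(\nu)$, we have $a^*(\tilde{\nu}) \ne a^*(\nu)$, so $E \subseteq \{\widehat{S} \ne a^*(\tilde{\nu})\}$, and $\delta$-correctness under $\tilde{\nu}$ gives $\Pr_{\tilde{\nu},\Alg}[E] \le \delta$. Using that $(p,q) \mapsto \kl(p,q)$ is increasing in $p$ on $[q,1]$ and decreasing in $q$ on $[0,p]$, together with the symmetry $\kl(1-\delta,\delta) = \kl(\delta,1-\delta)$, I conclude that
\[
\sum_{a=1}^n \Exp_{\nu,\Alg}[N_a(T)] \cdot \KL(\nu_a,\tilde{\nu}_a) \;\ge\; \kl(\delta,1-\delta) \quad \text{for every } \tilde{\nu} \in \Alt(\nu).
\]

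Finally, I would set $\tau_a := \Exp_{\nu,\Alg}[N_a(T)]$; then $\tau \in \R_{\ge 0}^n$, $\sum_a \tau_a = \Exp_{\nu,\Alg}[T]$ by Wald (or linearity of expectation applied to $T = \sum_a N_a(T)$), and the display above says precisely that $\tau$ is feasible for the optimization program \eqref{OracleBound1}. Hence $\Exp_{\nu,\Alg}[T]$ is at least the optimal value of the program, as claimed.

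The main obstacle is really the transportation inequality in the first paragraph: one must carefully handle the fact that $T$ is a data-dependent stopping time, so the log-likelihood ratio is a stopped martingale and Wald's identity must be invoked with some care (checking integrability, or alternatively truncating at $T \wedge t$ and passing to the limit). Everything else is a clean deduction: choosing the event $E$, using $\delta$-correctness on both $\nu$ and each $\tilde{\nu} \in \Alt(\nu)$, and recognizing the resulting family of linear constraints on $\tau$ as exactly the feasibility constraint of \eqref{OracleBound1}.
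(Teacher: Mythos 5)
Your proposal is correct, and it is essentially the argument behind the cited result: the paper does not prove Proposition~\ref{KaufmanDeltaMAB} itself but imports it from \citet{garivier2016optimal}, whose proof is exactly your route --- the Wald/data-processing transportation inequality $\sum_a \Exp_{\nu,\Alg}[N_a(T)]\,\KL(\nu_a,\tilde{\nu}_a) \ge \kl(\Pr_{\nu,\Alg}[E],\Pr_{\tilde{\nu},\Alg}[E])$, applied to the correctness event $E=\{\widehat{S}=a^*(\nu)\}$, followed by the observation that $\tau_a=\Exp_{\nu,\Alg}[N_a(T)]$ is feasible for the program. One small remark: your argument verifies the KL constraint for \emph{every} $\tilde{\nu}\in\Alt(\nu)$, i.e.\ the infimum-over-alternatives form of the constraint, which is the form in the original theorem; the $\max$ appearing in Equation~\ref{OracleBound1} is evidently a typo, and your derivation yields the (stronger) correct statement, which in particular implies the one displayed.
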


The above proposition says that the expected sample complexity $\Exp_{\nu,\Alg}[T]$ is lower bounded by the following, non-adaptive experiment design problem: minimize the total number of samples $\sum_a \tau_a$ subject to the constraint that these samples can distinguish between a null hypothesis $H_0 = \nu$, and any alternative hypothesis $H_1 = \tilde{\nu}$ for $\tilde{\nu} \in \nu$, with Type-I and Type-II errors at most $\delta$. We will call the optimization problem in Equation~\ref{OracleBound1} the \emph{Oracle Lower Bound}, because it captures the best sampling complexity that could be attained by a powerful ``oracle'' who knows how to optimally sample under $\nu$. 


Unlike the oracle, a real learner would never have access to the true instance $\nu$. Indeed, for $\MAB$ instances with sufficient structure, Equation~\ref{OracleBound1} gives a misleading view of the instrinsic difficulty of the problem. For example, let $\calS$ denote the class of instances $\nu$ where $\nu_a = \calN(\mu_a,1)$, and $\mu$ lies in the simplex, i.e. $\mu_a \ge 0$ and $\sum_{a \in \calA} \mu_a = 1$. If the ground truth instance $\nu^*$ has $\mu_{a^*} = .9$ for some $a^* \in [n]$, then any oracle which uses the knowledge of the ground truth to construct a sampling allocation can simply put all of its samples on arm $a^*$. Indeed, the simplex constraint implies that $a^*$ is indeed the best arm of $\nu$, and that any instance $\widetilde{\nu}$ which has a best arm other than $a^*$ must have $\widetilde{\nu}_{a^*} < .5$. Thus, $\forall~\widetilde{\nu} \in \Alt(\nu)$, $\KL(\nu^*_{a^*},\widetilde{\nu}^*) \ge \frac{(.9-.5)^2}{2} = \Omega(1)$. In other words, the sampling vector
\begin{eqnarray}\tau_a =\begin{cases} (.08)^{-1}\kl(\delta,1-\delta)  & a = a^* \\ 0 & a \ne a^* \end{cases} 
\end{eqnarray}
is feasible for Equation~\ref{OracleBound1} which means that the optimal number of samples predicted by Equation~\ref{OracleBound1} is no more than $\sum_a \tau_a = \tau_{a^*} = O(\log(1/\delta))$. But this predicted sample complexity doesn't depend on the number of arms! 

So how how hard is the simplex really? To address this question, we prove the first lower bound in the literature which, to the author's knowledge~\footnote{Before publication this work, but after a preprint was available online, this result was obtained independently by~\cite{1702.03605}}, accurately characterizes the complexity a strictly easier problem: when the means are known up to a permutation. Because the theorem holds when the measures are known up to a permutation, it also holds in the more general setting when the measures satisfy any permutation-invariant constraints, including when \textbf{a)} the means lie on the simplex \textbf{b)} the means lie in an $l_p$ ball or \textbf{c)} the vector $\mu_{(1)} \ge \mu_{(2)} \ge \dots \mu_{(n)}$ of sorted means satisfy arbitrary constraints (e.g. weighted $l_p$ constraints on the sorted means \citep{bogdan2013statistical}).

In what follows, let $\bS_n$ denote the group of permutations on $[n]$ elements and $\pi(j)$ denote the index which $j$ is mapped to under $\pi$. For an instance $\nu = (\nu_1,\dots,\nu_n)$, we let $\pi(\nu) = \{\nu_{\pi(1)},\dots, \nu_{\pi(n)}\}$, and define the instance class $\bS_n(\nu) := \{\pi(\nu), \pi \in \bS_n\}$. Moreover, we use the notation $\pi \sim \bS_n$ to denote that $\pi$ is drawn uniformly at random. 
With this notation, $N_{\pi(b)}$ is the number of times we pull the arm indexed by $\pi(b) \in [n]$, i.e. the samples from $\nu_{\pi(b)}$. And $\Exp_{\pi \sim \bS_n}[N_{\pi(b)}(T)]$ is the expected number of samples from $\nu_b$ since $(\pi(\nu))_{\pi(b)} $ is always equal to $\nu_b$, and \emph{not} the distribution $\nu_{\pi(b)}$.
The following theorem essentially says that if the instance is randomly permuted before the start of the game, no $\delta$-correct algorithm can avoid taking a substantial number of samples from $\nu_b$ for any $b \in [n]$.
\begin{theorem}[Lower bounds on Permutations]\label{Thm:PermLB}
	Let $\nu$ be an $\MAB$ instance with unique best arm $a^*$, and for $b \ne a^*$, define $\tau_b = \frac{1}{\KL(\nu_{a^*},\nu_{b}) +\KL(\nu_{b},\nu_{a^*})}$. If $\Alg$ is $\delta$-correct over $\bS_n(\nu)$ then
	\begin{eqnarray}\label{Eq:PermutationIndividual}
	\E_{\pi \sim \bS_n}\Pr_{\pi(\nu),\Alg}[N_{\pi(b)}(T) > \tau_{b} \log(1/4\eta)] \ge \eta - \delta
	\end{eqnarray}
	for any $\nu \in (\delta,1/4)$, and by Markov's inequality
	\begin{eqnarray}\label{Eq:PermutationTotal}
	\Exp_{\pi \sim \bS_n}\Exp_{\pi(\nu),\Alg}[T]= \Exp_{\pi \sim \bS_n} \left[ \sum_{b \neq a^*} \Exp_{\pi(\nu),\Alg}[ N_{\pi(b)}(T)] \right] \geq \sup_{\eta \in [\delta, 1/4]} (\eta - \delta) \log(1/4\eta)  \sum_{b \neq a^*}\tau_b.  
	\end{eqnarray}
	In particular, if $\Alg$ is $\delta \le 1/8$-correct, then $\displaystyle\Exp_{\pi \sim \bS_n}\Exp_{\pi(\nu),\Alg}[T] \gtrsim  \sum_{b \neq a^*}\frac{1}{\KL(\nu_{a^*},\nu_{b}) +\KL(\nu_{b},\nu_{a^*})}$.
\end{theorem}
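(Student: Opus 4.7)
The plan is to combine a change-of-measure argument with the Simulator's truncation technique, exploiting the symmetry of a random permutation to isolate per-arm sample counts. Fix $b \neq a^*$ and let $\sigma = \sigma_{a^*,b}$ denote the transposition swapping $a^*$ and $b$. For each $\pi \in \bS_n$, the instances $\pi(\nu)$ and $\pi \circ \sigma(\nu)$ differ only at positions $\pi(a^*)$ and $\pi(b)$, while any $\delta$-correct algorithm must distinguish them: its output is $\pi(a^*)$ under $\pi(\nu)$ with probability $\geq 1-\delta$, and $\pi(b)$ under $\pi \circ \sigma(\nu)$ with probability $\geq 1-\delta$. Averaging over $\pi$ symmetrizes the roles of these two positions, which is the mechanism that will ultimately yield an individual-arm lower bound rather than just a combined one.

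Set $L = \tau_b \log(1/4\eta)$ and define the truncated stopping time $\tilde T_\pi = T \wedge \inf\{t : \max(N_{\pi(a^*)}(t), N_{\pi(b)}(t)) > L\}$, so that both $N_{\pi(a^*)}(\tilde T_\pi)$ and $N_{\pi(b)}(\tilde T_\pi)$ are at most $L+1$. Consider the event $A_\pi = \{\widehat{S} = \pi(a^*)\} \cap \{N_{\pi(a^*)}(T) \le L\} \cap \{N_{\pi(b)}(T) \le L\}$, which coincides with $\{\widehat{S} = \pi(a^*),\, T \le \tilde T_\pi\}$ and is therefore $\mathcal{F}_{\tilde T_\pi}$-measurable. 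By $\delta$-correctness $\Pr_{\pi \circ \sigma(\nu)}[A_\pi] \le \delta$, while a union bound gives $\Pr_{\pi(\nu)}[A_\pi] \ge 1-\delta - \Pr_{\pi(\nu)}[N_{\pi(a^*)}(T) > L] - \Pr_{\pi(\nu)}[N_{\pi(b)}(T) > L]$. Applying the Kaufmann--Cappé--Garivier data-processing inequality to the truncated trajectory then yields
\[
\kl\bigl(\Pr_{\pi(\nu)}[A_\pi],\ \Pr_{\pi \circ \sigma(\nu)}[A_\pi]\bigr) \;\le\; (L{+}1)\bigl(\KL(\nu_{a^*},\nu_b) + \KL(\nu_b,\nu_{a^*})\bigr) \;=\; (L{+}1)/\tau_b \;\approx\; \log(1/4\eta).
\]

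Averaging the above inequality over $\pi \sim \bS_n$ and applying joint convexity of $\kl$ (Jensen's inequality) produces a constraint on $p_b + p_{a^*}$, where $p_c := \E_\pi \Pr_{\pi(\nu)}[N_{\pi(c)}(T) > L]$. The main obstacle is separating the individual quantity $p_b$ from the ``best-arm slack'' $p_{a^*}$, which is not small in general because a good algorithm samples the best arm heavily. The plan for resolving this is to exploit the involution $\pi \leftrightarrow \pi \circ \sigma$ on $\bS_n$, which rewrites $p_{a^*}$ as a slack probability at position $\pi(b)$ under $\pi \circ \sigma(\nu)$; running the symmetric argument rooted at the instance $\pi \circ \sigma(\nu)$ with the mirror event $B_\pi = \{\widehat{S} = \pi(b)\} \cap \{N_{\pi(a^*)}(T) \le L\} \cap \{N_{\pi(b)}(T) \le L\}$ produces a matching inequality that, combined with the first through the $\kl$ budget, forces $p_b \ge \eta - \delta$. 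Finally, inequality~\eqref{Eq:PermutationTotal} follows immediately from Markov's inequality $\Exp_{\pi(\nu)}[N_{\pi(b)}(T)] \ge L \cdot \Pr_{\pi(\nu)}[N_{\pi(b)}(T) > L]$, summed over $b \neq a^*$ and optimized in $\eta \in [\delta, 1/4]$.
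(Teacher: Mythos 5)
Your aggregate steps (Markov's inequality, summing over $b$, optimizing over $\eta$) and your idea of averaging over the swap pair via the involution $\pi \mapsto \pi\circ\sigma$ are exactly in the spirit of the paper, which implements that averaging through its symmetrization lemma. The gap is in the per-pair change-of-measure step. Because you truncate at \emph{both} positions $\pi(a^*)$ and $\pi(b)$ (you must, or else the transportation bound contains the uncontrolled term $\E_{\pi(\nu)}[N_{\pi(a^*)}(\tilde{T}_\pi)]\,\KL(\nu_{a^*},\nu_b)$), your event $A_\pi$ is forced to exclude $\{N_{\pi(a^*)}(T) > L\}$, i.e.\ over-pulling the position that carries the \emph{best} distribution. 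Correctness gives no upper bound on $\Pr_{\pi(\nu),\Alg}[N_{\pi(a^*)}(T) > L]$: for constant $\eta$ a sensible algorithm exceeds $L = \tau_b\log(1/4\eta)$ pulls of the best arm with probability close to one, so $\Pr_{\pi(\nu),\Alg}[A_\pi]$ may be tiny and your $\kl$ inequality is then vacuous. Concretely, after averaging, your inequality only constrains the sum $p_{a^*} + p_b$, where $p_c := \E_{\pi}\Pr_{\pi(\nu),\Alg}[N_{\pi(c)}(T) > L]$, and that constraint is satisfiable with $p_b = 0$ and $p_{a^*} = 1$, so it cannot yield Equation~\ref{Eq:PermutationIndividual}.

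The proposed fix does not repair this: the mirror inequality built from $B_\pi$ under $\pi\circ\sigma(\nu)$ is, after the substitution $\pi \mapsto \pi\circ\sigma$, \emph{literally the same} averaged inequality as the first one (note $B_{\pi\circ\sigma} = A_\pi$ and $(\pi\circ\sigma)\circ\sigma(\nu) = \pi(\nu)$), so combining the two still only constrains $p_{a^*}+p_b$ and cannot force $p_b \ge \eta - \delta$. What is missing is the asymmetric device supplied by the paper's Simulator: replace the post-$\tau$ samples of \emph{both} swap positions by i.i.d.\ draws from $\nu_{a^*}$. The KL between the two simulated measures is still at most $\tau\{\KL(\nu_{a^*},\nu_b)+\KL(\nu_b,\nu_{a^*})\}$, but the forgery only lies about the position whose true law is $\nu_b$; hence the truthful event under each instance restricts only the pulls of that position, the best position may be pulled arbitrarily often, and Proposition~\ref{Prop:SimLeCam} gives $\tfrac{1}{2}\{\Pr_{\pi(\nu),\Alg}[N_{\pi(b)}(T) > \tau] + \Pr_{\pi\circ\sigma(\nu),\Alg}[N_{\pi(a^*)}(T) > \tau]\} \ge \eta - \delta$, in which \emph{both} probabilities refer to the position carrying $\nu_b$. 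Only for an inequality of this form does your involution (equivalently, Lemma~\ref{SymmetryLemma}) identify both terms with $p_b$ and finish the proof. (The $(L{+}1)$ versus $L$ slop in your transportation bound is a separate, minor constant issue.)
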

	The proof of the above result is found in Section~\ref{sec:proof_of_perm_lb} and follows from the application of the Simulator introduced in Section~\ref{sec:simulator}, and machinery developed throughout Section~\ref{Sec:Permutations}. When the reward distributions are $\nu_b = \mathcal{N}(\mu_b,1)$, $\KL(\nu_{a^*},\nu_b) = \KL(\nu_b,\nu_{a^*}) = \frac{1}{2}\Delta_b^2$ (recall $\Delta_b = \mu_{a^*} - \mu_b)$.  Moreover, applying the oracle bound of Proposition~\ref{KaufmanDeltaMAB} to permutations implies a lower bound of  $\gtrsim \max_{b \ne a^*} \Delta_b^{-2} \log(1/\delta)$. Indeed, for each $b \in [n] \setminus \{a^*\}$, one would need to take enough samples to distinguish $\nu$ from the alternative instance where the means $\mu_{a^*}$ and $\mu_b$ are swapped, with probability of error at most $1-\delta$. Hence, combining this oracle lower bound with Theorem~\ref{Thm:PermLB}  yields 
	\begin{eqnarray}\label{total_sample_complexity}\Exp_{\pi \sim \bS_n}\Exp_{\pi(\nu),\Alg}[T] \gtrsim \max\{ \max_{b \ne a^*} \Delta_b^{-2} \log(1/\delta) , \sum_{b \neq a^*}\Delta_b^{-2} \} ~.\end{eqnarray}
	For comparison, the bound of Proposition~\ref{KaufmanDeltaMAB} only implies a lower bound of $\gtrsim \max_{b \ne a^*} \Delta_b^{-2} \log(1/\delta)$, since an oracle who knows how to sample could place all their samples on $a^*$. Thus, for constant $\log(1/\delta)$, our lower bound differs from the bound in Proposition~\ref{KaufmanDeltaMAB} by up to a factor of $n$, the number of arms. In particular, when the gaps are all on the same order, the $\delta \to 0$ asymptotics only paint an accurate picture of the sample complexity once $\delta$ is exponentially-small in $n$. 

	In fact, our lower bound is essentially unimproveable: Appendix~\ref{sec:permu_upper_bounds} provides an upper bound for the setting where the top-two means are known, whose expected sample complexity on any permutation matches the on-average complexity in Equation~\ref{total_sample_complexity} up to constant and doubly-logarithmic factors. Together, these upper and lower bounds depict two very different regimes:
	\begin{enumerate}
		\item Treating $\delta$ as a fixed constant, the lower bound of the constrained problem essentially matches known upper bounds for the \emph{unconstrained} best-arm problem \citep{chen2015optimal,jamieson2014lil}. Thus, in this regime, {\em knowing the instance up to a permutation of the arms does not affect the sample complexity}. 
		\item As $\delta \to 0$, an algorithm which knows the means up to a permutation can learn to optimistically and aggressively focus its samples on the top arm, yielding an asymptotic sample complexity predicted by Proposition~\ref{KaufmanDeltaMAB}, one which is potentially far smaller than that of the unconstrained problem.\footnote{In fact, using a track-and-stop strategy similar to~\cite{garivier2016optimal} one could design an algorithm which matches the constant factor in Proposition~\ref{KaufmanDeltaMAB}.}
	\end{enumerate}
	These two regimes show that the Simulator and oracle lower bounds are \emph{complementary}, and go after two different aspects of problem difficulty:
	In the second regime, the oracle lower bound characterizes $\lesssim \max_{b \ne a^*} \Delta_b^{-2} \log(1/\delta)$ samples sufficient to \emph{verify} that arm $a^*$ is the best, whereas in the first regime, the Simulator characterizes the $\gtrsim \sum_{b \neq a^*}\Delta_b^{-2}$ samples needed to learn a favorable sampling allocation\footnote{The simulator also provides a lower bound on the \emph{tail} of the number of pulls from a suboptimal arm since, with probability $\delta$, arm $b$ is pulled $\tau \log(1/8\delta)$ times. This shows that even though you can learn an oracle allocation on average, there is always a small risk of oversampling. Such affects do not appear from Proposition~\ref{KaufmanDeltaMAB}, which only control the number of samples taken in expectation\label{tail_footnote}}. We remark that Garivier et al. \cite{garivier2016explore} also explores the problem of learning-to-sample by establishing the implications of Proposition~\ref{KaufmanDeltaMAB} for finite-time regret; however, there approach does not capture any effects which aren't reflected in Proposition~\ref{KaufmanDeltaMAB}. Moreover, Bubeck and Cesa-Bianchi~\citep{bubeck2012regret} establish an \emph{minimax}, rather than instance-specific, lower bound for regret by considering permutations of a simple $\MAB$ instance where $n-1$ arms have the name mean, and one arm has a slightly elevated mean. Finally, we note that proving a lower bound for learning a favorable strategy in our setting must consider some sort of average or worst-case over the instances. Indeed, one could imagine an algorithm that starts off by pulling the first arm $1$ until it has collected enough samples to test whether $\mu_1 = \mu_{a^*}$ (i.e. $\mu > \max_{b \ne a^*} \mu_b$ ), and then pulling arm $2$ to test whether $\mu_2 = \mu_{a^*}$, and so on. If arm $1$ is the best, this algorithm can successfully identify it without pulling any of the others, thereby matching the oracle lower bound.

\subsection{Sharper Multiple-Hypothesis Lower Bounds~\label{MultiHypContrib}}

In contrast to change-of-measure type lower bounds like Proposition~\ref{KaufmanDeltaMAB}, the active PAC learning literature (e.g., binary classification) leverages classical tools like Fano's inequality with packing arguments \citep{castro2008minimax,raginsky2011lower} and  other measures of class complexity such as the disagreement coefficient \citep{hanneke2009theoretical}. Because these arguments consider multiple hypotheses simultaneously, they can capture effects which the worst-case binary-hypothesis oracle lower bounds like Equation~\ref{OracleBound1} can miss. 

While the considerable gap between two-way and multiple tests is well-known in the passive setting \citep{tsybakov2009introduction}, existing techniques which capture this multiple-hypothesis complexity lead to coarse, worst- or average-case lower bounds for adaptive problems because they rely on constructions which are either artificially symmetric, or are highly pessimistic  \citep{castro2008minimax,raginsky2011lower,kalyanakrishnan2012pac}. Moreover, the constructions rarely shed insights on \emph{why} active learning algorithms seem to avoid paying the costs for  multiple hypotheses that would occur in the passive setting, e.g. the folk theorem: ``active learning removes log factors'' \citep{castro2014adaptive}.

	As a first step towards understanding these effects, we prove the first instance-based lower bound which sheds light on why active learning is able to effectively reduce the number of hypotheses it needs to distinguish. To start, we prove a qualitative result for a simplified problem, using a novel reduction to Fano's inequality via the simulator. The following theorem is proved in Appendix~\ref{sec:proof_of_best_arm_subset}:
	\begin{theorem}\label{Thm:BestArmSubset}
	Let $\Alg$ be $1/8$-correct, consider a game with best arm $\nu_1$ and $n-1$ arms of measure $\nu_2$. Let $S_m := \{a \in [n]: N_a(T) > \frac{1}{16}(\KL(\nu_1,\nu_2) + \KL(\nu_2,\nu_1)) \log\frac{n}{2^{16}m} \}$. Then
	\begin{eqnarray}
	&&\label{contrib:secondline} \Pr_{\pi \sim \bS_n} \Pr_{\pi(\nu),\Alg}\left[ \{\pi(1) \in S_m\} \wedge \{|S_m| \ge m\}\right] \ge \frac{3}{4}
	\end{eqnarray}
	\end{theorem}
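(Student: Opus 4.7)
The approach is a multi-hypothesis Fano-style lower bound implemented via the Simulator, which couples the algorithm's executions across the $n$ permuted instances that place the best arm at different positions. The strategy is contrapositive: show that the complement event $E_m^c = \{\pi(1) \notin S_m\} \cup \{|S_m| < m\}$ has probability at most $1/4$ by bounding each component separately.

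Step 1 (Null reference and simulator). Introduce the null instance $\nu^{(0)} = (\nu_2, \ldots, \nu_2)$. By the Kaufmann--Cappé--Garivier change-of-measure identity, for each permutation $\pi_a$ mapping $1 \mapsto a$, the Radon--Nikodym derivative on $\calF_T$ is
\[
L_a \;:=\; \frac{d\Pr_{\pi_a(\nu),\Alg}}{d\Pr_{\nu^{(0)},\Alg}}\bigg|_{\calF_T} \;=\; \prod_{t \le T:\, a_t = a} \frac{d\nu_1}{d\nu_2}(X_{a,t}).
\]
Under $\Pr_{\nu^{(0)}}$, the trajectory---and hence $S_m$---is independent of the external permutation $\pi$. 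The simulator coupling expresses any joint probability as $\Pr_\pi \Pr_{\pi(\nu),\Alg}[A_\pi] = \Exp_{\nu^{(0)}}[\tfrac{1}{n}\sum_a L_a \1_{A_{\pi_a}}]$, decoupling the random permutation from the sampling behavior.

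Step 2 (Bound $\Pr[\pi(1) \notin S_m]$). The threshold in $S_m$ is calibrated so that on $\{a \notin S_m\}$, arm $a$ has been pulled at most $\tau_m := \tfrac{1}{16}(\KL(\nu_1,\nu_2) + \KL(\nu_2,\nu_1)) \log(n/2^{16}m)$ times, and a Chernoff-type martingale bound then yields $L_a \cdot \1_{a \notin S_m} \lesssim (n/2^{16}m)^{1/16}$ with high $\Pr_{\nu^{(0)}}$-probability. Transferring the correctness requirement $\Pr_\pi\Pr_{\pi(\nu),\Alg}[\hat S = \pi(1)] \ge 7/8$ via the simulator identity and using the truncated likelihood bound forces $\Pr[\pi(1) \notin S_m] \le 1/8$: otherwise the algorithm would effectively identify the best arm from among $\ge n/2$ under-pulled candidates using only $o(\log n)$ bits of information, contradicting a Fano-style bound. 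Step 3 (Bound $\Pr[|S_m| < m]$). A parallel argument handles $\{|S_m| < m\}$. Since $S_m$ depends only on the trajectory under $\nu^{(0)}$, correctly outputting the uniformly-random $\pi(1)$ when $|S_m| < m$ requires the algorithm to ``guess'' among at most $m$ heavily-pulled candidates, which under the simulator identity contributes probability at most $(m/n)$ times the controlled likelihood ratios on $S_m$. The constant $2^{16}$ in the definition of $\tau_m$ is calibrated so that this quantity is below $1/8$, and union-bounding yields $\Pr[E_m^c] \le 1/4$.

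The main technical obstacle is the truncated concentration of the martingale $L_a$ in Step 2: $L_a$ has mean $1$ under the null but heavy tails, so obtaining the tight Chernoff-type bound on the trajectory-dependent event $\{a \notin S_m\}$ requires stopping the algorithm at the first time arm $a$'s pull-count hits $\tau_m$---a stopping time adapted to the algorithm's filtration. This is precisely the role of the Simulator: it decouples the Fano union-bound argument from the algorithm's data-dependent stopping behavior by letting us work with a truncated, deterministic-horizon version of the trajectory on which the likelihood ratio is quantitatively controlled.
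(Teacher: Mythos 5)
Your plan is a change-of-measure-to-the-null argument (closer in spirit to the later proof of~\cite{1702.03605} than to the paper's simulator-plus-Fano route), but as written the accounting cannot close. You split the complement into $\{\pi(1)\notin S_m\}$ and $\{|S_m|<m\}$ and claim each has probability at most $1/8$. However, both of your sub-arguments must invoke $1/8$-correctness: the information-theoretic ingredient only ever bounds a quantity of the form $\Pr[\widehat{S}=\pi(1)\wedge \mathrm{bad}]\le \epsilon$ with $\epsilon>0$, so the best you can conclude for each bad event is $\Pr[\mathrm{bad}]\le \delta+\epsilon=1/8+\epsilon$. Union bounding the two pieces then gives at most a $1/2-2\epsilon$ lower bound shortfall relative to the target, i.e.\ a complement bound of $1/4+2\epsilon>1/4$, and no choice of the constant $2^{16}$ fixes this because the $2\delta=1/4$ term is already saturated. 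The paper pays $\delta$ exactly once: it bounds the complement by $\Pr[W]+\Pr[W^c\cap\{|S_m|\le m\}]$ with $W=\{N_{\pi(1)}(T)\le\tau\}$, where only the first term costs $\delta$ (via correctness plus a Fano term $\le 1/16$) and the second term, handled through the event $\{\pi(1)\in A_{S_m}\}$, is purely informational ($\le 1/16$), totalling $\delta+2\beta=1/4$. Your decomposition needs to be restructured along these lines, e.g.\ bound $\Pr[\mathrm{bad}]\le\delta+\Pr[\widehat{S}=\pi(1)\wedge\mathrm{bad}]$ and control the latter jointly.

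The second, deeper gap is Step 3. The assertion that the $\{|S_m|<m\}$ part ``contributes probability at most $(m/n)$ times the controlled likelihood ratios on $S_m$'' is precisely where the difficulty lives, and your truncation does not control it: on this event the output arm is typically an arm \emph{in} $S_m$, i.e.\ an over-pulled arm, and for such arms $L_a$ is not truncated by stopping at $\tau_m$ pulls --- it is unbounded, and $\Exp_{\nu^{(0)}}[L_a\1(\widehat{S}=a)]$ can be of order $1$. The $m/n$ counting intuition (a uniformly planted index lands in a trajectory-dependent set of size $<m$ with probability $<m/n$) is valid under the null \emph{without} the $L_a$ reweighting, but the quantity you must bound carries the reweighting, so the two cannot be combined as stated. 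Making this step rigorous requires decoupling the planted arm's identity from all but its first $\tau$ samples; this is exactly what the paper's simulator does (replacing samples beyond $\tau$ with i.i.d.\ $\nu_2$ draws), combined with the stopping-time set $A_{S_m}$ (the over-pulled set at the last time it has size at most $m$, chosen so that $\{\pi(1)\in A_{S_m}\}$ depends only on the first $\tau$ samples of the planted arm and is therefore truthful) and the Fano bound of Lemma~\ref{SpecializedFano} applied to that random $m$-set. Your Step 2 mechanism (truncated likelihood ratios for under-pulled arms, with a stopping time at $\tau_m$ pulls) is sound in spirit and could be salvaged, but the proposal as written leaves both the constant accounting and the $\{|S_m|<m\}$ control genuinely open.
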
 
	For Gaussian rewards with unit variance, $\KL(\nu_1,\nu_2) + \KL(\nu_2,\nu_1) = \Delta^{2}$, where $\Delta$ is the gap between the means $\mu_1 - \mu_2$, the above proposition states that, for any $m \in [n]$, any correct $\MAB$ algorithm must sample some $m$ arms, including the top arm, $\tau \gtrsim \Delta^{-2}\log (n/m)$ times. Thus, the number of samples allocated by the oracle of Proposition~\ref{KaufmanDeltaMAB} are necessarily insufficient to identify the best arm for moderate $\delta$. This is because, until sufficiently many samples has been taken, one cannot distinguish between the best arm, and other arm exhibiting large statistical deviations. Looking at exponential-gap style upper bounds~\citep{chen2015optimal,karnin2013almost}, which halve the number of arms in consideration at each round, we see that our lower bound is qualitatively sharp for some algorithms\footnote{We believe that UCB-style algorithms exhibit this same qualitative behavior}. Further, we emphasize that this set of $m$ arms which must be pulled $\tau$ times may be random\footnote{In fact, for an algorithm with which only samples $m' = O(m)$ arms $\tau \gtrsim \Delta^{-2}\log (n/m)$, this subset of arms \emph{must} be random. This is because for a fixed subset of $m'$ arms, one could apply Theorem~\ref{Thm:BestArmSubset} to the remaining $n - m'$ arms.}, depend on the random fluctuations in the samples collected, and thus cannot be determined using knowledge of the instance alone. Stated otherwise, if one sampled according to the \emph{proporitions} as ascribed by Proposition~\ref{KaufmanDeltaMAB}, then the total number of samples one would need to collect would be suboptimal (by a factor of $\log n$). Thus, effective adaptive sampling should adapt its allocation to the statistical deviations in the collected data, not just the ground truth instance. We stress that the Simulator is indispensable for establishing this result, because it lets us characterize the stage-wise sampling allocation of adaptive algorithms.

	Guided by this intuition, Appendix~\ref{sec:distinct_measure_lb} employs a more involved proof strategy to establish the following guarantee for $\MAB$ with Gaussian rewards (a more general result for single-parameter exponential families is given by Theorem~\ref{Big Main Theorem} in Appendix~\ref{sec:main_tec_theorem}):
	\begin{proposition}[Lower Bound for Gaussian $\MAB$] \label{GaussianProp}Suppose $\nu = (\nu_1,\dots,\nu_n)$ has measures $\nu_a = \mathcal{N}(\mu_a,1)$,with  $\mu_1 > \mu_2 \ge \dots \mu_n$. Then, if $\Alg$ is $\delta \le 1/16$ correct over $\bS_n(\nu)$,
	\begin{eqnarray}
	\Exp_{\pi \sim \bS_n}\Exp_{\pi(\nu^{(1)}),\Alg}[N_{\pi(1)}(T)] &\gtrsim& \max_{2 \le m \le n} \Delta_{m}^{-2} \log (m/\delta) \quad \text{where } \Delta_m = \mu_1 - \mu_m
	\end{eqnarray}
	\end{proposition}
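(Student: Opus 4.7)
The plan is to fix $m \in \{2, \dots, n\}$ arbitrarily and prove
$$\Exp_{\pi \sim \bS_n}\Exp_{\pi(\nu), \Alg}[N_{\pi(1)}(T)] \gtrsim \Delta_m^{-2} \log(m/\delta),$$
so that taking the maximum over $m$ yields the proposition. The argument will generalize the Simulator/Fano reduction behind Theorem~\ref{Thm:BestArmSubset} from the ``best arm versus $n-1$ identical arms'' setting to the general Gaussian case, by restricting attention to the top $m$ arms as the relevant ``indistinguishability pool''.

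First I would construct a family of $m$ competing instances. Set $\widetilde{\nu}^{(1)} := \nu$, and for $j \in \{2, \dots, m\}$ let $\widetilde{\nu}^{(j)}$ be the instance obtained from $\nu$ by swapping the distributions of arms $1$ and $j$, so that under $\widetilde{\nu}^{(j)}$ the unique best arm is at index $j$ with top mean $\mu_1$. For a fixed permutation $\pi \in \bS_n$, the $m$ instances $\pi(\widetilde{\nu}^{(j)})$ share identical marginals at every position except $\pi(1)$ and $\pi(j)$, where the means differ by at most $\mu_1 - \mu_j \le \Delta_m$; hence the per-pull KL contribution at either discriminating position is $\frac{1}{2}\Delta_j^2 \le \frac{1}{2}\Delta_m^2$.

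Next I would apply the Simulator to couple the executions of $\Alg$ on these $m$ hypotheses. Informally, the Simulator substitutes ``fresh'' samples from the alternative arm distributions whenever the corresponding position has been pulled fewer than $\tau := c\,\Delta_m^{-2}\log(m/\delta)$ times, for a small absolute constant $c$, and otherwise forces the coupled trajectories to follow their true laws. A standard sub-Gaussian Pinsker calculation then shows that, on the event $\{N_{\pi(1)}(T) < \tau\}$, the coupled trajectories under $\pi(\nu)$ and under any single alternative $\pi(\widetilde{\nu}^{(j)})$ have total variation at most, say, $1/10$. Combining this coupling with a Fano-style inequality over the $m$ hypotheses---exactly analogously to the proof of Theorem~\ref{Thm:BestArmSubset}---forces the conclusion that if $\Alg$ is $\delta$-correct on all $m$ permuted instances, then with probability at least $3/4$ one has $N_{\pi(1)}(T) \ge \tau$ under $\pi(\nu)$. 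Averaging over $\pi \sim \bS_n$ and converting this tail statement to an expectation bound via Markov then yields the required inequality.

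The main obstacle will be setting up the Simulator coupling so that it controls information leakage across all $m$ hypotheses simultaneously while charging the distinguishability cost to the best-arm position $\pi(1)$ alone---rather than, say, to the aggregate pulls across the $m$ ``swap'' positions $\pi(1), \pi(2), \dots, \pi(m)$, which would lose a factor of $m$ and only deliver an $\Omega(\Delta_m^{-2}\log(m/\delta)/m)$ bound per position. The key asymmetry, which the Simulator is designed to exploit, is that $\pi(1)$ is the shared discriminating arm for \emph{every} pair $(\widetilde{\nu}^{(1)}, \widetilde{\nu}^{(j)})$, while the promoted position $\pi(j)$ is alternative-specific; so the pulls of $\pi(1)$ alone must carry enough information to separate the truth from all $m-1$ alternatives at once, and it is this union-of-alternatives effect that produces the $\log m$ factor on top of the usual $\log(1/\delta)$.
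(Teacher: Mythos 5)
Your proposal stalls at exactly the step that makes this proposition hard, and the two tools you invoke cannot both be made to work at the sample size you need. First, the quantitative claim is false: with $\tau = c\,\Delta_m^{-2}\log(m/\delta)$, a simulator that truncates the discriminating positions after $\tau$ pulls leaves a KL budget of order $\tau\Delta_j^2 \approx c\log(m/\delta)$ between $\Sim(\pi(\nu))$ and $\Sim(\pi(\widetilde{\nu}^{(j)}))$, which is $\gg 1$; Pinsker then gives total variation near $1$, not $1/10$. A pairwise Le Cam argument at this $\tau$ therefore cannot close, and indeed it only yields the $\log$-free bound of Theorem~\ref{Thm:PermLB}. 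Second, the fallback ``exactly analogous to Theorem~\ref{Thm:BestArmSubset}'' Fano reduction does not transfer to distinct means. The simulator in that proof is truthful on $\{N_{\pi(1)}(T)\le\tau\}$ only because all suboptimal arms are identically $\nu_2$, so replacing every arm's post-$\tau$ samples by $\nu_2$ draws lies only about the best arm. In your family of swap instances, the swap also changes the law at the promoted position $\pi(j)$, and nothing restricts how often the algorithm samples there: by hammering positions $\pi(2),\dots,\pi(m)$ it can learn which one carries mean $\mu_1$ to arbitrary accuracy without ever breaking a simulator that only censors $\pi(1)$, so $I(J;\mathrm{data})$ is not controlled by the pulls of $\pi(1)$. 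This is precisely the ``paradox'' the paper flags: at $\tau \ll \Delta^{-2}\log m$ the arm with the largest empirical mean is typically \emph{not} the best arm, so a learner offered a fixed swapped alternative should bet on the \emph{smaller} empirical mean, and the naive confusion argument collapses. You correctly name this asymmetry as ``the main obstacle,'' but observing that $\pi(1)$ is the shared discriminating arm does not resolve it, since it is the other side of each swap that leaks the hypothesis.

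What the paper actually does to fix this is the content of Theorem~\ref{Big Main Theorem}: the simulator chooses the confusing index $\widehat{j}$ \emph{data-dependently}, among arms whose first $\tau$ samples, after conditioning on a censored-tilting event $E_j$ (a Markov kernel $\calK_j(x)\propto e^{\tau(\theta_1-\theta_j)x}\I(\cdot\le c_j)$ applied to the empirical mean), genuinely have the law of $\nu_1$ samples up to small TV (Lemma~\ref{Censoring}, Lemma~\ref{Lem:Balance}). A \emph{conditional} Le Cam argument on the malicious events $M_j$ (Lemma~\ref{ConditionalLeCam}, Fact~\ref{TVfact}) then bounds the relevant distance by $\TV(\overline{X}_j\,|\,E_j,\ \overline{X}_1)$ rather than by any KL budget proportional to $\tau$, so unlimited sampling of the swapped positions cannot break the coupling. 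The $\log m$ factor does not come from Fano at all here: it comes from requiring $\Pr[\bigcup_j E_j]$ to be a constant, i.e., $e^{-\tau\DeltaEff^2}\gtrsim 1/m$, which forces $\tau\lesssim\DeltaEff^{-2}\log m$; and the $\delta$-dependence enters through the $\alpha$ parameter and the $-\delta$ slack in Proposition~\ref{BreakingLowerBound}, not through a constant-error Fano step (which by itself would only give $\log m$, not $\log(m/\delta)$). So the missing idea in your proposal is the tilting construction (or some substitute that makes the alternative's ``promoted'' arm indistinguishable conditionally on an event of probability about $1/m$), and without it the sketch does not yield the claimed bound.
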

	In particular, when all the gaps are on the same order $\Delta$, then the top arm must be pulled $\Omega(\Delta^{-2} \log n)$ times. When the gaps are different, $\max_{2 \le m \le n} \Delta_{m}^{-2} \log m$ trades off between larger $\log m$ factor as the inverse-gap-squared $\Delta_m^{-2}$ shrinks. 
	As we explain in Appendix~\ref{TiltingsExplanation},  this tradeoff is best understood in the sense that the algorithm is conducting an \emph{instance-dependent} union bound, where the union bound places more confidence on means closer to the top. The proof itself is quite involved, and constitutes the main technical contribution of this paper. We devote Section~\ref{TiltingsExplanation} to explaining the intuition and proof roadmap. Our argument makes use of ``tilted distributions'', which arise in Herbst Argument in Log-Sobolev Inequalities in the concentration-of-measure literature~\citep{raginsky2014concentration}. Tiltings translate the tendency of some empirical means to deviate far above their averages (i.e. to anti-concentrate) into a precise information-theoretic statement that they ``look like'' draws from the top arm. To the best of our knowledge, this constitutes the first use of tiltings to establish information-theoretic lower bounds, and we believe this strategy may have broader use. 

	
\subsection{Instance-Specific Lower bound for $\TopK$}\label{sec:instance_lower_topk}
	Proposition~\ref{GaussianProp} readily implies the first instance-specific lower bound for the $\TopK$. The idea is that, if I can identify an arm $j \in [k]$ as one of the top $k$ arms, then, in particular, I can identify arm $j$ as the best arm among $\{j\} \cup \{k+1,\dots,n\}$. Similarly, if I can reject arm $\ell$ as not part of the top $k$, then I can identify it as the ``worst'' arm among $\{1,\dots,k\} \cup \{\ell\}$. Section~\ref{Sec:AlgRestrictions} formally proves the following lower bound using by applying the above eduction to Proposition~\ref{GaussianProp}:
	\begin{proposition}[Lower Bound for Gaussian $\TopK$]\label{TopKGaussianProp} Suppose $\nu = (\nu_1,\dots,\nu_n)$ has measures $\nu_a = \mathcal{N}(\mu_a,\theta)$, with  $\mu_1 \ge \mu_2 \ge \dots \mu_k > \mu_{k+1} \ge \dots \mu_n$. Then, if $\Alg$ is $\delta \le 1/16$ correct over $\bS_n(\nu)$,
	\begin{eqnarray}\label{granular_top_k_eq}
	\Exp_{\pi \sim \bS_n}\Exp_{\pi(\nu^{(1)}),\Alg}[N_{\pi(j)}(T)] &\gtrsim& \begin{cases} \max_{m > k} (\mu_j - \mu_m)^{-2}\log((m - k + 1)/\delta) & j \le k \\
	\max_{m \le k} (\mu_j - \mu_m)^{-2}\log((k+2 - m)/\delta) & j > k
	\end{cases}
	\end{eqnarray}
	\end{proposition}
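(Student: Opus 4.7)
The plan is a straightforward reduction to Proposition~\ref{GaussianProp}, formalizing the sketch given in the paragraph preceding the statement. For the case $j \le k$, consider the $(n-k+1)$-arm sub-instance $\nu' := (\nu_j, \nu_{k+1}, \ldots, \nu_n)$, in which $\nu_j$ is uniquely best with gaps $\mu_j - \mu_m$ for $m \in \{k+1,\ldots,n\}$. Given a $\delta$-correct Top-$k$ algorithm $\Alg$ on $\bS_n(\nu)$, I would construct a best-arm algorithm $\widetilde\Alg$ on $\bS_{n-k+1}(\nu')$ by embedding: upon receiving a permuted sub-instance, $\widetilde\Alg$ draws a uniformly random subset $I \subset [n]$ of size $n-k+1$, a uniformly random bijection $\phi : [n-k+1] \to I$, and a uniformly random assignment of the known filler distributions $\{\nu_\ell : \ell \in [k]\setminus\{j\}\}$ to the positions $[n] \setminus I$. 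It then simulates $\Alg$ on this $n$-arm instance, forwarding queries to $I$-positions to the sub-instance while answering queries to non-$I$-positions from internal samples, and returns $\phi^{-1}(\widehat{S} \cap I)$ as its best-arm guess.

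A direct counting argument shows that, for any fixed input permutation, the randomness of $\widetilde\Alg$ makes the simulated full instance exactly uniform on $\bS_n(\nu)$: each of the $n!$ permutations is realized with probability $\bigl[\binom{n}{n-k+1}(n-k+1)!(k-1)!\bigr]^{-1} = 1/n!$. Consequently $\Alg$ outputs the true top-$k$ with probability at least $1-\delta$, and on that event the $k-1$ filler positions fill out every top-$k$ slot except that of $\nu_j$, so $\widehat{S} \cap I$ is precisely the singleton position holding $\nu_j$ and $\widetilde\Alg$ is $\delta$-correct. The same coupling shows that the expected number of pulls of the best arm by $\widetilde\Alg$ equals $\E_{\pi \sim \bS_n}\E_{\pi(\nu),\Alg}[N_{\pi(j)}(T)]$. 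Invoking Proposition~\ref{GaussianProp} on $\widetilde\Alg$ and substituting $m := k + m' - 1$ in the resulting $\max_{2 \le m' \le n-k+1}$ yields the first branch of~\eqref{granular_top_k_eq}.

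For $j > k$, I would reduce to the previous case via the sign-flipping symmetry of Gaussians: writing $-\nu := (-\nu_1, \ldots, -\nu_n)$, any $\delta$-correct Top-$k$ algorithm on $\bS_n(\nu)$ yields, by negating all samples and complementing the output, a $\delta$-correct Top-$(n-k)$ algorithm on $\bS_n(-\nu)$. In $-\nu$ the arm $j$ lies in the top $n-k$ (it has sorted rank $n-j+1 \le n-k$), so the Case $j \le k$ reduction applies verbatim with $k \mapsto n-k$; the relevant sub-instance now has size $k-m+2$ for $m \in [k]$ with gaps $\mu_m - \mu_j$, and re-indexing produces the second branch. The main obstacle is simply the bookkeeping of this symmetry/coupling argument: verifying that the random embedding composed with the input permutation yields the uniform law on $\bS_n(\nu)$, and that per-position pull counts are preserved exactly under the reduction. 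Once this is in place, the proof reduces to re-indexing the max in Proposition~\ref{GaussianProp}, requiring no new information-theoretic or tilting ingredients beyond those already developed.
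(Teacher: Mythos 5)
Your reduction is correct and is essentially the paper's own argument (Lemma~\ref{restriction_bound_lemma} in Section~\ref{Sec:AlgRestrictions}): restrict to the sub-instance $\{j\}\cup([n]\setminus[k])$ for $j\le k$ (resp.\ $\{j\}\cup[k]$ for $j>k$), simulate the removed arms internally from their known distributions, and invoke the permuted best-arm bound of Proposition~\ref{GaussianProp} with the same re-indexing of the max. The only cosmetic differences are that the paper symmetrizes $\Alg$ first via Lemma~\ref{SymmetryLemma} and places the filler arms deterministically, where you achieve the same uniformity with a randomized embedding, and for $j>k$ the paper applies its lower bound directly to identifying the distinguished (worst) arm while you route through the Gaussian sign-flip symmetry.
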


	By taking $m = k+1$ and $m = k$ in the first and second lines of~\ref{granular_top_k_eq}, our result recovers the gap-dependent bounds of \cite{kalyanakrishnan2012pac} and \cite{luedtke2016asymptotically} . Moreover, when the gaps are on the same order $\Delta$, we recover the worst-case lower bound from~\cite{kalyanakrishnan2012pac} of $k\Delta^{-2}\log(n-k) + (n-k)\Delta^{-2}\log k$. 
\subsubsection{Comparison with Chen et al.~\cite{1702.03605}}
	After a manuscript of the present work was posted on one of its authors' websites, ~\cite{1702.03605} presented an alternative proof of Proposition~\ref{TopKGaussianProp}, also by a reduction to $\MAB$. Instead of tiltings, their argument handles different gaps by a series of careful reductions to a symmetric $\MAB$ problem, to which they apply Proposition~\ref{OracleBound1}. As in this paper, their proof hinges on a ``simulation'' argument which compares the behavior of an algorithm on an instance $\nu$ to a run of an algorithm where the reward distributions change mid-game. This seems to suggest that our simulator framework is in some sense a natural tool for these sorts of lower bounds. 

	While our works prove many of the same results, our papers differ considerably in emphasis. The goal for in this work is to explain \emph{why} algorithms must incur the sample complexities that they do, rather than just sharpen logarithmic factors. In this vein, we establish Theorem~\ref{Thm:BestArmSubset}, which has no analogue in~\cite{1702.03605}. Moreover, we believe that the proof of Proposition~\ref{GaussianProp} based on tiltings is a step towards novel lower bounds for more sophisticated problems by translating intuitions about large-deviations into precise, information-theoretic statements. Further still, our Theorem~\ref{Thm:PermLB} (and Proposition~\ref{Appendix:BestArmProp} in the appendix) imply lower bounds on the tail-deviations of the number of times suboptimal arms need to be sampled in constrained problems (see footnote~\ref{tail_footnote}). 
\section{LUCB++} \label{sec:lucb_alg}

The previous section showed that for $\TopK$ in the worst case, the bottom $(n-k)$ arms must be pulled in proportion to $\log(k)$ times while the top $k$ arms must be pulled in proportion to $\log(n-k)$ times. Inspired by these new insights, the original LUCB algorithm of \cite{kalyanakrishnan2012pac}, and the analysis of \cite{jamieson2014lil} for the $\MAB$ setting, in this section we propose a novel algorithm for $\TopK$: LUCB++. The LUCB++ algorithm proceeds exactly like that of \cite{kalyanakrishnan2012pac}, the only difference being the definition of the confidence bounds used in the algorithm. 

At each round $t = 1,2,\dots$, let $\widehat{\mu}_{a,N_a(t)}$ denote the empirical mean of all the samples from arm $a$ collected so far. Let $U(t,\delta) \propto \sqrt{\tfrac{1}{t}\log(\log(t)/\delta)}$ be an anytime confidence bound based on the law of the iterated logarithm (see \citet[Theorem 8]{kaufmann2014complexity} for explicit constants). Finally, we let $\TOP_t$ denote the set of the $k$ arms with the largest empirical means. The algorithm is outlined in Figure~\ref{lucb++}, and satisfies the following guarantee:

\begin{algorithm}[t]
	 	\textbf{Input} Set size $k$, confidence $\delta$, confidence interval $U(\cdot,\delta)$\\
	 	\textbf{Play} Each arm $a \in [n]$ once \\
 		\textbf{For} rounds $t = n+1,n+2,\dots$ \\
 		\Indp \textbf{Let} $\mathrm{TOP}_t = \arg\max_{ S \subset [n]: |S|=k } \sum_{i \in S} \widehat{\mu}_{a,N_a(t)}$, \\
 		\textbf{If} the following holds, \textbf{Then} return $\mathrm{TOP}_t$:
 		\begin{align}\label{eqn:stopping_time}
		\min_{a \in \mathrm{TOP}_t} \widehat{\mu}_{a,N_a(t)} - U(N_a(t),\tfrac{\delta}{2(n-k)}) > \max_{a \in [n]-\mathrm{TOP}_t} \widehat{\mu}_{a,N_a(t)} + U(N_a(t),\tfrac{\delta}{2k})
		\end{align}
		\\
 		\textbf{Else} pull $h_t$ and $l_t$, given by:
 		\begin{align*}
		h_t := \min_{a \in \mathrm{TOP}_t} \widehat{\mu}_{a,N_a(t)} - U(N_a(t),\tfrac{\delta}{2(n-k)}) \qquad l_t := \max_{a \in [n]-\mathrm{TOP}_t} \widehat{\mu}_{a,N_a(t)} + U(N_a(t),\tfrac{\delta}{2k}).
		\end{align*}
		\caption{LUCB++\label{lucb++}}
	\end{algorithm}

\begin{theorem}\label{Thm:LUCBplus}
Suppose that $X_a \sim \nu_a$ is $1-$subgaussian. Then, for any $\delta \in (0,1)$, the LUCB++ algorithm is $\delta$-correct, and the stopping time $T$ satisfies 
\begin{align*}
T \leq \sum_{i=1}^k c \Delta_i^{-2} \log( \tfrac{ (n-k) \log(\Delta_i^{-2})}{\delta} ) + \sum_{j=k+1}^n c \Delta_j^{-2} \log( \tfrac{ k \log(\Delta_j^{-2})}{\delta} )
\end{align*}
with probability at least $1-\delta$, where $c$ is a universal constant.
\end{theorem}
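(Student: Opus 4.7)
The plan is to base both claims on a single ``asymmetric good event'' whose widths mirror the two confidence levels used in LUCB++'s stopping rule. Write $\TOP := \{1,\dots,k\}$ for the true top-$k$ set, and define
\[
\mathcal{E} \,:=\, \bigcap_{t\ge 1}\bigcap_{a\in[n]} \bigl\{\, |\hat{\mu}_{a,N_a(t)}-\mu_a|\,\le\, U\bigl(N_a(t),\delta_a^{\star}\bigr)\,\bigr\},
\]
where $\delta_a^{\star}=\delta/(ck)$ for $a\in\TOP$ and $\delta_a^{\star}=\delta/(c(n-k))$ for $a\notin\TOP$, with $c$ an absolute constant absorbing the two-sided factor. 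The anytime LIL bound of \citet[Theorem 8]{kaufmann2014complexity}, applied arm-by-arm and union-bounded separately over the $k$ top arms and the $n-k$ bottom arms, yields $\Pr[\mathcal{E}^c]\le\delta$. The critical structural observation is that the algorithm's width $\delta/(2(n-k))$ on arms currently in $\TOP_t$ is (up to the constant $c$) exactly the good-event width for arms truly outside $\TOP$, and symmetrically for $\delta/(2k)$ versus truly-top arms.

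On $\mathcal{E}$, correctness is essentially immediate. If the stopping condition~\eqref{eqn:stopping_time} fires at time $T$ with $\TOP_T\ne\TOP$, choose $a^*\in\TOP_T\setminus\TOP$ and $i^*\in\TOP\setminus\TOP_T$; applying~\eqref{eqn:stopping_time} to $(a^*,i^*)$ gives
\[
\hat{\mu}_{a^*} - U\bigl(N_{a^*},\tfrac{\delta}{2(n-k)}\bigr) \,>\, \hat{\mu}_{i^*} + U\bigl(N_{i^*},\tfrac{\delta}{2k}\bigr).
\]
Because $a^*\notin\TOP$ the good event upper-bounds $\hat{\mu}_{a^*}$ at essentially the same width the algorithm uses on $a^*$, and symmetrically for $i^*\in\TOP$. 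Substituting the good-event bounds and cancelling widths yields $\mu_{a^*}>\mu_{i^*}$, contradicting the strict gap $\mu_{i^*}\ge\mu_k>\mu_{k+1}\ge\mu_{a^*}$.

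For the sample-complexity bound, fix an arm $a$ and a time $t$ at which it is pulled; then $a\in\{h_t,l_t\}$ and the stopping condition fails, giving $\hat{\mu}_{h_t}-U(N_{h_t},\tfrac{\delta}{2(n-k)})\le\hat{\mu}_{l_t}+U(N_{l_t},\tfrac{\delta}{2k})$. Rearranging and invoking $\mathcal{E}$ upper-bounds $\mu_{h_t}-\mu_{l_t}$ by a sum of two confidence widths at $(N_{h_t},\delta/(2M_{h_t}))$ and $(N_{l_t},\delta/(2M_{l_t}))$, where $M_b=n-k$ if $b\in\TOP$ and $M_b=k$ otherwise. A short case analysis on the true labels of $h_t$ and $l_t$ establishes: (i) when the labels are ``correct'' ($h_t\in\TOP$, $l_t\notin\TOP$) one has $\mu_{h_t}-\mu_{l_t}\ge\max(\Delta_{h_t},\Delta_{l_t})$, so the inequality forces at least one of the two widths to dominate the relevant $\Delta$; and (ii) any label mismatch between $\TOP_t$ and $\TOP$ can only persist while the mismatched arm has not yet been sampled enough to resolve its side, which by the same inequality again limits $N_a(t)$. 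In every case, $a$ being pulled at time $t$ implies $\Delta_a\lesssim U(N_a(t),\delta/(2M_a))$ with $M_a=n-k$ if $a\in\TOP$ and $M_a=k$ otherwise. Inverting $U(N,\delta')\asymp\sqrt{N^{-1}\log(\log N/\delta')}$ at $\Delta_a$ yields $N_a(T)\lesssim\Delta_a^{-2}\log(M_a\log(\Delta_a^{-2})/\delta)$, and summing over arms gives the claimed bound on $T=\sum_a N_a(T)$.

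The main technical obstacle is the case analysis in the sample-complexity step: the algorithm's widths are governed by the random set $\TOP_t$ while the good event is calibrated to the deterministic set $\TOP$, and one must carefully track all four combinations of algorithm-label $\times$ truth-label to ensure that the correct logarithmic factor ($\log(n-k)$ for $a\in\TOP$, $\log(k)$ for $a\notin\TOP$) accompanies each $\Delta_a^{-2}$. The exact matching of widths noted in the first paragraph is precisely the lever that prevents an undesirable $\log n$ from appearing.
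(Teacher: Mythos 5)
There is a genuine gap, and it sits exactly at the point your proposal identifies as its ``lever.'' Your good event controls the \emph{upward} deviations of each bottom arm $j\notin[k]$ only at the width $U(\cdot,\delta/(c(n-k)))$, because a per-arm union bound over the $n-k$ bottom arms forces that calibration if $\Pr[\mathcal{E}^c]\le\delta$ is to hold. But in the sample-complexity step, when a bottom arm $j$ is pulled as $l_t$, the failed stopping condition \eqref{eqn:stopping_time} combined with your event only yields
\[
\Delta_j \;\lesssim\; U\bigl(N_j(t),\tfrac{\delta}{2k}\bigr) \;+\; U\bigl(N_j(t),\tfrac{\delta}{c(n-k)}\bigr)\,+\,(\text{widths of the comparison arm}),
\]
and when $n-k\gg k$ the binding term is $U(N_j(t),\delta/(c(n-k)))$, not $U(N_j(t),\delta/(2k))$. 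Inverting it gives $N_j(T)\lesssim \Delta_j^{-2}\log((n-k)\log(\Delta_j^{-2})/\delta)$ on the bottom arms, i.e.\ the ordinary LUCB bound with the extraneous $\log(n-k)$, not the claimed $\log(k\log(\Delta_j^{-2})/\delta)$. The ``exact matching of widths'' you invoke does hold for correctness (the algorithm's $\delta/(2(n-k))$ LCB on an arm in $\TOP_t$ is matched by the good-event control of truly-bottom arms, and symmetrically), but it fails precisely in the direction the sample-complexity argument needs: there the algorithm's $\delta/(2k)$ UCB on a pulled bottom arm must be compared against that same bottom arm's upward deviation, and no deterministic event with total mass $\delta$ can control all $n-k$ of those deviations at the per-arm level $\delta/(2k)$.

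The paper's proof contains an additional idea specifically to get around this. It does \emph{not} put the $\delta/(2k)$-level upward deviations of bottom arms into the good event; instead, for each $j\in[k]^c$ it defines the random variable $\rho_j$ in \eqref{eqn:rho_j} (the effective confidence level at which arm $j$'s upward deviations happen to be controlled on that run), sets $\tau_j=\min\{t: U(t,\rho_j\delta/(2k))<\Delta_j/2\}$, and runs the stage-wise argument with these random thresholds. Since $\Pr(\rho_j\le\rho)\le \rho\delta/(2k)\le\rho$, each $\Delta_j^{-2}\log(1/\rho_j)$ is sub-exponential, and a concentration bound for sums of independent sub-exponential variables (as in Lemma 4 of \cite{jamieson2014lil}) shows that, with probability $1-\delta$, the aggregate overshoot $\sum_{j>k}\Delta_j^{-2}\log(1/\rho_j)$ costs only $O(\sum_{j>k}\Delta_j^{-2}\log(1/\delta))$, which folds into the stated bound. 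This randomized per-arm calibration plus aggregate concentration is the mechanism that delivers the $\log k$ factor on the bottom arms (and is why the theorem is a high-probability statement rather than a deterministic consequence of a single good event); your proposal has no substitute for it, so as written the case analysis in your third paragraph does not establish the claimed inequality $\Delta_a\lesssim U(N_a(t),\delta/(2M_a))$ for $a\notin[k]$. Your correctness argument, by contrast, is essentially the paper's Step 0 and is fine.
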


By Propositions~\ref{TopKGaussianProp} we recognize that when the gaps are all the same the sample complexity of the LUCB++ algorithm is unimprovable up to $\log\log(\Delta_i)$ factors.
This is the first {\em practical} algorithm that removes extraneous log factors on the sub-optimal $(n-k)$ arms \cite{kalyanakrishnan2012pac,chen2016pure}.
However, it is known that not all instances must incur a multiplicative $\log(n-k)$ on the top $k$ arms \cite{chen2016pure,1702.03605}.
Indeed, when $k=1$ this problem is just the best-arm identification problem and the sample complexity of the above theorem, ignoring doubly logarithimc factors, scales like $\log(n/\delta) \Delta_1^{-2} + \log(1/\delta) \sum_{i=2}^n \Delta_i^{-2}$. But there exist algorithms for this particular best-arm setting whose sample complexity is just $\log(1/\delta) \sum_{i=1}^n \Delta_i^{-2}$ exposing a case where Theorem~\ref{Thm:LUCBplus} is loose \cite{karnin2013almost,jamieson2014lil,chen2015optimal,chen2016pure}.
In general, this additional $\log(n-k)$ factor is unnecessary on the top $k$ arms when $\sum_{i=1}^k \Delta_i^{-2} \gg \sum_{i=k+1}^n \Delta_i^{-2}$, but for large $n$, this is a case unlikely to be encountered in practice.

While this manuscript was in preparation, \cite{1702.03605} proposed a $\TopK$ algorithm which satisfies stronger theoretical guarantees, essentially matching the lower bound in Theorem~\ref{TopKGaussianProp}. However, their algorithm (and the matroid-bandit algorithm of~\cite{chen2016pure}) relies on exponential-gap elimination, making it unsuitable for practical use\footnote{While exponential-gap elimination algorithms might have the correct dependence on problem parameters, their constant-factors in the sample complexity are incredibly high, because they rely on the median-elimination as a subroutine (see~\cite{jamieson2014lil} for discussion)}. Furthermore, our improved LUCB++ confidence intervals can be reformulated for different KL-divergences, leading to tighter bounds for non-Gaussian rewards such as Bernoullis. Moreover, we can ``plug in'' our LUCB++ confidence intervals into other LUCB-style algorithms, sharpening their $\log$ factors. For example, one could ammend the confidence intervals in the CLUCB algorithm of~\cite{chen2014combinatorial} for combinatorial bandits, which would yield slight improvements for arbitrary decision classes, and near-optimal bounds for matroid classes considered in~\citep{chen2016pure}.

\begin{table}
\centering
\begin{tabular}{|c|c|c|c|c|}
\hline
$n$ &LUCB++&LUCB  &Oracle &Uniform\\ \hline
$10^1$ &1.0&0.99&1.60&1.67\\ \hline
$10^2$ &1.0 &1.17&2.00&3.4\\ \hline
$10^3$ &1.0&1.50&2.51&5.32\\ \hline
$10^4$ &1.0&1.89&2.90&7.12\\ \hline
$10^5$ &1.0&2.09&3.32&8.49\\ \hline
\end{tabular}
\caption{The number of samples taken by the algorithms before reaching their stopping condition, relative to LUCB++.}
\label{tab:sim1}
\end{table}

To demonstrate the effectiveness of our new algorithm we compare to a number of natural baselines: LUCB of \cite{kalyanakrishnan2012pac}, a $\TopK$ version of the oracle strategy of \cite{garivier2016optimal}, and uniform sampling; all three use the stopping condition of \cite{kalyanakrishnan2012pac} which is when the empirical top $k$ confidence bounds\footnote{To avoid any effects due to the particular form of the any-time confidence bound used, we use the same finite-time law-of-the-iterated logarithm confidence bound used in \cite[Theorem 8]{kaufmann2014complexity} for all of the algorithms.} do not overlap with the bottom $n-k$, employing a union bound over all $n$ arms.
Consider a $\TopK$ instance for $k=5$ constructed with unit-variance Gaussian arms with $\mu_i = 0.75$ for $i \leq k$ and $\mu_i = 0.25$ otherwise. 
Table~\ref{tab:sim1} presents the average number of samples taken by the algorithms before reaching the stopping criterion, relative to the the number of samples taken by LUCB++.
For these means, the oracle strategy pulls each arm $i$ a number of times proportional to $w_i$ where $w_i = \frac{ \sqrt{n/k-1} -1}{n-2k}$ for $i \leq k$ and $w_i = \frac{1-k w_k}{n-k}$ for $i > k$ ($w_i = 1/n$ for all $i$ when $n=2k$).
Note that the uniform strategy is indentical to the oracle strategy, but with $w_i = 1/n$ for all $i$.

\section{Lower Bounds via The Simulator~\label{sec:simulator}}




As alluded to in the introduction, our lower bounds treat adaptive sampling decisions made by the algorithm as hypothesis tests between different instances $\nu$. Using a type of gadget we call a \emph{Simulator}, we reduce lower bounds on \emph{adaptive} sampling strategies to a family of lower bounds on different, possibly data-dependent and time-specific \emph{non-adaptive} hypothesis testing problems.

The Simulator acts as an adversarial channel intermediating between the algorithm $\Alg$, and i.i.d samples from the true instance $\nu$. Given an instance $\nu$, let $\Trs = \{X_{[a,s]}\}_{a \in [n], s \in \N} \in \R^{n\times \mathbb{Z}_{\ge 0}}$ denote a random transcript of an infinite sequence of samples drawn i.i.d from $\nu$, where $\Trs_{a,s} = X_{[a,s]} \overset{iid}{\sim} \nu_a$. We can think of any sequential sampling algorithm $\Alg$ as operating by interacting with the transcript, where the sample $X_{a_t,t}$ is obtained by reading the sample $X_{[a_t,N_{a_t}(t)]}$ off from $\Trs$ (recall that $N_{a}(t)$ is the number of times arm $a$ has been pulled at the end of round $t$). With this notation, we define a simulator as follows:
\begin{definition}[Simulator]
A simulator $\Sim$ is a map which sends $\Trs$ to a modified transcript $\Trshat = \{\Xhat_{[a,s]}\}_{a \in [n], s \in \N}$, which $\Alg$ will interact with instead of $\Trs$ (Figure 1). We allow this mapping to depend on the ground truth $\nu$ and some internal randomness $\xi_{\Sim}$. 
\end{definition}


\begin{figure}\label{SimulatorDiagram}
  \centering
    \includegraphics[width=0.65\textwidth]{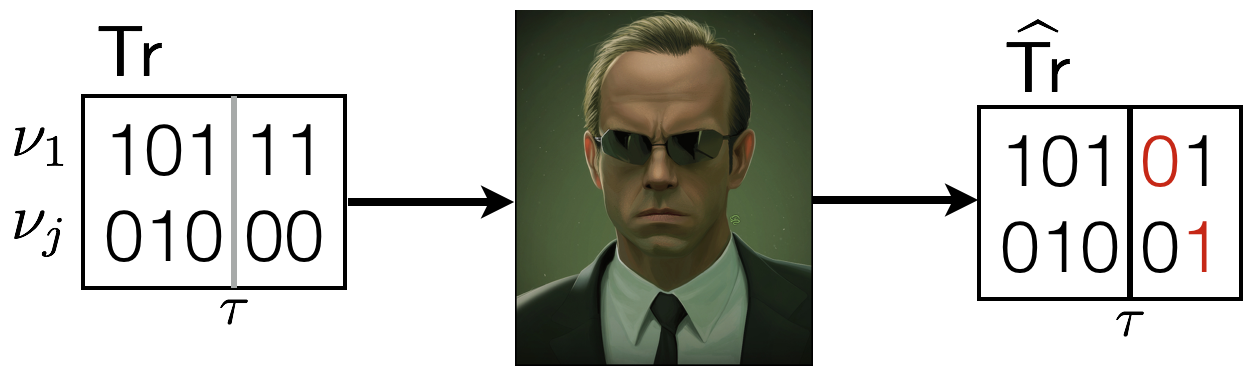}
    \caption{The Simulator acts as a man-in-the-middle between the original transcript and the transcript the algorithm receives. It leaves the transcript unchanged before some time $\tau$, but modifies it in arbitrary ways after this time. Red denotes the samples that were changed that reduced the distance between the instances. Note that all events defined on just the first $\tau$ samples are truthful.}
\end{figure}


Equivalently, $\Sim(\nu)$ is a measure on a \emph{random process} $\Trshat = \{\widehat{X}_{[a,s]}\}_{a \in [n], s \ge 1}$, which, unlike $\nu$, does not require the samples $\widehat{X}_{[a,1]},\widehat{X}_{[a,2]},\dots$ to be i.i.d (or even independent). Hence, we use the shorthand $\Sim(\nu)$ to refer the measure corresponding to $\Pr_{\Sim(\nu),\Alg}$, and let $\Pr_{\Sim(\nu),\Alg}$ denote the probability taken with respect to $\Sim$'s modified transcript $\Trshat$, and the internal randomness in $\Alg$ and $\Sim$. With this notation, the quantities $\TV(\Sim(\nu),\Sim(\nu'))$ and $\KL(\Sim(\nu),\Sim(\nu'))$ are well defined as the $\TV$ and $\KL$ divergences of the random process $\Trshat$ under the measures $\Sim(\nu)$ and $\Sim(\nu')$.


Note that, in general, $\TV(\nu,\nu') = \KL(\nu,\nu') = \infty$ if $\nu_a \ne \nu_a'$ for some $a$, since $\nu$ (resp $\nu'$) govern an infinite i.i.d sequence $\{X_{[s,a]}\}\sim \nu_a$ (resp $\sim \nu_a'$). However, in this paper we will always design our simulator so that the quantity $\KL(\Sim(\nu),\Sim(\nu'))$ is finite, and in fact quite small. The hope is that if the modified transcript $\Trshat$ conveys too little information to distinguish between $\Sim(\nu^{(1)})$ and $\Sim(\nu^{(2)})$, then $\Alg$ will have to behave similarly on both simulated instances. Hence, we will show that if $\Alg$ behaves differently on two instances $\nu^{(1)}$ and $\nu^{(2)}$, yet $\Sim$ limits information $\KL$ between them, then $\Alg$'s behavior must differ quite a bit under $\nu^{(i)}$ versus $\Sim(\nu^{(i)})$, for either $i = 1$ or $i=2$. Formally, we will show that $\Alg$ will have to ``break'' the simulator, in the following sense:
	\begin{definition}[Breaking] Given measure $\nu$, algorithm $\Alg$, and simulator $\Sim$, we say that $W \in \mathcal{F}_T$ is a truthful event under $\Sim(\nu)$  if, for all events $E \in \mathcal{F}_T$, 
	\begin{eqnarray}
	\Pr_{\Sim(\nu),\Alg}[E \wedge W] = \Pr_{\nu,\Alg}[E \wedge W]
	\end{eqnarray}
	On the other hand, we will say that $\Alg$ breaks on $W^c$ under $\Sim(\nu)$. Recall that $\mathcal{F}_t$ is the $\sigma$-algebra generated by $\xi_{\Alg}$, and the actions/samples collected by $\Alg$ up to time $t$.
	\end{definition}
	The key insight is that, whenever $\Sim(\nu)$ doesn't break (i.e. on a truthful event $W$), a run of $\Alg$ on $\nu$ can be perfectly simulated by running $\Alg$ on $\Sim(\nu)$. But if $\Sim(\nu)$ fudges $\Trs$ in a way that drastically limits information about $\nu$, this means that $\Alg$ can be simulated using little information about $\nu$, which will contradict information theoretic lower bounds. This suggests the following recipe for proving lower bounds:

	\textbf{1)} State a claim you wish to falsify over a class of instances $\nu \in \mathcal{S}$ (e.g., the best arm is not pulled more than $\tau$ times, with some probability  ). \textbf{2)} Phrase your claims as candidate truthful events on each instance (e.g. $W_{\nu} := \{N_{a^*(\nu)}(T) \le \tau\}$ where $a^*(\nu)$ is the best arm of $\nu$)
	\textbf{3)} Construct a simulator $\Sim$ such that $W_{\nu}$ is truthful on $\Sim(\nu)$, but $\KL_{\Alg}(\Sim(\nu),\Sim(\widetilde{\nu}))$ (or $\TV$) is small for alternative pairs $\nu,\widetilde{\nu}$. For example, if the truthful event is $\{N_{a^*(\nu)}(T) \le \tau\}$, then simulator should only modify samples $X_{[a^*,\tau+1]},X_{[a^*,\tau+2]},\dots$. \textbf{4)} Apply an information-theoretic lower bound (e.g.,  Proposition~\ref{Prop:SimLeCam} to come) to show that the simulator breaks (e.g. $\Pr_{\nu,\Alg}[W_\nu^c]$ is large for at least one $\nu \in \calS$, or for a $\nu$ drawn uniformly from $\calS$).



\section{Applying the Simulator to Permutations\label{Sec:Permutations}}
In what follows, we show how to use the simulator to prove Theorem~\ref{Thm:PermLB}. At a high level, our lower bound follows from considering \emph{pairs} of instances where the best arm is swapped-out for a sub-optimal arm, and ultimately averaging over those pairs. On each such pair, we apply a version of Le Cam's method to the simulator setup (proof in Section~\ref{sec:simlecamproof}):
	\begin{proposition}[Simulator Le Cam]\label{Prop:SimLeCam}
	Let $\nu^{(1)}$ and $\nu^{(2)}$ be two measures, $\Sim$ be a simulator, and let $W_i$ be two truthful events under $\Sim(\nu^{(i)})$ for $i = 1,2$. Then, for any algorithm $\Alg$
	\begin{eqnarray}\label{Eq:LeCam}
	\sum_{i=1}^2 \Pr_{\nu^{(i)},\Alg}(W_i^c) \ge \sup_{E \in \calF_T} |\Pr_{\nu^{(1)},\Alg}(E) - \Pr_{\nu^{(2)},\Alg}(E)|  - Q\left(\KL_{\Alg}\left(\Sim(\nu^{(1)}),\Sim(\nu^{(2)})\right)\right)~,
	\end{eqnarray}
	where $Q(\beta) = \min\left\{1 - \frac{1}{2}e^{-\beta},  \sqrt{\beta/2}\right\}$. The bound also holds with $Q\left(\KL_{\Alg}\left(\Sim(\nu^{(1)}),\Sim(\nu^{(2)})\right)\right)$ replaced by $\TV_{\Alg}\left(\Sim(\nu^{(1)}),\Sim(\nu^{(2)})\right)$.
	\end{proposition}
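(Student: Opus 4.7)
The plan is a two-step decomposition: first show that truthfulness pins the behavior of $\Alg$ under $\nu^{(i)}$ to its behavior under $\Sim(\nu^{(i)})$, up to an additive error of $\Pr_{\nu^{(i)},\Alg}(W_i^c)$; then apply a standard Le Cam / Bretagnolle-Huber bound between $\Sim(\nu^{(1)})$ and $\Sim(\nu^{(2)})$.

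First I would observe that setting $E = \Omega$ in the truthfulness definition gives $\Pr_{\Sim(\nu^{(i)}),\Alg}(W_i) = \Pr_{\nu^{(i)},\Alg}(W_i)$, so the two measures agree on the mass of $W_i^c$. Then for any $E \in \calF_T$, splitting over $W_i$ and $W_i^c$ and invoking truthfulness on the $W_i$-piece gives
\begin{equation*}
\bigl|\Pr_{\nu^{(i)},\Alg}(E) - \Pr_{\Sim(\nu^{(i)}),\Alg}(E)\bigr| \;=\; \bigl|\Pr_{\nu^{(i)},\Alg}(E \wedge W_i^c) - \Pr_{\Sim(\nu^{(i)}),\Alg}(E \wedge W_i^c)\bigr| \;\le\; \Pr_{\nu^{(i)},\Alg}(W_i^c).
\end{equation*}

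Next I would chain three triangle-inequality steps through the simulated measures: for any $E \in \calF_T$,
\begin{equation*}
\bigl|\Pr_{\nu^{(1)},\Alg}(E) - \Pr_{\nu^{(2)},\Alg}(E)\bigr| \;\le\; \sum_{i=1}^2 \Pr_{\nu^{(i)},\Alg}(W_i^c) \;+\; \TV_{\Alg}\bigl(\Sim(\nu^{(1)}),\Sim(\nu^{(2)})\bigr),
\end{equation*}
where the middle term comes from the variational characterization of total variation applied to the two simulated processes restricted to $\calF_T$. Taking the supremum over $E$ on the left and rearranging yields the claimed bound with $\TV_{\Alg}(\Sim(\nu^{(1)}),\Sim(\nu^{(2)}))$ in place of $Q(\KL_{\Alg}(\cdot,\cdot))$, which is already the $\TV$-version stated in the proposition.

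To pass to the $\KL$-version, I would upper bound the $\TV$ by $Q(\KL_{\Alg}(\Sim(\nu^{(1)}),\Sim(\nu^{(2)})))$ using the two classical inequalities: Pinsker's inequality gives $\TV \le \sqrt{\KL/2}$, and the Bretagnolle-Huber inequality gives $\TV \le 1 - \tfrac12 e^{-\KL}$; taking the minimum produces exactly the function $Q$ in the statement. The only subtlety is that $\KL$ and $\TV$ here are taken with respect to the $\sigma$-algebra $\calF_T$ generated by the algorithm's interaction with the (simulated) transcript, which is why the subscript $\Alg$ appears — the data-processing inequality is what guarantees these are valid upper bounds even though $\Sim(\nu^{(i)})$ lives on the much larger transcript space. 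I do not anticipate a genuinely hard step here: the only thing to be careful about is keeping the $W_i^c$ mass measured under $\nu^{(i)}$ rather than under $\Sim(\nu^{(i)})$, which is handled by the $E=\Omega$ observation above.
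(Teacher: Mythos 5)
Your proposal is correct and follows essentially the same route as the paper: a triangle inequality through the two simulated measures, the truthfulness-based bound $\bigl|\Pr_{\nu^{(i)},\Alg}(E) - \Pr_{\Sim(\nu^{(i)}),\Alg}(E)\bigr| \le \Pr_{\nu^{(i)},\Alg}(W_i^c)$ (including the observation that truthfulness with $E$ taken to be the whole space transfers the mass of $W_i^c$ back to $\nu^{(i)}$), and then Pinsker plus Bretagnolle--Huber with data processing to replace $\TV$ by $Q(\KL)$. No gaps.
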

	Note that Equation~\ref{Eq:LeCam} decouples the behavior of the algorithm under $\nu$ from the information limited by the simulator. This proposition makes formal the intuition from Section~\ref{sec:simulator} that the algorithm which behaves differently on two distinct instances must ``break'' any simulator that severely limits the information between them.


\subsection{Lower Bounds on 1-Arm Swaps }
	The key step in proving Theorem~\ref{Thm:PermLB} is to establish a simple lower bound that holds for pairs of instances obtained by ``swapping'' the best arm. 
	\begin{proposition}\label{cor:2arm}
	Let $\nu$ be an instance with unique best arm $a^*$. For $b \in [n]-\{a^*\}$, let  $\nu^{(b,a^*)}$  be the instance obtained by swapping $a^*$ and $b$, namely $\nu^{(b,a^*)}_{a^*} = \nu_b$, $\nu^{(b,a^*)}_{b} = \nu_{a^*}$, and $\nu^{(b,a^*)}_{a} = \nu_a$ for $a \in [n] - \{a^*,b\}$.  Then, if $\Alg$ is $\delta$-correct, one has that for any $\eta \in (0,1/4)$
	\begin{eqnarray}
	 \frac{1}{2}\left\{\Pr_{\nu,\Alg}\left[N_{b}(T) > \tau(\eta)\right] +  \Pr_{\nu^{(b,a^*)},\Alg}\left[N_{a^*}(T) > \tau(\eta)\right] \right\} \ge \eta - \delta~,
	\end{eqnarray}
	where $\tau(\eta) = \frac{ 1}{\KL(\nu_{a^*},\nu_{b}) +\KL(\nu_{b},\nu_{a^*})}\log(1/4\eta)$
	\end{proposition}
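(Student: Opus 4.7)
The plan is to apply Proposition~\ref{Prop:SimLeCam} to the pair $\nu^{(1)} = \nu$ and $\nu^{(2)} = \nu^{(b,a^*)}$, with a carefully chosen simulator that is indexed by the ground truth and hides information about one particular sub-optimal arm.

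First, I will construct $\Sim$ as follows. Let $\tau = \tau(\eta)$. When the ground truth is $\nu$, the simulator leaves the samples of every arm $a \ne b$ untouched, and for arm $b$ it leaves the first $\tau$ samples untouched (these are i.i.d.\ from $\nu_b$) but replaces every subsequent sample $X_{[b,s]}$ with $s > \tau$ by an independent draw from $\nu_{a^*}$. Symmetrically, when the ground truth is $\nu^{(b,a^*)}$, it leaves arm $b$ alone and replaces arm $a^*$'s samples after index $\tau$ with i.i.d.\ draws from $\nu_{a^*}$. The natural truthful events are
\begin{equation*}
W_1 = \{N_b(T) \le \tau\}\quad \text{under } \Sim(\nu), \qquad W_2 = \{N_{a^*}(T) \le \tau\}\quad \text{under }\Sim(\nu^{(b,a^*)}),
\end{equation*}
because on $W_i$ the algorithm never reads a modified sample, so its view of the transcript is identical to a genuine run on the true instance.

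Next I will bound $\KL_{\Alg}(\Sim(\nu),\Sim(\nu^{(b,a^*)}))$. By construction the two simulated processes agree on every arm $\notin \{a^*,b\}$; on arm $a^*$ they differ only in their first $\tau$ samples ($\nu_{a^*}$ vs.\ $\nu_b$, matching $\nu_{a^*}$ afterwards), and on arm $b$ they differ only in their first $\tau$ samples ($\nu_b$ vs.\ $\nu_{a^*}$). Applying the standard adaptive chain rule for bandits and truncating the per-arm pull counts at $\tau$, this yields
\begin{equation*}
\KL_{\Alg}\!\left(\Sim(\nu),\Sim(\nu^{(b,a^*)})\right) \le \tau\bigl(\KL(\nu_{a^*},\nu_b) + \KL(\nu_b,\nu_{a^*})\bigr) \;=\; \log(1/4\eta),
\end{equation*}
so $Q(\cdot)$ in Proposition~\ref{Prop:SimLeCam} is at most $1 - \tfrac{1}{2} e^{-\log(1/4\eta)} = 1 - 2\eta$.

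Then I will pick the distinguishing event $E = \{\widehat{S} = a^*\}\in\calF_T$. Since $\Alg$ is $\delta$-correct and $a^*$ is the best arm of $\nu$ while $b$ is the best arm of $\nu^{(b,a^*)}$, we get $\Pr_{\nu,\Alg}(E) \ge 1-\delta$ and $\Pr_{\nu^{(b,a^*)},\Alg}(E) \le \delta$, so $|\Pr_{\nu,\Alg}(E) - \Pr_{\nu^{(b,a^*)},\Alg}(E)| \ge 1-2\delta$. Plugging into Proposition~\ref{Prop:SimLeCam} gives
\begin{equation*}
\Pr_{\nu,\Alg}[N_b(T) > \tau] + \Pr_{\nu^{(b,a^*)},\Alg}[N_{a^*}(T) > \tau] \;\ge\; (1-2\delta) - (1-2\eta) \;=\; 2(\eta-\delta),
\end{equation*}
and dividing by $2$ delivers the claim. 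The main technical points requiring care are (i)~verifying truthfulness, which reduces to the observation that on $W_i$ the algorithm's $\calF_T$-measurable output is a deterministic function of samples the simulator did not touch, and (ii)~the KL calculation, where one must invoke the adaptive chain-rule argument so that each arm's contribution is capped at $\tau$ rather than the unbounded number of pulls the algorithm might take; everything else is a direct application of the simulator framework.
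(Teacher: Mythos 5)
Your proposal is correct and follows essentially the same route as the paper: the same swap simulator (your version, which leaves the already-$\nu_{a^*}$-distributed post-$\tau$ samples untouched rather than resampling them, induces the identical laws $\Sim(\nu)$ and $\Sim(\nu^{(b,a^*)})$), the same truthful events $W_1,W_2$, the same bound $\KL_{\Alg}(\Sim(\nu),\Sim(\nu^{(b,a^*)})) \le \tau\{\KL(\nu_{a^*},\nu_b)+\KL(\nu_b,\nu_{a^*})\}$, and the same application of Proposition~\ref{Prop:SimLeCam} with $E=\{\widehat{S}=a^*\}$. The only cosmetic difference is that the paper gets the KL bound by noting the two simulated transcripts differ only in the first $\tau$ entries of arms $a^*$ and $b$ and invoking data processing, whereas you phrase it via the adaptive chain rule with per-arm counts capped at $\tau$; both give the same bound.
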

	This bound implies that, if an instance $\overline{\nu}$ is drawn uniformly from $\{\nu,\nu^{(b,a^*)}\}$, then any $\delta$-correct algorithm has to pull the suboptimal arm, namely the distribution $\nu_b$, at least $\tau(\eta)$ times on average (over the draw of $\overline{\nu})$, with probability $\eta - \delta$. Proving this proposition requires choosing an appropriate simulator. To this end, fix a $\tau \in \N$, and let $\Sim$ map $\Trs$ to $\Trshat$ such that, 
	\begin{eqnarray}
	\Sim: \Xhat_{[s,a]} \mapsfrom \begin{cases} X_{[s,a]} & a \ne a^*,b \\
	X_{[s,a]} &  a \in \{a^*,b\}, s \le \tau \\
	\overset{iid}{\sim} \nu_{a^*} & a \in \{a^*,b\}, s > \tau 
	\end{cases}
	\end{eqnarray}
	where for $s > \tau$ and $a \in \{a^*,b\}$, the $\Xhat_{[s,a]} \overset{iid}{\sim} \nu_{a^*}$ means that the samples are taken independently of everything else (in particular, independent of $X_{[s,a^*]}$ and $X_{[s,b]}$), using internal randomness $\xi_{\Sim}$. We emphasize $\Sim$ depends crucially on $\nu$, $a^*$, and $b$.

	Note that the only entries of $\Trshat$ whose distribution differs under $\Sim(\nu)$ and $\Sim(\nu^{(b,a^*)})$ are just the first $\tau$ entries from arms $a^*$ and $b$, namely $\{\Xhat_{s,a}\}_{1 \le s \le \tau, a \in \{a^*,b\}}$. Hence, by a data-processing inequality
	\begin{eqnarray}
	\KL_{\Alg}(\Sim(\nu),\Sim(\nu^{(b,a^*)}) \le \tau\{\KL(\nu_{a^*},\nu_{b}) +\KL(\nu_{b},\nu_{a^*})\}\end{eqnarray}
	Using the notation of Proposition~\ref{Prop:SimLeCam}, let $\nu^{(1)} = \nu$, $\nu^{(2)} = \nu^{(b,a^*)}$, let $W_{1} := \{N_{b}(T) \le \tau\}$ and $W_2 := \{N_{a^*}(T) \le \tau\}$ (i.e, under $\nu^{(i)}$ and $W_i$, you sample the suboptimal arm no greater than $\tau$ times). Now, Proposition~\ref{cor:2arm} now follows immediately from Proposition~\ref{Prop:SimLeCam}, elementary manipulations, and the following claim:
	\begin{claim}\label{TruthfulClaim} For $\nu^{(i)}$ and $W_i$ defined above, $\Sim$ is truthful on $W_i$ under $\nu^{(i)}$.
	\end{claim}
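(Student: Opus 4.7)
My plan is to prove the stronger \emph{pathwise} statement: under a suitable coupling of $\Trs \sim \nu^{(i)}$ and $\Trshat \sim \Sim(\nu^{(i)})$, on every sample path where $W_i$ holds, the run of $\Alg$ on $\Trs$ and its run on $\Trshat$ produce identical trajectories. Truthfulness on $W_i$ for every $E \in \calF_T$ then follows by integrating. I will write out $i=1$ in detail; $i=2$ is symmetric after swapping the roles of $a^*$ and $b$.

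For $i=1$ (so $\nu^{(1)} = \nu$), build the coupling column by column. For $a \notin \{a^*,b\}$, and for $(a,s) \in \{a^*,b\} \times \{1,\dots,\tau\}$, the simulator already prescribes $\Xhat_{[s,a]} = X_{[s,a]}$. For $a = a^*$ and $s > \tau$, the simulator draws fresh $\nu_{a^*}$-samples using $\xi_{\Sim}$; but under $\nu$ the entries $X_{[s,a^*]}$ are themselves i.i.d.\ $\nu_{a^*}$ and independent of everything else, so I may re-couple $\xi_{\Sim}$ with that column of $\Trs$ to force $\Xhat_{[s,a^*]} = X_{[s,a^*]}$ for these indices, without altering the marginal law of $\Trshat$. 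This ``null-modification'' observation is the crux of the coupling. The net effect is that $\Trshat$ and $\Trs$ disagree only at positions $(s,b)$ with $s > \tau$.

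Now let $T^\star$ be the first round at which $\Alg$, run with common internal randomness $\xi_{\Alg}$ on either transcript, attempts to pull arm $b$ for the $(\tau+1)$-th time (set $T^\star = \infty$ otherwise). Because the action $a_t$ and the stop/continue decision depend only on $\xi_{\Alg}$ and previously-read transcript entries, and those entries agree up to round $T^\star - 1$, the two runs generate identical sequences of actions, samples, stopping times, and outputs on $\{1,\dots,(T^\star - 1) \wedge T\}$. In particular $W_1 = \{N_b(T) \le \tau\}$ is exactly the event $\{T < T^\star\}$, a function of this common prefix, so it holds simultaneously under both measures, and every $\calF_T$-measurable event $E$ takes the same value on it. Integrating yields $\Pr_{\nu,\Alg}[E \wedge W_1] = \Pr_{\Sim(\nu),\Alg}[E \wedge W_1]$, as required. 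The case $i=2$ proceeds identically: under $\nu^{(b,a^*)}$ the arm-$b$ column is already i.i.d.\ $\nu_{a^*}$, so the simulator's replacement on arm $b$ is now the null modification, while its non-null modification on arm $a^*$ is shielded from $\Alg$ by $W_2 = \{N_{a^*}(T) \le \tau\}$, and the same pathwise argument applies.

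I expect the main obstacle to be stating the null-modification step precisely: it uses that $\xi_{\Sim}$ is external randomness, independent of $\Trs$, so recoupling it to an i.i.d.\ $\nu_{a^*}$-column of $\Trs$ leaves the joint law $(\Trshat,\xi_{\Alg})$ unchanged. Once that is written down carefully, the rest is a standard coupling-until-divergence argument, and there is no measure-theoretic subtlety because the transcript is indexed by the countable set $[n] \times \N$ and every event and stopping time pulls back from the discrete trajectory.
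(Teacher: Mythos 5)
Your proposal is correct and follows essentially the same route as the paper: the paper's proof likewise observes that the replaced samples on arm $a^*$ (under $\nu$) resp.\ arm $b$ (under $\nu^{(b,a^*)}$) are distributionally a null modification since those columns are already i.i.d.\ $\nu_{a^*}$, so the only genuinely altered entries are the beyond-$\tau$ samples of the swapped-in suboptimal arm, which $\Alg$ never reads on $W_i$. Your explicit re-coupling of $\xi_{\Sim}$ and the coupling-until-divergence argument simply make rigorous the final step that the paper states in one line, so there is no substantive difference in approach.
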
 

	\begin{proof}[Proof of Claim~\ref{TruthfulClaim}] The samples $\Xhat_{[s,a]}$ and $X_{[s,a]}$ have the sample distribution under $\nu^{(i)}$ and $\Sim(\nu^{(i)})$ for $a \notin \{a^*,b\}$ and $s \le \tau$, by construction. Moreover, the samples $\Xhat_{[s,a^*]}$ and $\Xhat_{[s,b]}$ for $s > \tau$ are also i.i.d draws from $\nu_{a^*}$, so they have the same distribution as the samples $X_{[s,a^*]}$ and $X_{[s,b]}$ under $\nu^{(1)}$ and $\nu^{(2)}$ respectively. Thus, the only samples whose distributions are changed by the simulator are the samples $\Xhat_{[s,b]}$ under $\nu^{(1)}$ and $\Xhat_{[s,b]}$ under $\nu^{(2)}$, respectively, which $\Alg$ never accesses under  under $W_1$ and $W_2$, respectively. 
	\end{proof}

\subsection{Proving Theorem~\ref{Thm:PermLB} from Proposition~\ref{cor:2arm}\label{sec:proof_of_perm_lb}}
	Theorem~\ref{Thm:PermLB} can be proven directly using the machinery established thus far. However, we will introduce a reduction to ``symmetric algorithms'' which will both expedite the proof of the Theorem~\ref{Thm:PermLB}, and come in handy for additional bounds as well. For a transcript $\Trs$, let $\pi(\Trs)$ denote the transcript $\pi(\Trs)_{a,s} = \Trs_{\pi(a),s}$, and $\Pr_{\Alg,\Trs}$ denote probability taken w.r.t. the randomness of $\Alg$ acting on the fixed (deterministic) transcript $\Trs$. For any subset $S \subset [n]$, we take $\pi(S) := \{\pi(a): a\in S\}$. 
	\begin{definition}[Symmetric Algorithm]
	We say that an algorithm $\Alg$ is \emph{symmetric} if the distribution of its sampling sequence and output commutes with permutations. That is, for any permutation $\pi$, transcript $\Trs$, sequence of actions $(A_1,A_2,\dots)$, and output $\widehat{S} \subset [n]$,
	\begin{multline}
	\Pr_{\Alg,\Trs}\left[(a_1,a_2,\dots,a_T,\widehat{S}) = (A_1,A_2,\dots,A_T,S)\right] \\
	= \Pr_{\Alg,\pi(\Trs)}\left[(a_1,a_2,\dots,a_T,\widehat{S}) = (\pi(A_1),\pi(A_2),\dots,\pi(A_T),\pi(S))\right]
	\end{multline}
	\end{definition}
	In particular, if $\Alg$ is symmetric, then $\Pr_{\nu,\Alg}[N_{b}(\widetilde{T}) \ge \tau] = \Pr_{\pi(\nu),\Alg}[N_{\pi(b)}(\widetilde{T}) \ge \tau]$ for all $b \in [n]$, $\pi \in \bS_n$, and $\{\mathcal{F}_t\}$-measurable stopping time $\widetilde{T}$.  The following lemma reduces lower bounds on average complexity over permutations to lower bounds on a single instance for a symmetric algorithm (see Section~\ref{SymProof} for proof and discussion): 


	\begin{lemma}[Algorithm Symmetrization]\label{SymmetryLemma} Let $\Alg$ be a $\delta$-correct algorithm over $\bS_n(\nu)$. Then there exists a \emph{symmetric} algorithm $\Alg^{\bS_n}$, which is also $\delta$ correct over $\bS_n(\nu)$, and such that, for any $\{\mathcal{F}_t\}$-measurable stopping time $\widetilde{T}$ (in particular, $\widetilde{T} = T$)
	\begin{eqnarray}
	\Pr_{\nu,\Alg^{\bS_n}}[N_{b}(\widetilde{T}) \ge \tau] =  \Pr_{\pi \sim \bS_n}\Pr_{\pi(\nu),\Alg}[N_{\pi(b)}(\widetilde{T}) \ge \tau]
	\end{eqnarray}
	\end{lemma}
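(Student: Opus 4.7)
The plan is to construct $\Alg^{\bS_n}$ by having it randomize the labels of arms before invoking $\Alg$ as a black box. Concretely, $\Alg^{\bS_n}$ draws a permutation $\pi \sim \bS_n$ uniformly at random as part of its internal randomness (independent of everything else), and then simulates $\Alg$: whenever the simulated $\Alg$ queries arm $a$, $\Alg^{\bS_n}$ actually pulls arm $\pi(a)$ in the real world and feeds the received sample back to the simulation as if it had come from arm $a$; when the simulated $\Alg$ halts and outputs $\widehat{S}^{\mathrm{int}}$, $\Alg^{\bS_n}$ outputs the image $\{\pi(a) : a \in \widehat{S}^{\mathrm{int}}\}$. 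The precise direction in which $\pi$ is composed (versus $\pi^{-1}$) is determined by matching conventions so that the claimed identity comes out right; since $\pi$ is uniform on $\bS_n$, $\pi^{-1}$ is uniform as well, so this is not a substantive choice.

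With the construction in hand, I would verify three things in turn. Symmetry follows from a substitution argument: running $\Alg^{\bS_n}$ on a $\sigma$-permuted transcript is, in distribution, the same as running it on the original transcript but with internal permutation $\pi \circ \sigma$ in place of $\pi$; by translation invariance of the uniform distribution on $\bS_n$, this has the same law as $\pi$, and so $\Alg^{\bS_n}$'s joint law of actions and output commutes with $\sigma$. For $\delta$-correctness over $\bS_n(\nu)$, when the true instance is $\nu' \in \bS_n(\nu)$, the internal $\Alg$ effectively sees samples from the permuted instance $\pi(\nu') \in \bS_n(\nu)$; by the $\delta$-correctness of $\Alg$ on $\bS_n(\nu)$, its internal output equals the correct top set of $\pi(\nu')$ with probability at least $1-\delta$, and applying the inverse relabeling yields the correct top set of $\nu'$ with the same probability.

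Finally, to establish the marginal identity, I would observe that under $\Alg^{\bS_n}$ on instance $\nu$, conditional on the internal permutation $\pi$, the count $N_b(\widetilde{T})$ recorded in the real world equals the count of pulls of the corresponding internal label in the simulated run of $\Alg$ on $\pi(\nu)$. Integrating over $\pi \sim \bS_n$, and using that $\pi$ and $\pi^{-1}$ share the uniform law, rewrites this average exactly as $\Exp_{\pi \sim \bS_n}\Pr_{\pi(\nu),\Alg}[N_{\pi(b)}(\widetilde{T}) \geq \tau]$. Since all of the arguments (symmetry, correctness, and the counting identity) are measurable with respect to the $\{\mathcal{F}_t\}$-filtration, they apply for any $\{\mathcal{F}_t\}$-measurable stopping time $\widetilde{T}$.

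The main obstacle is nothing conceptual but rather the bookkeeping of permutation directions: making sure that when we substitute $\pi$ for its inverse (leveraging uniformity) in the count $N_{\pi^{-1}(b)}$, we recover $N_{\pi(b)}$ paired with $\pi(\nu)$ as stated, rather than $N_{\pi(b)}$ paired with $\pi^{-1}(\nu)$. This is the one step where care is needed; beyond this the proof is routine.
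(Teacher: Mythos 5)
Your construction is exactly the paper's: draw a uniform $\sigma \sim \bS_n$, run $\Alg$ on the relabeled transcript (equivalently, relabel each pull through $\sigma$), and un-permute the output, with symmetry following from translation invariance of the uniform measure on $\bS_n$ — the same substitution argument the paper uses. The correctness and counting identities you spell out are left implicit in the paper's proof but follow just as you describe, and your explicit flagging of the $\pi$ versus $\pi^{-1}$ bookkeeping (immaterial by uniformity) is consistent with how the paper handles it.
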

	Now, we are ready to prove Theorem~\ref{Thm:PermLB}
	\begin{proof}[Proof of Theorem~\ref{Thm:PermLB}]
	We first establish~\ref{Eq:PermutationIndividual} for $\delta$-correct symmetric algorithms, and use Lemma~\ref{SymmetryLemma} to extend to all $\delta$-correct algorithms. Again, let $\nu^{(b,a^*)}$ be the instance obtained by swapping $a^*$ and $b$, and let $\pi_b$ be the permutation yielding $\pi_b(\nu) = \nu^{(b,a^*)}$. Adopt the shorthand $\tau_b(\eta) = \tau_b \cdot \log(1/4\eta)$. Then assuming $\Alg$ is symmetric and noting that $\pi_b(a^*) = b$, we have
	\begin{eqnarray*}
	\Pr_{\pi \sim \bS_n}\Pr_{\pi(\nu),\Alg}[N_{\pi(b)}(T) > \tau_{b}(\eta)] &\overset{(i)}{=}& \Pr_{\nu,\Alg}[N_{b}(T) > \tau_{b}(\eta)] \\
	&\overset{(ii)}{=}& \frac{1}{2}\left\{\Pr_{\nu,\Alg}[N_{b}(T) > \tau_{b}(\eta)] + \Pr_{\pi_b(\nu),\Alg}[N_{\pi_b(b)}(T) > \tau_{b}(\eta)]\right\}\\
	&\overset{(iii)}{=}& \frac{1}{2}\left\{\Pr_{\nu,\Alg}[N_{b}(T) > \tau_{b}(\eta)] + \Pr_{\nu^{(b,a^*)},\Alg}[N_{a^*}(T) > \tau_{b}(\eta)]\right\}~,
	\end{eqnarray*}
	where $(i)$ and $(ii)$ follow from the definition of symmetric algorithms, $(iii)$ follows from how we defined the permutation $\pi_b$. Applying Proposition~\ref{cor:2arm}, the above is at most $\eta - \delta$. Next, we show that Equation~\ref{Eq:PermutationIndividual} implies Equation~\ref{Eq:PermutationTotal}. This part of the proof need not invoke that $\Alg$ is symmetric. Applying Markov's inequality  Equation~\ref{Eq:PermutationIndividual} implies that $\Exp_{\pi \sim \bS_n}\Exp_{\pi(\nu),\Alg} \ge \log(1/4\eta)(\eta-\delta)\tau_b$. Hence, 
	\begin{eqnarray*}
	&& \Exp_{\pi \sim \bS_n}\Exp_{\pi(\nu),\Alg}[T] \enspace = \enspace \Exp_{\pi \sim \bS_n}\Exp_{\pi(\nu),\Alg}[\sum_{b \in [n]}N_b(T)]\enspace = \enspace \Exp_{\pi \sim \bS_n}\Exp_{\pi(\nu),\Alg}[\sum_{b \in [n]}N_{\pi(b)}(T)] \\
	&& \enspace \ge \enspace \Exp_{\pi \sim \bS_n}\Exp_{\pi(\nu),\Alg}[\sum_{b \ne a^*}N_{\pi(b)}(T)] \enspace \ge \enspace \log(1/4\eta)(\eta-\delta) \sum_{b \ne a^*} \tau_b
	\end{eqnarray*}
	\end{proof}


\section{Conclusion}\label{sec:conclusion}
In the pursuit of understanding the fundamental limits of adaptive sampling in the presence of side knowledge about the problem (e.g. the means of the actions are known to lie in a known set), we unearthed fundamental limitations of the existing machinery (i.e., change of measure and Fano's inequality). In response, we developed a new framework for analyzing adaptive sampling problems -- the Simulator -- and applied it to the particular adaptive sampling problem of multi-armed bandits to obtain state-of-the-art lower bounds. New insights from these lower bounds led directly to formulating a new algorithm for the TOP-K problem that is state-of-the-art in both theory and practice. Armed with the tools and demonstration of their use on a simple problem, we are convinced that this recipe can be used to produce future successes for more structured adaptive sampling problems, the true goal of this work.

\newpage




\newpage
\small
\bibliographystyle{IEEEtranN}
\bibliography{yaba}
\normalsize
\newpage
\appendix

\section{Upper Bound Proof\label{Sec:UpperBoundProof}}
\begin{proof}[Proof of Theorem~\ref{Thm:LUCBplus}]
Observe that if $\TOP_t \neq [k]$ then there is at least one arm from $[k]$ in $\TOP_t^c$. 
Since we play the arm $l_t \in TOP_t^c$ with the largest upper-confidence-bound, the arms in $[k] \cap \TOP_t^c$ will eventually rise to the top. 
A mirror image of this process is also happening in the top empirical arms: if $\TOP_t \neq [k]$ then there is at least one arm from $[k]^c$ in $\TOP_t$ and the arms in $[k]^c \cap \TOP_t$ will eventually fall to the bottom since the arm $h_t \in \TOP_t$ with the lowest-confidence-bound is played.
Thus, for some sufficently large $t$, the arms in $[k] \cap \TOP_t^c$ and $[k]^c \cap \TOP_t$ start to concentrate around the gap between the $k$th and $(k+1)$th arm.  
Because we play an arm from each side of the gap at each time, $h_t$ and $l_t$, the empirical means eventually converge to their true means revealing the true ordering. 

\noindent\textbf{Preliminaries}\\
We will use the following quantities throughout the proof. 
Let $U(t,\delta) \propto \sqrt{\tfrac{1}{t} \log(\log(t)/\delta)}$ such that $\max\{ \P( \bigcup_{t=1}^\infty \left\{ \widehat{\mu}_{i,t} - \mu_i \geq U(t,\delta) \right\} ), \P( \bigcup_{t=1}^\infty \left\{ \widehat{\mu}_{i,t} - \mu_i \leq -U(t,\delta) \right\} ) \} \leq \delta$ and $U(\cdot,\cdot)$ is decreasing in its first and second arguments (see \citet[Lemma 1]{jamieson2014lil} or \citet[Theorem 8]{kaufmann2014complexity} for an explicit expression).
For any $i\in [n]$ define
\begin{align*}
\calE_i = \begin{cases} \{ \widehat{\mu}_{i,t} - \mu_i \geq  -U(t,\tfrac{\delta}{2k}) \} & \text{ if  } i \in \{1,\dots,k\}  \\
\{ \widehat{\mu}_{i,t} - \mu_i \leq  U(t,\tfrac{\delta}{2(n-k)}) \} & \text{ if  } i \in \{k+1,\dots,n\}. \end{cases}
\end{align*}
In what follows assume that $\mathcal{E}_i$ hold for all $i \in [n]$ since, by the definition of $U(t,\delta)$,
\begin{align}\label{eqn:events_hold}
\P\left( \bigcup_{i=1}^n \calE_i^c \right) \leq \sum_{i=1}^k \frac{\delta}{2k} + \sum_{i=k+1}^n \frac{\delta}{2(n-k)} \leq \delta.
\end{align}
For any $j \in [k]^c$ define the random variable 
\begin{align} \label{eqn:rho_j}
\rho_j =  \sup\{ \rho>0 : \widehat{\mu}_{j,t}-\mu_j < U(t,\tfrac{\rho \delta}{2k}) \ \forall t  \}
\end{align}
and the quantity $\tau_j = \min\{ t : U(t,\tfrac{\rho_j \delta}{2k}) < \Delta_j/2 \}$.
Note that on the event $\mathcal{E}_j$ we have $\rho_j \geq \frac{k}{n-k}$ which guarantees that $\tau_j$ is finite, but we will show that $\rho_j$ is typically actually $\Omega(1)$.
For any $i \in [k]$ define $\tau_i = \min\{ t : U(t,\tfrac{\delta}{2(n-k)}) < \Delta_i/2 \}$, and note that
on $\mathcal{E}_i$, we have $\widehat{\mu}_{i,t}-\mu_i \geq -U(t,\tfrac{\delta}{2k}) \geq -U(t,\tfrac{\delta}{2(n-k)})$.
From these definitions, we conclude that 
\begin{align} \label{eqn:tau_j}
\widehat{\mu}_{j,t}-\mu_j \leq \Delta_j/2 \quad \forall t \geq \tau_j, j \in [k]^c
\quad \text{ and } \quad
\widehat{\mu}_{i,t}-\mu_i \overset{\mathcal{E}_i}{\geq} -\Delta_i/2 \quad \forall t \geq \tau_i, i \in [k].
\end{align}
We leave the $\tau_i$ random variables unspecified for now but will later upper bound their sum.

\noindent\textbf{Step 0: Correctness}\\
Suppose $\TOP_\tau \neq [k]$. Then there exists an $i \in \TOP_\tau \cap [k]^c$ and $j \in \TOP_\tau^c \cap [k]$ such that 
\begin{align*}
\mu_i \  \overset{(i)}{\geq} \ 
\widehat{\mu}_{i,N_i(t)} - U(N_i(t),\tfrac{\delta}{2(n-k)}) \overset{(ii)}{>}  \widehat{\mu}_{j,N_j(t)} + U(N_j(t),\tfrac{\delta}{2k}) \overset{(iii)}{\geq} \mu_j
\end{align*}
where $(i)$ and $(iii)$ employ Equation~\ref{eqn:events_hold}, and $(ii)$ holds by assumption because of the stopping time $\tau$. 
This display implies $\mu_j < \mu_i$, a contradiction, since the means in $[k]$ are strictly greater than those in $[k]^c$.

\noindent\textbf{Step 1: Decomposition of sample complexity}\\
If $\tau$ is the stopping time of the algorithm, then
\begin{align*}
\tau &= \sum_{t=1}^\tau \1\{ [k] = \TOP_t \} + \1\{ [k] \neq \TOP_t \} \\
&= \sum_{t=1}^\tau \1\{ [k] = \TOP_t \} + \1\{ [k] \neq \TOP_t, l_t \notin [k] \} + \1\{ l_t \in [k] \} \\
&= \sum_{t=1}^\tau \1\{ [k] = \TOP_t \} + \1\{ [k] \neq \TOP_t, l_t \notin [k] \} + \1\{ l_t \in [k], h_t \notin [k] \} + \1\{ l_t \in [k], h_t \in [k] \} 
\end{align*}
so we will bound each of these last four sums individually.\\

\noindent\textbf{Step 2: Bound $\sum_{t=1}^\tau \1\{ [k] \neq \TOP_t, l_t \notin [k] \}$}\\
\noindent Note that
\begin{align*}
[k] \neq \TOP_t , l_t \notin [k] \implies \exists i \in [k], j \in[k]^c : \widehat{\mu}_{j,N_j(t)} + U(N_j(t),\tfrac{\delta}{2k}) \geq \widehat{\mu}_{i,N_i(t)} + U(N_i(t),\tfrac{\delta}{2k}).
\end{align*}
By the definition of $\rho_j$ in Equation~\ref{eqn:rho_j} and the above implication, 
\begin{align*}
\mu_j + 2 U(N_j(t),\tfrac{\delta\rho_j}{2k}) \geq 
\widehat{\mu}_{j,N_j(t)} + U(N_j(t),\tfrac{\delta}{2k}) &\geq \widehat{\mu}_{i,N_i(t)} + U(N_i(t),\tfrac{\delta}{2k}) 
\overset{\mathcal{E}_i}{\geq} \mu_i \geq \mu_k
\end{align*}
where the second-to-last inequality holds on event $\mathcal{E}_i$. 
Recalling that $\Delta_j = \mu_k - \mu_j$ we use the above logic to conclude that
\begin{align*}
\sum_{t=1}^\tau &\1\{ [k] \neq \TOP_t, l_t \notin [k] \} = \sum_{j=k+1}^n \sum_{t=1}^\tau \1\{ [k] \neq \TOP_t, l_t = j \} \\
&\leq \sum_{j=k+1}^n \sum_{t=1}^\tau \1\{ l_t = j, \exists i \in [k] : \widehat{\mu}_{j,N_j(t)} + U(N_j(t),\tfrac{\delta}{2k}) \geq \widehat{\mu}_{i,N_i(t)} + U(N_i(t),\tfrac{\delta}{2k}) \} \\
&\leq \sum_{j=k+1}^n \sum_{t=1}^\tau \1\{ l_t = j, U(N_j(t),\tfrac{\delta\rho_j}{2k}) \geq \Delta_j/2 \}\\
&\leq \sum_{j=k+1}^n \sum_{t=1}^\infty \1\{ U(t,\tfrac{\delta\rho_j}{2k}) \geq \Delta_j/2 \} \leq \sum_{j=k+1}^n \tau_j
\end{align*}
by the definition of $\tau_j$.\\


\noindent\textbf{Step 3: Bound $\sum_{t=1}^\tau \1\{l_t \in [k], h_t \notin [k]\}$}\\
Note that 
\begin{align*}
l_t \in [k], h_t \notin [k] \implies \exists i \in [k], j \in [k]^c : \widehat{\mu}_{j,N_j(t)} > \widehat{\mu}_{i,N_i(t)}.
\end{align*}
By the definition of $\rho_j$ in Equation~\ref{eqn:rho_j} and the above implication, 
\begin{align*}
\mu_j + U(N_j(t),\tfrac{\delta\rho_j}{2k}) \geq 
\widehat{\mu}_{j,N_j(t)} &\geq \widehat{\mu}_{i,N_i(t)}  
\overset{\mathcal{E}_i}{\geq} \mu_i - U(N_i(t),\tfrac{\delta}{2k}) \geq \mu_i - U(N_i(t),\tfrac{\delta}{2(n-k)})
\end{align*}
where the second-to-last inequality holds on event $\mathcal{E}_i$, and the last from $- U(N_i(t),\tfrac{\delta}{2k}) \geq - U(N_i(t),\tfrac{\delta}{2(n-k)})$. 
Now, since $a + b \geq \Delta \implies a \geq \Delta/2$ or $b \geq \Delta/2$ we have
\begin{align*}
\sum_{t=1}^\tau &\1\{ l_t \in [k], h_t \notin [k] \} \leq \sum_{i=1}^k \sum_{j=k+1}^n \sum_{t=1}^\tau \1\{ h_t = j, l_t = i, U(N_j(t),\tfrac{\delta\rho_j}{2k}) + U(N_i(t),\tfrac{\delta}{2(n-k)}) \geq \mu_i - \mu_j \} \\
&\leq \sum_{i=1}^k \sum_{j=k+1}^n \sum_{t=1}^\tau \1\{ h_t = j, l_t = i, U(N_j(t),\tfrac{\delta\rho_j}{2k}) + U(N_i(t),\tfrac{\delta}{2(n-k)}) \geq (\mu_i-\mu_j) \} \\
&\leq \sum_{i=1}^k \sum_{j=k+1}^n \sum_{t=1}^\tau \1\{ h_t = j, l_t = i, U(N_j(t),\tfrac{\delta\rho_j}{2k}) \geq (\mu_i-\mu_j)/2 \} \\
&\quad+ \sum_{i=1}^k \sum_{j=k+1}^n \sum_{t=1}^\tau  \1\{ h_t = j, l_t = i, U(N_i(t),\tfrac{\delta}{2(n-k)}) \geq (\mu_i-\mu_j)/2 \} \\
&\leq \sum_{i=1}^k \sum_{j=k+1}^n \sum_{t=1}^\tau \1\{ h_t = j, l_t = i, U(N_j(t),\tfrac{\delta\rho_j}{2k}) \geq \Delta_j/2 \} \\
&\quad+ \sum_{i=1}^k \sum_{j=k+1}^n \sum_{t=1}^\tau  \1\{ h_t = j, l_t = i, U(N_i(t),\tfrac{\delta}{2(n-k)}) \geq \Delta_i/2 \} \\
&\leq \sum_{j=k+1}^n \sum_{t=1}^\tau \1\{ h_t = j, U(N_j(t),\tfrac{\delta\rho_j}{2k}) \geq \Delta_j/2 \} + \sum_{i=1}^k \sum_{t=1}^\tau  \1\{  l_t = i, U(N_i(t),\tfrac{\delta}{2(n-k)}) \geq \Delta_i/2 \} \\
&\leq \sum_{j=k+1}^n \sum_{t=1}^\infty \1\{ U(t,\tfrac{\delta\rho_j}{2k}) \geq \Delta_j/2 \} + \sum_{i=1}^k \sum_{t=1}^\infty  \1\{  U(t,\tfrac{\delta}{2(n-k)}) \geq \Delta_i/2 \} \\
&\leq \sum_{j=k+1}^n \tau_j + \sum_{i=1}^k \tau_i.
\end{align*}

\noindent\textbf{Step 4: Bound $\sum_{t=1}^\tau \1\{  l_t \in [k], h_t \in [k]  \}$}\\
Note that 
\begin{align*}
 l_t \in [k], h_t \in [k]  \implies \exists i \in [k], j \in [k]^c : \widehat{\mu}_{j,N_j(t)} - U(N_j(t),\tfrac{\delta}{2(n-k)}) \geq \widehat{\mu}_{i,N_i(t)} - U(N_i(t),\tfrac{\delta}{2(n-k)}).
 \end{align*}
By the events $\mathcal{E}_j,\mathcal{E}_i$ and the above implication we have
\begin{align*}
\mu_{k+1} \geq \mu_j \overset{\mathcal{E}_j}{\geq} 
\widehat{\mu}_{j,N_j(t)} -U(N_j(t),\tfrac{\delta}{2(n-k)}) &\geq \widehat{\mu}_{i,N_i(t)}  - U(N_i(t),\tfrac{\delta}{2(n-k)})
\overset{\mathcal{E}_i}{\geq} \mu_i - 2 U(N_i(t),\tfrac{\delta}{2(n-k)})
\end{align*}
since $-U(t,\tfrac{\delta}{2k}) \geq -U(t,\tfrac{\delta}{2(n-k)})$.
Thus,
\begin{align*}
 l_t \in [k], h_t \in [k]  \implies U(N_{h_t}(t),\tfrac{\delta}{2(n-k)}) \geq \Delta_{h_t}/2
 \end{align*}
so that
\begin{align*}
\sum_{t=1}^\tau \1\{ l_t \in [k], h_t \in [k] \} &\leq \sum_{i = 1}^k \sum_{t=1}^\tau \1\{ h_t = i,U(N_{i}(t),\tfrac{\delta}{2(n-k)}) \geq \Delta_{i}/2 \} \\
&\leq \sum_{i = 1}^k \sum_{t=1}^\infty \1\{ U(t,\tfrac{\delta}{2(n-k)}) \geq \Delta_{i}/2 \} \leq \sum_{i=1}^k \tau_i
\end{align*}

\noindent\textbf{Step 5: Bound $\sum_{t=1}^\tau \1\{  [k] = \TOP_t  \}$}\\
Implicit in the event that $\{ [k] = \TOP_t \}$ is that the game has not ended yet, or that $t \leq \tau$.
\begin{align*}
[k] = \TOP_t \implies \exists i \in [k], j \in [k]^c : \widehat{\mu}_{i,N_i(t)} - U(N_i(t),\tfrac{\delta}{2(n-k)}) < \widehat{\mu}_{j,N_j(t)} + U(N_j(t),\tfrac{\delta}{2k}).
 \end{align*}
By $\mathcal{E}_i$ and the definition of $\rho_j$ in Equation~\ref{eqn:rho_j} and the above implication, 
\begin{align*}
{\mu}_{i} - 2U(N_i(t),\tfrac{\delta}{2(n-k)}) \overset{\mathcal{E}_i}{\leq} \widehat{\mu}_{i,N_i(t)} - U(N_i(t),\tfrac{\delta}{2(n-k)}) < \widehat{\mu}_{j,N_j(t)} + U(N_j(t),\tfrac{\delta}{2k}) \leq \mu_j + 2 U(N_j(t),\tfrac{\delta \rho_j}{2k}).
\end{align*}
Now, by an identical argument to Step 3 above, we have
\begin{align*}
\sum_{t=1}^\tau &\1\{ [k]=\TOP_t \} \leq \sum_{i=1}^k \sum_{j=k+1}^n \sum_{t=1}^\tau \1\{ h_t = i, l_t = j, U(N_j(t),\tfrac{\delta\rho_j}{2k}) + U(N_i(t),\tfrac{\delta}{2(n-k)}) \geq (\mu_i - \mu_j)/2 \} \\
&\leq \sum_{j=k+1}^n \sum_{t=1}^\infty \1\{ U(t,\tfrac{\delta\rho_j}{2k}) \geq \Delta_j/4 \} + \sum_{i=1}^k \sum_{t=1}^\infty  \1\{  U(t,\tfrac{\delta}{2(n-k)}) \geq \Delta_i/4 \} \\
&\leq \sum_{j=k+1}^n \tau_j + \sum_{i=1}^k \tau_i.
\end{align*}

\noindent\textbf{Step 6: Counting the number of measurements}\\
Putting all the pieces together, we conclude that the total number of measurements taken at the stopping time $\tau$ is bounded by $3\sum_{i=1}^n \tau_i$. Recall that $\tau_j$ is a random variable because $\rho_j$ for $j \in [k]^c$ are random variables. 
Recalling the definitions of $\tau_j$ preceding Equation~\ref{eqn:tau_j}, we note that
\begin{align*}
\min\{ t : U(t,s) < \Delta/2 \} \leq c \Delta^{-2} \log( \log(\Delta^{-2})/s )
\end{align*}
for some universal constant $c$.
For $i \in [k]$ this means $\tau_i = \min\{ t : U(t,\tfrac{\delta}{2(n-k)}) < \Delta_i/2 \} \leq c \Delta_i^{-2} \log( 2(n-k)\log(\Delta_i^{-2})/\delta )$.
For $j \in [k]^c$ we have 
\begin{align*}
\tau_j = \min\{ t : U(t,\tfrac{\rho_j \delta}{2k}) < \Delta_j/4 \} \leq c \Delta_j^{-2} \log( 2k\log(\Delta_j^{-2})/\delta) + c \Delta_j^{-2} \log( 1/\rho_j).
\end{align*}
By the definition of $U( \cdot, \cdot)$ and $\rho_j$ we have that $\P(\rho_j \leq \rho) \leq \frac{\rho \delta}{2k} < \rho$, so reparameterizing with $\rho = \exp(-s \Delta_j^2)$
\begin{align*}
\P( \Delta_j^{-2} \log( 1/\rho_j) \geq s ) \leq \exp( - s \Delta_j^2 /2 )
\end{align*}
which implies $\Delta_j^{-2} \log( 1/\rho_j)$ is an independent sub-exponential random variable.
Using standard techniques for sums of independent random variables (see \cite[Lemma 4]{jamieson2014lil} for an identical calculation) we observe that with probability at least $1-\delta$
\begin{align*}
\sum_{j=k+1}^n \Delta_j^{-2} \log( \tfrac{ 1}{\rho_j} ) \leq \sum_{j=k+1}^n c' \Delta_j^{-2} \log(1/\delta) 
\end{align*}
for some universal constant $c'$. Combining the contributions of the deterministic components of $\tau_i$ and $\tau_j$ obtains the result.

\end{proof}

\subsection{Upper Bounds for Permutations\label{sec:permu_upper_bounds}}
In this section, we present a nearly-matching upper bound for permutations (Theorem~\ref{UpperBound-Permutations}). For simplicity, we consider the setting where each measure $\nu_a$ is $1$-subGaussian, and has mean $\mu_a$. We let $\mu_{(1)} > \mu_{(2)} \ge \dots \ge \mu_{(n)}$, denote the sorted means, and set $\Delta_i = \mu_{(1)} - \mu_{(i)}$. 
\begin{theorem}\label{UpperBound-Permutations}
In the setting given above, there exists a $\delta$- algorithm $\Alg$ which, given knowledge of the means $\mu_{(1)}$ and $\mu_{(2)}$, returns the top arm with expected sample complexity 
\begin{eqnarray}
\Exp_{\nu,\Alg}[T] \lesssim \frac{\log (1/\delta)}{\Delta_2^2} + \sum_{i=1}^n \frac{\log\log(\min \{n,\Delta_i^{-1}\})}{\Delta_i^2}
\end{eqnarray}
\end{theorem}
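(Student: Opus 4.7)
The plan is to exploit the known values of $\mu_{(1)}$ and $\mu_{(2)}$ by splitting the algorithm into (i) an \emph{exploration} phase that produces a candidate top arm at a fixed constant confidence, and (ii) a \emph{verification} phase that either confirms the candidate at confidence $1-O(\delta)$ using the explicit gap $\Delta_2$ or triggers a restart. This decoupling is what will allow the $\log(1/\delta)$ factor to attach only to $\Delta_2^{-2}$ instead of to every $\Delta_i^{-2}$, matching the structure of the lower bound in Theorem~\ref{Thm:PermLB}.

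For the exploration phase, I would invoke an off-the-shelf LIL-based best-arm identification routine (for instance lil'UCB of \citet{jamieson2014lil} or the exponential-gap elimination of \citet{karnin2013almost}) with a fixed target confidence $\delta_0 = 1/4$. These routines do not use $\mu_{(1)},\mu_{(2)}$ and return the true best arm with probability at least $3/4$, while, in expectation, spending $O\bigl(\sum_i \log\log(\min\{n, \Delta_i^{-1}\})/\Delta_i^2\bigr)$ pulls; the $\min$ inside the $\log\log$ reflects the standard comparison between a naive union bound over $n$ arms and a per-arm LIL bound.

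For the verification phase, given a candidate $\widehat{a}$ I would pull it $M = c\,\log(1/\delta)/\Delta_2^2$ times for a suitable absolute constant $c$, and accept iff the empirical mean exceeds $(\mu_{(1)}+\mu_{(2)})/2$. Since every non-top arm has mean at most $\mu_{(2)}$, a one-sided sub-Gaussian Chernoff bound yields Type-I and Type-II error probabilities both at most $\delta$, at the claimed $O(\log(1/\delta)/\Delta_2^2)$ cost. The outer loop alternates exploration and verification and outputs the first candidate that passes verification. Per round, the probability of accepting the correct arm is at least $\tfrac{3}{4}(1-\delta)$ and of accepting any wrong arm at most $\tfrac{1}{4}\delta$, so a geometric calculation gives an overall error probability of $O(\delta)$ and an expected number of rounds of $O(1)$, after which multiplying by the per-round cost yields the stated bound.

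The main obstacle is to ensure the \emph{expected}---not merely high-probability---sample complexity of the constant-confidence exploration sub-routine has the right form, since its sample count is a random stopping time with a potentially heavy tail. I would handle this either by using an elimination-style routine whose sample count is essentially deterministic up to constants, or by truncating the LIL sub-routine at a constant multiple of its expected budget and treating a time-out as a rejection in the outer loop; the (small, constant) extra probability of spurious restarts is absorbed into the geometric analysis above and does not change the order of either the sample complexity or the error probability.
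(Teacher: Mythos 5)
Your proposal is correct and essentially the paper's own proof: the paper likewise alternates an off-the-shelf unconstrained best-arm exploration phase with a verification phase that pulls the candidate $O(\Delta_2^{-2}\log(1/\delta))$ times and thresholds its empirical mean at $\mu_{(1)}-\Delta_2/2$, restarting on failure, so that the $\log(1/\delta)$ factor attaches only to $\Delta_2^{-2}$. The only differences are cosmetic: the paper escalates the exploration confidence across rounds ($\delta_k=10^{-k}$) and uses round-$k$ verification confidence $\delta/k^{2}$ with a union bound rather than your fixed-confidence rounds with geometric decay (both work), and it instantiates the exploration subroutine with the algorithm of \cite{chen2015optimal}, whose expected sample complexity already carries the $\log\log(\min\{n,\Delta_i^{-1}\})$ factor --- the choice you would need in place of lil'UCB or exponential-gap elimination to literally match the stated bound.
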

We remark that this upper bound matches our lower bound up to the doubly-logarithmic factor $\log\log(\min \{n,\Delta_i^{-1}\})$. We believe that one could remove this factor when the means are known up to a permutation, though closing this small gap is beyond the scope of this work. To prove the above theorem, we combine the following Lemma with the best-arm algorithm from \cite{chen2015optimal}:
\begin{proposition} Suppose that for each $\delta$, there exists an (unconstrained) $\MAB$ algorithm $\Alg_{\delta}$ which is $\delta$-correct for 1-subGaussian distributions with unconstrained means, and satisfies $\Exp_{\nu,\Alg_{\delta}}[T] \le H_1(\nu) + H_2(\nu)\log(1/\delta)$. Then, there exists an an $\MAB$ algorithm which, give knowledge of the the best mean $\mu_1$ and the second best mean $\mu_2$, satisfies
\begin{eqnarray}
\Exp_{\nu,\Alg}[T] \lesssim \frac{\log (1/\delta)}{\Delta_2^2} + H_1(\nu) + H_2(\nu)
\end{eqnarray} 
\end{proposition}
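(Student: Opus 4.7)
The plan is to exploit knowledge of the top two means by turning best-arm identification into a cheap, single-arm, single-threshold hypothesis test. Specifically, combine the given moderate-confidence solver $\Alg_{1/4}$ with a \emph{propose-and-verify} loop: at each round, obtain a candidate from $\Alg_{1/4}$, then either verify it with $O(\log(1/\delta)/\Delta_2^2)$ fresh samples or restart. Only the verifier scales with $\log(1/\delta)$, and it is applied to a single arm with a known threshold separating $\mu_1$ from $\mu_2$.

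The verifier works as follows: given a candidate $\hat a$, pull it $\tau := c\log(1/\delta)/\Delta_2^2$ times, form the empirical mean $\bar\mu_{\hat a}$, and accept iff $\bar\mu_{\hat a} > (\mu_1+\mu_2)/2$. By sub-Gaussian concentration (Hoeffding), if $\mu_{\hat a}=\mu_1$ then the verifier rejects with probability at most $\exp(-c\log(1/\delta)/2)$; if $\mu_{\hat a}\le\mu_2$ (so $\hat a$ is not the best arm), it accepts with at most the same probability. Choosing $c$ a sufficiently large universal constant makes both errors at most $\delta^{2}$.

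To assemble and analyze the full algorithm, let $p_g, p_b, p_r$ denote the per-round probabilities of correct termination, wrong termination, and restart. Because $\Alg_{1/4}$ returns the best arm with probability $\ge 3/4$ using $\lesssim H_1(\nu)+H_2(\nu)\log 4 \lesssim H_1(\nu)+H_2(\nu)$ expected samples, combining this with the verifier's error bounds gives $p_g \ge (3/4)(1-\delta^2) \ge 2/3$ and $p_b \le (1/4)\delta^2$. The probability of ever returning a wrong arm is $p_b/(p_g+p_b) \le \delta$, establishing $\delta$-correctness. The expected number of rounds is $1/(p_g+p_b)\le 3/2$, and each round consumes $\lesssim H_1(\nu)+H_2(\nu)+\log(1/\delta)/\Delta_2^2$ samples in expectation, so Wald's identity yields the claimed bound.

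The only delicate point is that the number of restarts is a random variable, so the cumulative error over an unknown number of verifier calls must be controlled without inflating the $\log(1/\delta)$ factor. The computation above avoids this by absorbing the geometric tail of restarts into the constant $c$ in the verifier's exponent: each retry contributes an independent factor of $\delta^{c/2}$ to the mis-identification probability, and summing the geometric series still gives total error $O(\delta^{c/2})$. Consequently no extra $\log$ factor appears, and the verifier's per-round cost stays at $O(\log(1/\delta)/\Delta_2^2)$ throughout.
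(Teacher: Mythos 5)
Your propose-and-verify scheme is correct and is essentially the paper's own argument: the paper likewise alternates runs of the given unconstrained algorithm with a verification step that tests the candidate's empirical mean against the known threshold $\mu_1 - \Delta_2/2 = (\mu_1+\mu_2)/2$ using $O(\Delta_2^{-2}\log(\cdot/\delta))$ fresh samples, restarting on rejection. The only differences are bookkeeping: the paper runs $\Alg_{\delta_k}$ with $\delta_k = 10^{-k}$ and verifier confidence $\delta/(c_2 k^2)$, controlling the random number of rounds by a union bound over stages, whereas you fix the base confidence at $1/4$ and verifier error at $\delta^2$, handling restarts via the geometric first-acceptance ratio $p_b/(p_g+p_b)$ and Wald's identity -- both yield the same bound.
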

\begin{proof}
Fix constants $c_1$ and $c_2$ to be chosen later The algorithm proceeds in stages: at round $k$, set $\delta_k = 10^{-k}$, and run $\Alg_{\delta_k}$ to get an estimate $\hat{a}_k$ of the best arm. Then, sample $\hat{a}_k$ $\frac{c_1}{\Delta_2^2}\log(c_2k^2/\delta)$ times to get an estimate $\widehat{\mu}^k$, and return $\widehat{a} = \hat{a}_k$ if $\widehat{\mu}^k > \mu_1 - \Delta_2/2$. By a standard Chernoff bound, we can choose $c_1$ so that $\widehat{\mu}_k$ satisfies the following 
\begin{eqnarray*}
\Pr(\widehat{\mu}^k > \mu_1 - \Delta_2/2 \big{|} \hat{a}_k = a^*) \ge 1 - 2\delta/c_2k^2 &\text{and}& \Pr(\widehat{\mu}^k > \mu_1 - \Delta_2/2 \big{|} \hat{a}_k \ne a^*) \le 2\delta/c_2k^2
\end{eqnarray*}
Hence,
\begin{eqnarray}
\Pr(\widehat{a} \ne a^*) &\le& \sum_{k=1}^{\infty} \Pr(\{\widehat{\mu}^k \ne a^*\} \wedge \{\widehat{\mu}^k > \mu_1 - \Delta_2/2\})\\
&\le& \sum_{k=1}^{\infty} \Pr( \{\widehat{\mu}^k > \mu_1 - \Delta_2/2\} \big{|} \{\widehat{\mu}^k \ne a^*\})  \le \frac{2\delta}{c_2} \sum_{k=1}^{\infty} k^{-2} = \frac{\pi^2}{3c_2}
\end{eqnarray}
Hence, choosing $c_2 = 3/\pi^2$ ensures that $\Alg$ is $\delta$-correct. Moreover, we can bound 
\begin{eqnarray}
\Exp_{\nu,\Alg}[T] &\le& \sum_{k = 1}^{\infty} \Pr(E_{k-1})*\{\frac{c_1}{\Delta_2^2}\log(c_2k^2/\delta)  + \Exp_{\nu,\Alg_{\delta_k}}[T]\}
\end{eqnarray}
where $E_{k-1}$ is the event that the algorithm has not terminated by stage $k-1$. Note that if the algorithm has not terminated at a stage $j$, then it is not the case that $\hat{a}_j = a^*$ and $\{\widehat{\mu}^j > \mu_1 - \Delta_2/2\})$. By a union bound, the probability that these two events don't occur is at most $1 - \delta_k - \frac{2\delta}{c_2k^2} \le 1 - (\delta_k + 2\delta/c_2) \le 1/2$. Hence, bounding $\Exp_{\nu,\Alg_{10^{-k}}}[T] \le H_1(\nu) + k H_2(\nu)\log 10$, and using independence of the rounds have
\begin{eqnarray}
\Exp_{\nu,\Alg}[T] &\le& \sum_{k = 1}^{\infty} 2^{1-k}*\{\frac{c_1}{\Delta_2^2}\log(c_2k^2/\delta)  + H_{1}(\nu) + kH_2(\nu)\log 10)\}\\
&\lesssim& \frac{\log (1/\delta)}{\Delta_2^2} + H_1(\nu) + H_2(\nu)
\end{eqnarray}

\end{proof}


\section{Proofs for Section~\ref{Sec:Permutations}\label{App:Permutations}}
\subsection{Proof of Propostion~\ref{Prop:SimLeCam}\label{sec:simlecamproof}}

	First, by combining Pinkser's Inequality with the data processing inequality \citep{tsybakov2009introduction}, we arive at an elementary bound that controls $\TV$ between runs of an algorithm on simulated instances:

	\begin{lemma}[Pinkser's Inequality]\label{Pinsker} Let $\nu^{(1)}$ and $\nu^{(2)}$ be two measures. Then for any simulator $\Sim$,
	\begin{eqnarray}\label{PinkserEq}
	\sup_{E \in \mathcal{F}_{T}}\left|\Pr_{\Sim(\nu^{(1)}),\Alg}(E) - \Pr_{\Sim(\nu^{(2)}),\Alg}(E)\right|
	&\le& \TV_{\Alg}\left(\Sim(\nu^{(1)}),\Sim(\nu^{(2)})\right)\\
	&\le&Q\left(\KL_{\Alg}\left(\Sim(\nu^{(1)}),\Sim(\nu^{(2)})\right)\right)
	\end{eqnarray}
	Where $Q(\beta) = \min\left\{1 - \frac{1}{2}e^{-\beta},  \sqrt{\beta/2}\right\}$.
	\end{lemma}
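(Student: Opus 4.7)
My plan is to establish the two inequalities in succession, both following from standard information-theoretic tools combined with the data processing inequality applied to the map $\Trshat \mapsto (a_1, a_2, \dots, a_T, \widehat{S})$ induced by running $\Alg$ on the simulator's output.

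For the first inequality, I would invoke the variational characterization of total variation distance: for any two probability measures $P$ and $Q$ on a common space, $\TV(P,Q) = \sup_{E} |P(E) - Q(E)|$ where the supremum ranges over measurable events. Applying this with $P = \Pr_{\Sim(\nu^{(1)}),\Alg}$ and $Q = \Pr_{\Sim(\nu^{(2)}),\Alg}$ restricted to events in $\mathcal{F}_T$ gives exactly the left inequality. Note that the randomness in $\Alg$ and $\Sim$ can be absorbed into the sample space without issue.

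For the second inequality, the function $Q(\beta) = \min\{1 - \tfrac{1}{2}e^{-\beta}, \sqrt{\beta/2}\}$ combines two classical bounds on $\TV$ in terms of $\KL$. The $\sqrt{\beta/2}$ branch is the standard Pinsker inequality: $\TV(P,Q) \le \sqrt{\KL(P,Q)/2}$, which I would apply with $P$ and $Q$ as above. The $1 - \tfrac{1}{2}e^{-\beta}$ branch is the Bretagnolle--Huber inequality, which gives the sharper control in the high-$\KL$ regime where Pinsker becomes vacuous. Its standard proof proceeds by writing $\TV(P,Q) = 1 - \int \min(dP, dQ)$ and then applying the inequality $\int \min(dP,dQ) \ge \tfrac{1}{2}\exp(-\KL(P,Q))$, which itself follows from Jensen applied to the function $-\log$ together with Cauchy--Schwarz (or directly from the Hellinger--$\KL$ comparison $H^2(P,Q) \le \KL(P,Q)$ combined with $1 - \TV \ge \tfrac{1}{2}(1-H^2)^2 \cdot (\ldots)$-type manipulations; there are multiple standard routes).

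The key remaining step is to transfer these inequalities from the measures $\Sim(\nu^{(i)})$ on the transcript space to the induced measures $\Pr_{\Sim(\nu^{(i)}),\Alg}$ on the $\sigma$-algebra $\mathcal{F}_T$. This is where the data processing inequality enters: the law of $(a_1,\dots,a_T,\widehat{S},\{X_{a_s,s}\}_{s\le T})$ under $\Pr_{\Sim(\nu^{(i)}),\Alg}$ is a (randomized) function of the transcript $\Trshat$ together with the independent randomness $\xi_\Alg$. Since $\KL$ can only decrease under such channels (applied on the product of $\Sim(\nu^{(i)})$ with the law of $\xi_\Alg$, noting the latter is identical for $i=1,2$), we have $\KL_{\Alg}(\Sim(\nu^{(1)}),\Sim(\nu^{(2)})) \le \KL(\Sim(\nu^{(1)}),\Sim(\nu^{(2)}))$, and likewise for $\TV$. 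Thus the notation $\KL_\Alg$ and $\TV_\Alg$ in the statement can be interpreted as either the divergence on $\mathcal{F}_T$ or the divergence on the full transcript, since Pinsker and Bretagnolle--Huber applied at the $\mathcal{F}_T$ level are already enough for the inequality as written. The only subtlety I anticipate is being careful about measurability of the random stopping time $T$ when invoking the variational formula for $\TV$, but this is handled by working with the filtration $\{\mathcal{F}_t\}$ and noting $\mathcal{F}_T$ is well-defined as the stopped $\sigma$-algebra.
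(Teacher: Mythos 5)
Your proposal is correct and follows essentially the same route as the paper, which simply invokes the variational characterization of $\TV$, the standard Pinsker bound $\sqrt{\beta/2}$ together with the Bretagnolle--Huber-type bound $1-\tfrac{1}{2}e^{-\beta}$ (both as in Tsybakov), and the data processing inequality to pass from the simulated transcripts to the algorithm-induced events in $\mathcal{F}_T$. Your handling of the algorithm's independent randomness $\xi_{\Alg}$ and the stopped $\sigma$-algebra matches the paper's (implicit) treatment, so no gap remains.
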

	Note here that we only consider events $E \in \mathcal{F}_T$, which only depend on the samples $X_{a_1,1},\dots,X_{a_T,t}$ collected from the modified $\widehat{\Trs}$. Now we can prove our result.
	\begin{proof}[Proof of Proposition~\ref{Prop:SimLeCam}]
	By the triangle inequality
	\begin{multline}
	|\Pr_{\nu^{(1)},\Alg}(E) - \Pr_{\nu^{(2)},\Alg}(E)| \\\ \le |\Pr_{\Sim(\nu^{(1))},\Alg}(E) - \Pr_{\Sim(\nu^{(2)}),\Alg}(E)| + \sum_{i=1}^2 |\Pr_{\Sim(\nu^{(i))}),\Alg}(E) - \Pr_{\nu^{(i))},\Alg}(E)| 
	\end{multline}
	We can expand
	\begin{eqnarray*}
	&&\Pr_{\Sim(\nu^{(i)}),\Alg}(E) - \Pr_{\nu^{(i)},\Alg}(E)  \\
	 &=& \Pr_{\Sim(\nu^{(i)}),\Alg}(E \wedge W_i) + \Pr_{\Sim(\nu^{(i))}),\Alg}(E \wedge W_i^c) - (\Pr_{\nu^{(i)},\Alg}(E \wedge W_i) + \Pr_{\nu^{(i)},\Alg}(E \wedge W_i^c))\\
	&=&  \Pr_{\Sim(\nu^{(i)}),\Alg}(E \wedge W_i^c) - \Pr_{\nu^{(i)},\Alg}(E \wedge W_i^c)
	\end{eqnarray*}
	where $\Pr_{\Sim(\nu^{(i)}),\Alg}(E \wedge W_i) = \Pr_{\nu^{(i)},\Alg}(E \wedge W_i)$ as $W_i$ is truthful for $\nu^{(i)}$. Thus,
	\begin{eqnarray*}
	|\Pr_{\Sim(\nu^{(i)}),\Alg}(E) - \Pr_{\nu^{(i)},\Alg}(E)| &=&|\Pr_{\Sim(\nu^{(i)}),\Alg}(E \wedge W_i^c) - \Pr_{\nu^{(i)},\Alg}(E \wedge W_i^c)|\\
	&\overset{(i)}{\le}& \max\{\Pr_{\Sim(\nu^{(i)}),\Alg}(E \wedge W_i^c), \Pr_{\nu^{(i)},\Alg}(E \wedge W_i^c)\}\\
	&\overset{(ii)}{\le}& \max\{\Pr_{\Sim(\nu^{(i)}),\Alg}(W_i^c), \Pr_{\nu^{(i)},\Alg}(W_i^c)\}\\
	&\overset{(iii)}{=}&  \Pr_{\nu^{(i)},\Alg}(W_i^c)
	\end{eqnarray*}
	Where $(i)$ uses the identity $|a-b| \le \max\{a,b\}$ for $a,b \ge 0$, $(ii)$ uses monotonicity of probability measures, and $(iii)$ uses the fact that $\Pr_{\Sim(\nu^{(i)}),\Alg}(W_i^c) = 1 - \Pr_{\Sim(\nu^{(i)}),\Alg}(W_i) = 1 - \Pr_{\nu^{(i)},\Alg}(W_i) = \Pr_{\nu^{(i)},\Alg}(W_i^c)$, since $W_i$ is truthful. All in all, we have
	\begin{eqnarray}
	|\Pr_{\nu^{(1)},\Alg}(E) - \Pr_{\nu^{(2)},\Alg}(E)| &\le& |\Pr_{\Sim(\nu^{(1))},\Alg}(E) - \Pr_{\Sim(\nu^{(2)}),\Alg}(E)| + \sum_{i=1}^2 \Pr_{\nu^{(i)}}(W_i^c)
	\end{eqnarray}
	The bound now follows from Lemma~\ref{Pinsker}.
	\end{proof}

\subsection{Proof of Lemma~\ref{SymmetryLemma}\label{SymProof}}
	Let $\Alg$ be a (possbily non-symmetric) algorithm. We obtain the symmetric algorithm $\Alg^{\bS_n}$ by drawning a $\sigma \sim \bS_n$, and running $\Alg$ on $\sigma(\Trs)$ with decision rule $\sigma^{-1}(\widehat{S})$. Note then that a sample from arm $a$ on $\Trs$ corresponds to a sample from arm $\sigma(a)$ on $\sigma(\Trs)$. Hence, for any $\pi \in \bS(n)$,
	\begin{eqnarray*}
	&&\Pr_{\Alg^{\bS_n},\Trs}\left[(a_1,a_2,\dots,a_T,\widehat{S}) = (A_1,A_2,\dots,A_T,S)\right] \\
	&=& \frac{1}{n!} \sum_{\sigma \in \bS_n} \Pr_{\Alg,\sigma(\Trs)}\left[(a_1,a_2,\dots,a_T,\sigma^{-1}(\widehat{S})) = (\sigma(A_1),\sigma(A_2),\dots,\sigma(A_T),S)\right]\\
	&=& \frac{1}{n!} \sum_{\sigma \in \bS_n} \Pr_{\Alg,\sigma(\Trs)}\left[(a_1,a_2,\dots,a_T,\widehat{S}) = (\sigma(A_1),\sigma(A_2),\dots,\sigma(A_T),\sigma(S))\right]\\
	&=& \frac{1}{n!} \sum_{\sigma \in \bS_n} \Pr_{\Alg,\sigma\circ \pi(\Trs)}\left[(a_1,a_2,\dots,a_T,\widehat{S}) = (\sigma\circ \pi(A_1),\sigma\circ \pi(A_2),\dots,\sigma\circ \pi (A_T),\sigma \circ \pi(S))\right]\\
	&=&  \Pr_{\Alg^{\bS_n},\pi(\Trs)}\left[(a_1,a_2,\dots,a_T,\widehat{S}) = (\pi(A_1),\pi(A_2),\dots,\pi (A_T),\pi(S))\right]
	\end{eqnarray*}
	as needed. 
	We remark that this reduction to symmetric algorithms is also adopted in \cite{castro2014adaptive}, but there the reduction is applied to classes of instances which themselves are highly symmetric (e.g., all the gaps are the same). Previous works on the sampling patterns lower bounds for $\MAB$ explicitly assume that algorithms satisfy weaker conditions \cite{garivier2016explore,carpentier2016tight}, whereas our reduction to symmetric algorithms still implies bounds which hold for possibly non-symmetric algorithms as well.

\section{Proof of Theorem~\ref{Thm:BestArmSubset}\label{sec:proof_of_best_arm_subset}}
	In Theorem~\ref{Thm:BestArmSubset}, we consider the simplified case $\nu_2 = \nu_3 = \dots = \nu_n$, and fix a symmetrized algorithm $\Alg$, and the best arm has mean $\nu_1$. We will actually prove a slightly more technical version of Theorem~\ref{Thm:BestArmSubset}, from which the theorem follows as an immediate corollary.

	Recall that the intuition behind Theorem~\ref{Thm:BestArmSubset} is to show that, until sufficiently many samples has been taken, one cannot differentiate between the best arm, and other arms which exhibit large statistical deviations. To this end, we construct a simulator which is truthful as long as the top arm is not sampled too often. Fix a $\tau \in \N$ and define the simulator $\Sim$ by
	\begin{eqnarray}
	\Sim: \hat{X}_{[s,a]}\mapsfrom \begin{cases}  X_{[s,a]} & s \le \tau \\
	\overset{i.i.d.}{\sim} \nu_2 & s > \tau
	\end{cases}
	\end{eqnarray}
	Since $\Sim$ only depends on the first $\tau$ samples from $\nu$, we can use Fano's inequality to get control on events under simulated instances:
	\begin{lemma}\label{SpecializedFano} For any random, $\calF_T$-measurable subset $\calA$ of $[n]$ with $|\calA| = m$,
	\begin{eqnarray}
	\Pr_{\pi \sim \bS_n}\Pr_{\Sim(\pi(\nu)),\Alg}[\pi(1) \notin \calA] &\ge& 1 - \frac{ \tau \Delta^2 + \log 2 }{\log(n/m)}
	\end{eqnarray}
	 where $\Delta^2 := \KL(\nu_{1},\nu_2) + \KL(\nu_2,\nu_{1})$.
	\end{lemma}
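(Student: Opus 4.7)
The plan is to apply a list-decoding version of Fano's inequality. Since $\pi \sim \bS_n$ is uniform, the random variable $Y := \pi(1)$ is uniform on $[n]$, and the random $m$-element set $\calA$ is $\calF_T$-measurable and so a function of the data $\Alg$ observes; the goal is to show that this ``list decoder'' $\calA$ misses $Y$ with high probability. First I would derive the standard list-decoding Fano bound: letting $E = \mathbf{1}\{Y \in \calA\}$ and using the crude bound $\log(n-m) \le \log n$, decomposing the conditional entropy as $H(Y \mid D) \le H(E) + \Pr[Y \in \calA]\log m + \Pr[Y \notin \calA]\log n$ and combining with $H(Y \mid D) = \log n - I(Y; D)$ yields
\[
\Pr[Y \notin \calA] \;\ge\; 1 - \frac{I(Y; D) + \log 2}{\log(n/m)},
\]
where $D$ denotes all data used by $\Alg$ to produce $\calA$ (the samples it reads off $\Trshat$ together with the internal randomness $\xi_\Alg, \xi_\Sim$). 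So it suffices to prove $I(\pi(1); D) \le \tau \Delta^2$.

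The key structural observation about the simulator is that $\Trshat$ depends on $\pi$ only through its first $\tau$ rows: for $s \le \tau$, $\Xhat_{[s,a]}$ is distributed as $\nu_{\pi^{-1}(a)}$ independently across $(s,a)$, while for $s > \tau$ it is an independent draw from $\nu_2$ regardless of $\pi$. Since $\nu_a = \nu_2$ for every $a \ne 1$, the only column whose first-$\tau$ marginals are not $\nu_2^{\otimes \tau}$ is $a = \pi(1)$, where they are instead $\nu_1^{\otimes \tau}$. Tensorizing $\KL$ across the independent columns of the first $\tau$ rows, for any two permutations $\pi, \pi'$ I get
\[
\KL\bigl(\Trshat \mid \pi \,\big\|\, \Trshat \mid \pi'\bigr) \;\le\; \tau\bigl(\KL(\nu_1,\nu_2) + \KL(\nu_2,\nu_1)\bigr) \;=\; \tau \Delta^2,
\]
with the two summands coming from the columns $\pi(1)$ and $\pi'(1)$ at which the per-sample laws disagree (vacuous if $\pi(1) = \pi'(1)$).

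To lift this pairwise bound to a mutual-information bound, I would invoke convexity of $\KL$ in its second argument: letting $P_\pi$ denote the law of $\Trshat$ given $\pi$ and $\bar P := \E_{\pi' \sim \bS_n} P_{\pi'}$,
\[
I(\pi; \Trshat) \;=\; \E_{\pi \sim \bS_n} \KL(P_\pi \,\|\, \bar P) \;\le\; \E_{\pi, \pi'}\, \KL(P_\pi \,\|\, P_{\pi'}) \;\le\; \tau \Delta^2.
\]
Data processing then gives $I(\pi(1); D) \le I(\pi; D) \le I(\pi; \Trshat) \le \tau \Delta^2$, since $D$ is a (randomized) function of $\Trshat$ together with randomness independent of $\pi$. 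Substituting into the Fano bound will complete the proof. The hardest part is really just bookkeeping: one must verify that $\Pr_{\pi \sim \bS_n}\Pr_{\Sim(\pi(\nu)),\Alg}[\cdot]$ is literally the joint law under which $\pi$ is uniform and $\Trshat$ is drawn from $\Sim(\pi(\nu))$, so that the mutual-information calculation applies to the very probability measure appearing in the lemma statement, and that one indeed only needs the weak list-decoding form of Fano rather than anything finer.
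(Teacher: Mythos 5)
Your proposal is correct and follows essentially the same route as the paper: the paper's ``Inexact Fano'' lemma is exactly the list-decoding Fano bound you derive (applied with $X=\pi(1)$, $Y=\Trshat$, $E=\I\{\pi(1)\in\calA\}$, denominator $\log(n/m)$), and the paper bounds the mutual information the same way, via Jensen/convexity reducing to pairwise KLs between simulated transcripts, which tensorize over the independent entries and give $\tau\Delta^2$ because only the first $\tau$ samples of the two swapped columns differ. The only cosmetic differences are that you carry the full permutation $\pi$ and invoke data processing to pass to $\pi(1)$, and you re-derive the list-Fano inequality rather than citing it.
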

	If we take $\calA = \widehat{S}$ to be the best estimate for the top arm in the above lemma, we conclude that unless $\tau \gg \Delta^{-2}\log n$, running $\Alg$ on $\Sim(\nu)$ won't be able to identify the best arm. Hence, $\Alg$ will need to break the simulator by collecting more than $\tau$ samples. More subtly, we can take $\calA$ to be the set of the first $m$ arms pulled more than $\tau = \Delta^{-2} \log (n/m)$ times (where $|\calA| < m$ if fewer than $m$ arms are pulled $\tau$ times). By Lemma~\ref{SpecializedFano}, $\mathcal{A}$ won't contain the top arm a good fraction of the time. But we know from the previous argument that the top arm is sampled at least $\tau$ times, which implies that with constant probability, there will be $m$ arms pulled at least $\tau$ times. In summary, we arrive at the following proposition which restates Theorem~\ref{Thm:BestArmSubset}, as well as proving that the top arm must be pulled $\Omega(\Delta^{-2} \log n)$ times:
	\begin{proposition}\label{Appendix:BestArmProp}
	Let $\Alg$ be $\delta$-correct, consider a game with best arm $\nu_1$ and $n-1$ arms of measure $\nu_2$. For any $\beta \ge 0$, define $S_{m,\beta} := \left\{a: N_a(T) > \Delta^{-2}\left(\beta \log\frac{n}{m} - \log 2 \right)\right\}$. Then,
	\begin{eqnarray}
	&&\label{topline} \Pr_{\pi \sim \bS_n}\Pr_{\pi(\nu),\Alg}\left[N_{\pi(1)}(T) \ge \Delta^{-2}  (\beta \log n - \log 2) \right] \ge 1 - (\beta + \delta) \quad \mathrm{and} \\
	&&\label{Appendix:secondline} \Pr_{\pi \sim \bS_n} \Pr_{\pi(\nu),\Alg}\left[ \{\pi(1) \in S_{m,\beta}\} \wedge  \{|S_{m,\beta}| \ge m\}\right] \ge 1 -  2\beta  - \delta 
	\end{eqnarray}
	\end{proposition}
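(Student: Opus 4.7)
The plan is to decompose the failure event $V^c$, where $V := \{\pi(1) \in S_{m,\beta}\} \cap \{|S_{m,\beta}| \ge m\}$, into two pieces and bound each via Lemma~\ref{SpecializedFano}. Write $A := \{\pi(1) \in S_{m,\beta}\}$ and $B := \{|S_{m,\beta}| \ge m\}$ and abbreviate $\tau_m := \Delta^{-2}(\beta\log(n/m) - \log 2)$, so that $V^c \subseteq A^c \cup (A \cap B^c)$. Because $\tau_m \le \tau_1$, one has $S_{m,\beta} \supseteq S_{1,\beta}$, and so (\ref{topline}) immediately yields $\Pr_{\pi\sim\bS_n}\Pr_{\pi(\nu),\Alg}[A^c] \le \beta + \delta$. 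The main technical task is to prove $\Pr_{\pi\sim\bS_n}\Pr_{\pi(\nu),\Alg}[A \cap B^c] \le \beta$.

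For this, I introduce the $\calF_T$-stopping time $\sigma := \inf\{t \le T : N_{\pi(1)}(t) > \tau_m\}$ (with $\sigma = \infty$ if no such $t$) and the $\calF_T$-measurable set $\calA$ consisting of the $m$ arms with the largest pull counts at time $\min(\sigma,T)$, using a deterministic tie-breaker and padding with default arms if fewer than $m$ arms have yet been pulled; by construction $|\calA|=m$. Applying Lemma~\ref{SpecializedFano} under the simulator $\Sim$ of threshold $\tau_m$ to this $\calA$ gives $\Pr_{\pi\sim\bS_n}\Pr_{\Sim(\pi(\nu)),\Alg}[\pi(1) \in \calA] \le \beta$. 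The key structural observation is that on $A \cap B^c$ we have $\sigma < \infty$ (from $N_{\pi(1)}(T) > \tau_m$) and, since $t \mapsto |S_{m,\beta}(t)|$ is nondecreasing, $|\{a : N_a(\sigma) > \tau_m\}| \le |S_{m,\beta}(T)| < m$; because $\pi(1)$ has just crossed the threshold at time $\sigma$, it must lie among the $m$ most-pulled arms at that moment, hence $A \cap B^c \subseteq \{\sigma < \infty,\ \pi(1) \in \calA\}$. This inclusion depends only on pull counts and so holds identically under $\pi(\nu)$ and $\Sim(\pi(\nu))$.

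The remaining and most delicate step is transferring the simulator bound back to $\pi(\nu)$, despite the fact that $A$ is disjoint from the usual truthful event $W = \{N_{\pi(1)}(T) \le \tau_m\}$, so the argument of (\ref{topline}) does not apply directly. The trick is to write
\begin{equation*}
\{\sigma < \infty,\ \pi(1) \in \calA\} \;=\; \bigsqcup_{t \ge 1} \{\sigma = t,\ \pi(1) \in \calA\},
\end{equation*}
each summand of which lies in $\calF_{t-1}$ and forces $N_{\pi(1)}(t-1) = \tau_m$. On such a summand, the algorithm has only read samples from $\pi(1)$ at positions $\le \tau_m$ (the truthful window of $\Sim$), while samples from every other arm are $\nu_2$-distributed under both $\pi(\nu)$ and $\Sim(\pi(\nu))$; the joint law of $\calF_{t-1}$ therefore agrees under the two measures. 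Summing over $t$ gives $\Pr_{\pi(\nu),\Alg}[\sigma<\infty,\,\pi(1)\in\calA] = \Pr_{\Sim(\pi(\nu)),\Alg}[\sigma<\infty,\,\pi(1)\in\calA] \le \Pr_{\Sim(\pi(\nu)),\Alg}[\pi(1)\in\calA]$; averaging over $\pi\sim\bS_n$ and invoking the Lemma delivers the promised bound $\beta$. Combining yields $\Pr_{\pi\sim\bS_n}\Pr_{\pi(\nu),\Alg}[V^c] \le (\beta+\delta) + \beta = 2\beta+\delta$, which is exactly (\ref{Appendix:secondline}). The main obstacle is precisely this passage across the ``truthfulness boundary''; the stopping time $\sigma$ is the correct device, because $A \cap B^c$ is already visible in $\calF_{\sigma-1}$ (few arms crossing by time $T$ implies few by time $\sigma$), which is exactly where the simulator remains faithful.
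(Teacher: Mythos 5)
Your argument for Equation~\ref{Appendix:secondline} is correct, and it is essentially the paper's strategy: bound the event that the best arm is under-pulled via the part-one argument, and on the remaining event exhibit an $\calF_T$-measurable set of at most $m$ arms, determined by a stopping time that lies inside the simulator's truthful window, which must contain $\pi(1)$; then apply Lemma~\ref{SpecializedFano} under $\Sim$ and transfer back. The only real difference is the device: the paper stops at $S_m$, the last time at most $m$ arms have crossed the threshold (capped at $T$), and takes $\calA = A_{S_m}$, while you stop at the first time $\sigma$ that $\pi(1)$ crosses and take the $m$ most-pulled arms at that moment; the two constructions are interchangeable, and your version avoids the explicit symmetrization step since Lemma~\ref{SpecializedFano} already averages over $\pi$. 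One phrase should be tightened: it is not true that ``the joint law of $\calF_{t-1}$ agrees under the two measures'' unconditionally (with positive probability the algorithm may have read arm $\pi(1)$ beyond position $\tau_m$ before time $t-1$, where the transcripts differ); what is true, and what you actually use, is that the two measures agree on $\calF_{t-1}$-events that force $N_{\pi(1)}(t-1) \le \tau_m$, such as $\{\sigma = t,\ \pi(1)\in\calA\}$. This is the same level of informality as the paper's Claim~\ref{AsmClaim}, so it is not a substantive objection.

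The genuine gap is that Equation~\ref{topline} is half of the proposition and you never prove it: you invoke it as a black box to bound $\Pr[A^c]$. Nothing in your write-up uses $\delta$-correctness directly, yet $\delta$ appears in the claimed bounds, precisely because correctness enters only through the proof of~\ref{topline}. That proof is not a formality: one must take the truthful event $W = \{N_{\pi(1)}(T)\le\tau\}$, use $\delta$-correctness to write $\Pr_{\pi(\nu),\Alg}[W] \le \Pr_{\pi(\nu),\Alg}[W\wedge\{\widehat{S}=\pi(1)\}]+\delta$, use truthfulness of $W$ under $\Sim(\pi(\nu))$ to replace the first term by $\Pr_{\Sim(\pi(\nu)),\Alg}[W\wedge\{\widehat{S}=\pi(1)\}] \le \Pr_{\Sim(\pi(\nu)),\Alg}[\widehat{S}=\pi(1)]$, and then apply Lemma~\ref{SpecializedFano} with $\calA=\{\widehat{S}\}$ (i.e.\ $m=1$), giving $\Pr[W]\le \delta + (\Delta^2\tau+\log 2)/\log n$. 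With that piece supplied, your decomposition $V^c \subseteq A^c \cup (A\cap B^c)$ and the $\beta+\delta$ plus $\beta$ accounting close the proof exactly as in the paper; without it, the proposal establishes only the second display conditional on the first.
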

	Note that Theorem~\ref{Thm:BestArmSubset} follows from Equation~\label{Appendix:secondline} by taking $\beta = 1/16$ and $\delta \le 1/8$. 

		\begin{proof}[Proof of Proposition~\ref{Appendix:BestArmProp}] Throughout the proof, will use the elementary inequality that for any events $A$ and $B$, $\Pr[A] \le \Pr[A \cap B] + \Pr[B^c]$ without comment. Let's start by proving Equation~\ref{topline}. Define $W_{\pi} = \{ N_{\pi(1)}(T) \le \tau\} $, and let $W$ to be corresponding events when $\pi$ is taken to be the identity. We see $W_{\pi}$ is $ \calF_T$-measurable, and if $\Alg^{\bS_n}$ is the symmetrized algorithm obtained from $\Alg$, then
		\begin{eqnarray}
	\Pr_{\pi \sim \bS_n}\Pr_{\pi(\nu),\Alg}[W_{\pi}] = \Pr_{\nu,\Alg^{\bS_n}}[W] 
	\end{eqnarray}
	Hence, it suffices to assume that $\Alg$ is symmetric and work with $\pi$ being the identity. Since the first $\tau$ samples from arm $1$ under $\Sim(\nu)$ are i.i.d from $\nu_1$, and since all samples from all other arms are i.i.d from $\nu_2$, we see that
	\begin{claim}\label{Btruthful} $W$ is a truthful event for $\Sim(\nu)$. 
	\end{claim}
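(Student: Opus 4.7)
The plan is to show that on the event $W = \{N_1(T) \le \tau\}$, the algorithm $\Alg$ only ever queries transcript entries whose distributions are identical under $\nu$ and $\Sim(\nu)$, so the joint distribution of $(\calF_T, W)$ coincides under the two measures. This is essentially a bookkeeping argument: it is critical that the simulator only modifies (a) samples from arm $1$ at positions $s > \tau$ and (b) samples from arms $a \ne 1$ at positions $s > \tau$, and that the modification in case (b) is innocuous.

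First I would catalogue which entries of $\widehat{\Trs}$ the algorithm actually reads by time $T$. Since the $t$-th sample consumed by $\Alg$ is $\widehat{X}_{[a_t, N_{a_t}(t)]}$, the set of transcript entries touched up to time $T$ is exactly $\{\widehat{X}_{[a,s]} : a \in [n],\, 1 \le s \le N_a(T)\}$. On the event $W$ we have $N_1(T) \le \tau$, so every entry read from arm $1$ satisfies $s \le \tau$, and by the simulator's definition these entries are equal to $X_{[s,1]} \sim \nu_1$, matching $\nu$. For arms $a \ne 1$, the algorithm may read entries with $s > \tau$; under $\Sim(\nu)$ these are i.i.d.\ draws from $\nu_2$, and under $\nu$ they are i.i.d.\ draws from $\nu_a = \nu_2$ (since we are in the setting $\nu_2 = \nu_3 = \dots = \nu_n$). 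So these distributions coincide as well.

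Next I would package this into a formal coupling/distributional identity. Fix any $E \in \calF_T$. Writing $E \wedge W$ as a measurable event in the samples-and-internal-randomness process $(\xi_\Alg, \widehat{\Trs})$, one observes that $E \wedge W$ depends only on $\xi_\Alg$ together with the collection $\{\widehat{X}_{[a,s]} : s \le \tau\} \cup \{\widehat{X}_{[a,s]} : a \ne 1, s > \tau\}$, because the indicator of $W$ ensures the algorithm halts before consulting any $\widehat{X}_{[1,s]}$ with $s > \tau$. By construction of $\Sim$, the joint law of this collection under $\Sim(\nu)$ is identical to the joint law of the corresponding $X_{[a,s]}$'s under $\nu$. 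Therefore $\Pr_{\Sim(\nu),\Alg}[E \wedge W] = \Pr_{\nu,\Alg}[E \wedge W]$, which is exactly truthfulness.

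The only mild subtlety, and the place I would be most careful, is the measurability issue: one must argue that the stopping time $T$ restricted to $W$ is determined solely by the "safe" entries of the transcript, so that the event $E \wedge W$ really is a function of those safe entries alone. This follows because $N_1(t)$ is $\calF_{t}$-measurable and the algorithm's sampling rule is $\calF_{t-1}$-measurable, so one can verify inductively that conditional on $N_1(t) \le \tau$ for all $t \le T$, the trajectory $(a_1, X_{a_1,1}, \dots, a_T, X_{a_T,T}, \widehat{S})$ is a deterministic function of $\xi_\Alg$ and the safe entries; I would spell out this induction to close the argument cleanly.
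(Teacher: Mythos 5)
Your proof is correct and follows essentially the same reasoning as the paper, which justifies Claim~\ref{Btruthful} in one line by noting that on $W$ the algorithm never reads arm $1$ beyond its first $\tau$ samples, while the post-$\tau$ replacements for arms $a \ne 1$ leave their law unchanged since $\nu_a = \nu_2$. You simply spell out the bookkeeping (the ``safe entries'' and the induction showing the trajectory on $W$ is a function of them) that the paper leaves implicit.
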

	This implies that
	\begin{eqnarray*}
	\Pr_{\nu,\Alg}[W] &\le& \Pr_{\nu,\Alg}[W \wedge \{\hat{a} = 1\}] + \Pr_{\nu,\Alg}[\{\hat{a} \ne 1 \}]\\
	&\le& \Pr_{\nu,\Alg}[W \wedge \hat{a} = 1] + \delta\\
	&\overset{(i)}{=}& \Pr_{\Sim(\nu),\Alg}[W \wedge \hat{a} = 1] + \delta\\
	&\le& \Pr_{\Sim(\nu),\Alg}[\hat{a} = 1] + \delta ~,
	\end{eqnarray*}
	where $(i)$ follows from the following Claim~\ref{Btruthful}. Hence, Lemma~\ref{SpecializedFano} implies
	\begin{eqnarray}
	\Pr_{\Sim(\nu),\Alg}[\{\hat{a} = 1\}] \le \frac{ \Delta^{2}\tau + \log 2}{\log n}~.
	\end{eqnarray}
	For the next part, we may also assume without loss of generality that $\Alg$ is symmetric. Define the set $A_{t} = \{i: N_i((t+1)\wedge T) > \tau\}$ (these are the set of arms that have been pulled more than $\tau$ times), and let $S_m = T \wedge \sup \{t: |A_t| \le m\}$ ($S_m$ is the last time that $A_t$ is no larger than $m$). Note that $S_m$ is indeed a stopping time wrt to $\{\calF_t\}$, since the $t+1$-th arm to be sampled is determined by all the samples seen up to time $t$, and internal randomness in $\Alg$. Hence, we have that
	\begin{eqnarray*}
	\Pr_{\nu,\Alg}\left[\left\{\left|\{a:N_a(T) > \tau\}\right| \le m\right\}\right] &\le& \Pr_{\nu,\Alg}\left[W^c \cap \left\{\left|\{a:N_a(T) > \tau\}\right| \le m\right\}\right] + \Pr_{\nu,\Alg}[W]\\
	&\overset{(i)}{\le}& \Pr_{\nu,\Alg}[1 \in A_{S_m} ] + \Pr_{\nu,\Alg}[W]~,
	\end{eqnarray*}
	where $(i)$ follows because, under the event $\{\left|\{a: N_a(T) > \tau\}\right| \le m\}$, then $S_m = T$,  and thus $A_{S_m} = \{a: N_a(T) > \tau\}$. But on $W^c$, $N_1(T) > \tau$, and thus $1 \in A_{S_m}$. $ \Pr_{\nu,\Alg}[W]$ is already bounded by part $1$; for part $2$ we need the following claim to invoke a reduction: 
	\begin{claim}\label{AsmClaim}
	$\Pr_{\Sim(\nu),\Alg}[1 \in A_{S_m}] = \Pr_{\nu,\Alg}[1 \in A_{S_m}]$. Moreover, if $\Alg$ is symmetrized, then $\Pr_{\pi \sim \bS_n} \Pr_{\Sim(\pi(\nu)),\Alg}[\pi(1) \in A_{S_m}] = \Pr_{\Sim(\nu),\Alg}[1 \in A_{S_m}]$. 
	\end{claim}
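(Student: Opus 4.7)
The plan is to prove both equalities by reducing the event $\{1 \in A_{S_m}\}$ to an event measurable with respect to the natural filtration \emph{before} the algorithm ever draws a simulator-modified sample. Concretely, I would introduce the $\{\calF_t\}$-stopping time $\sigma := \inf\{t \ge 1 : N_1(t) > \tau\}$, i.e.\ the first step at which arm $1$ has been pulled for the $(\tau{+}1)$-th time. The simulator $\Sim$ leaves every sample $\hat X_{[s,a]}$ with $s \le \tau$ unchanged, and since arms $a \ne 1$ have marginal law $\nu_2$ under both $\nu$ and $\Sim(\nu)$, $\Alg$'s interaction with the transcript is identically distributed under the two measures up through time $\sigma - 1$: at that moment arm $1$ has been sampled exactly $\tau$ times, so no modified sample has yet been read.

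For the first equality, the key step is to rewrite $\{1 \in A_{S_m}\}$ as a $\calF_{\sigma-1}$-measurable event. Noting that $A_t = \{i : N_i((t+1) \wedge T) > \tau\}$ is non-decreasing in $t$ (so $S_m$ is simply the last $t \le T$ with $|A_t| \le m$), I would establish, by a short case split on whether $S_m = T$ or $S_m < T$, the identity
\begin{eqnarray*}
\{1 \in A_{S_m}\} \;=\; \{\sigma \le T\} \cap \{|A_{\sigma - 1}| \le m\}.
\end{eqnarray*}
Both $\{T \ge \sigma\}$ (a decision the stopping rule commits to by time $\sigma - 1$) and $\{|A_{\sigma - 1}| \le m\}$ (a function of counts through time $\sigma \wedge T$, which on $\{T \ge \sigma\}$ depends only on the algorithm's state at time $\sigma{-}1$) are $\calF_{\sigma-1}$-measurable. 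Coupling the transcripts so that $\hat X_{[s,a]} = X_{[s,a]}$ for $s \le \tau$ then shows the event has identical probability under $\nu$ and $\Sim(\nu)$, which is exactly the first equality.

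For the second equality, it suffices to show that for each fixed $\pi \in \bS_n$ one has $\Pr_{\Sim(\pi(\nu)),\Alg}[\pi(1) \in A_{S_m}] = \Pr_{\Sim(\nu),\Alg}[1 \in A_{S_m}]$; averaging over $\pi$ is then trivial. To verify this $\pi$-by-$\pi$ equality I would check that the law of the modified transcript $\widehat{\Trs}$ under $\Sim(\pi(\nu))$ is exactly the $\pi$-relabeling of its law under $\Sim(\nu)$: for $s > \tau$ every entry is i.i.d.\ $\nu_2$ irrespective of $\pi$, while for $s \le \tau$ the row indexed by $\pi(1)$ is drawn from $\nu_1$ and all other rows from $\nu_2$, which is precisely the $\pi$-permutation of the row distributions in $\Sim(\nu)$. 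Invoking the defining property of a symmetric $\Alg$ then identifies $N_{\pi(a)}(t)$ under $\Sim(\pi(\nu))$ with $N_a(t)$ under $\Sim(\nu)$ jointly in $a$, and in particular matches $\{\pi(1) \in A_{S_m}\}$ with $\{1 \in A_{S_m}\}$.

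The most delicate point will be verifying the set-theoretic identity $\{1 \in A_{S_m}\} = \{\sigma \le T\} \cap \{|A_{\sigma - 1}| \le m\}$: I must confirm that on $\{\sigma > T\}$ the event is automatically empty (arm $1$ is never sampled past $\tau$, hence $1 \notin A_t$ for any $t \le T$), and that when $\sigma \le T$ the inequality $S_m \ge \sigma - 1$ is equivalent to $|A_{\sigma - 1}| \le m$, which follows from the monotonicity of $|A_t|$ in $t$ and the definition $S_m = T \wedge \sup\{t : |A_t| \le m\}$. Beyond this structural check, everything else is routine bookkeeping using the stopping-time measurability of $\sigma$ and $T$ together with the definition of a symmetric algorithm.
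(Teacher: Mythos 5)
Your proposal is correct and takes essentially the same route as the paper: the paper's (one-line) argument is that $\{1 \in A_{S_m}\}$ is determined by the part of the interaction the simulator leaves truthful, plus label-invariance of the simulator and symmetry of $\Alg$ for the permutation average. Your stopping-time formalization via $\sigma$ and the identity $\{1 \in A_{S_m}\} = \{\sigma \le T\} \cap \{|A_{\sigma-1}| \le m\}$ is just a careful, correct rendering of the paper's assertion that the event "depends only on the first $\tau$ samples from arm $1$."
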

	\noindent The first part of this claim holds because then event $\{1 \in A_{S_m}\}$ depends only on the first $\tau$ samples drawn from arm $1$, and the first $\tau$ samples from arm $1$ are i.i.d from $\nu_1$ under both the simulator and the true measure. The second part of the claim follows directly from the definition of symmetry, since the even $1 \in A_{S_m}$ does not depend on how the arms are labeled. Thus, invoking Lemma~\ref{SpecializedFano},
	\begin{eqnarray}
	\Pr_{\pi \sim \bS_n} \Pr_{\Sim(\pi(\nu)),\Alg}[\pi(1) \in A_{S_m}] \le \frac{ \Delta^{2}\tau + \log 2}{\log n/m}~.
	\end{eqnarray}
	Putting pieces together, we conclude that
	\begin{eqnarray}
	\Pr_{\nu,\Alg}\left[\left\{\left|\{i : N_i(T) > \tau\}\right| \le m \right\}\right] \le \delta + (\Delta^2 \tau + \log 2)(\frac{1}{\log n}+\frac{1}{\log (n/m)}) \le \delta +2 \frac{\Delta^2 \tau + \log 2}{\log (n/m)}~.
	\end{eqnarray}
	Setting $\tau = \Delta^{-2}(\beta \log(n/m) - \log 2)$ concludes. 
	\end{proof}
\subsection{Proof of Lemma~\ref{SpecializedFano}}
For $i \in \{1,\dots,n\}$, let $\nu^{(i)}$ denote the instance where $\nu_i = \nu_1$, and $\nu_j = \nu_2$ for $j \ne i$. Let $\Pr_i$ denote the law of the transcipt $\Sim(\nu^{(i)})$. We beging by applying a slight generalization of Fanos Inequality:
\begin{lemma}[Inexact Fano]\label{InexactFano} Let $X$ be a random variable, and let $E$ be a binary random variable, and suppose that $Y$ is a random variables such that $X$ and $E$ are conditionally independent given $Y$ (i.e. $X \to Y \to E$ form a Markov Chain). Then,
	\begin{eqnarray}
	P(E = 1) \ge 1 - \frac{  I(X;Y) +\log(2) }{H(X) - H(X|E = 0,\hat{X})}
	\end{eqnarray}
	where $I(X;Y)$ denotes the mutual information between $X$ and $Y$, $H(X)$ denotes the entropy of $X$, and  $H(X|E = 0)$ denotes the conditional entropy of $X$ given $E = 0$ (for details, see\cite{cover2012elements})
\end{lemma}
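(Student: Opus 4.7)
The plan is to adapt the textbook proof of Fano's inequality to this generalized setting, where $E$ is an arbitrary binary random variable downstream of $Y$ rather than specifically an indicator of decoding error. The key quantity to bound from two sides will be the conditional entropy $H(X \mid \hat{X})$, where $\hat{X}$ is interpreted as a deterministic function of $Y$ (for instance, the random set $\calA$ in the intended application to Lemma~\ref{SpecializedFano}); the Markov-chain hypothesis is used only once, to justify the data processing step.

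First I would establish the lower bound
\begin{equation*}
H(X \mid \hat{X}) \;\ge\; H(X \mid Y) \;=\; H(X) - I(X;Y),
\end{equation*}
which is immediate from the chain $X \to Y \to \hat{X}$ and the data processing inequality. For the matching upper bound, I would introduce $E$ through the chain-rule decomposition $H(X \mid \hat{X}) \le H(X,E \mid \hat{X}) = H(E \mid \hat{X}) + H(X \mid E, \hat{X})$, using $H(E \mid \hat{X}) \le H(E) \le \log 2$ since $E$ is binary. Splitting on the value of $E$ gives
\begin{equation*}
H(X \mid E, \hat{X}) \;=\; (1 - P(E{=}1))\, H(X \mid E{=}0, \hat{X}) + P(E{=}1)\, H(X \mid E{=}1, \hat{X}),
\end{equation*}
and bounding the last term crudely by $H(X \mid E{=}1, \hat{X}) \le H(X)$ yields
\begin{equation*}
H(X) - I(X;Y) \;\le\; \log 2 + (1 - P(E{=}1))\, H(X \mid E{=}0, \hat{X}) + P(E{=}1)\, H(X).
\end{equation*}
Solving this linear inequality for $P(E{=}1)$ produces exactly the claimed bound.

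The argument is essentially mechanical once the two-sided sandwich on $H(X \mid \hat{X})$ is in place, and I do not anticipate any real obstacle. The only subtlety worth highlighting is the asymmetric role of the two values of $E$: on $\{E=0\}$ we retain the informative term $H(X \mid E{=}0, \hat{X})$ (which, in the intended application, will be at most $\log m \ll \log n = H(X)$), while on $\{E=1\}$ we simply concede the full entropy $H(X)$. The $\log 2$ overhead in the numerator is precisely the price of introducing the binary auxiliary variable $E$ via the chain rule and cannot be avoided without further structure on how $E$ depends on $\hat{X}$.
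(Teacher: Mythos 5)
Your outline is the standard Fano-type argument that the paper itself simply defers to \cite{cover2012elements} for, and most of your steps are fine: the data-processing lower bound $H(X\mid \hat{X})\ge H(X\mid Y)=H(X)-I(X;Y)$ (indeed the only use of the Markov hypothesis), the chain-rule insertion of $E$ at a cost of $H(E\mid\hat X)\le\log 2$, and the split over the two values of $E$. The genuine gap is the ``crude'' step $H(X\mid E{=}1,\hat{X})\le H(X)$. Conditioning on an \emph{event} can increase entropy, and the Markov structure does not rescue this: take $X$ with $\Pr[X=0]=1/2$ and $\Pr[X=i]=1/(2N)$ for $i\in[N]$, let $Y=E=\I(X\neq 0)$ and $\hat{X}\equiv 0$; then $X\to Y\to E$ holds, yet $H(X\mid E{=}1,\hat{X})=\log N$ while $H(X)=\log 2+\tfrac12\log N$, so your inequality fails for $N\ge 5$. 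The bound that is always valid is $H(X\mid E{=}1,\hat{X})\le\log|\mathcal{X}|$, which turns your final display into $P(E{=}1)\ge\bigl(H(X)-I(X;Y)-\log 2-H(X\mid E{=}0,\hat{X})\bigr)\big/\bigl(\log|\mathcal{X}|-H(X\mid E{=}0,\hat{X})\bigr)$; this coincides with the claimed inequality only when $H(X)=\log|\mathcal{X}|$, i.e.\ when $X$ is uniform.

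In fairness, the same example shows the lemma as literally stated (with $H(X)$ in the denominator and no uniformity assumption) is itself false: with $N=100$ the claimed right-hand side is $1-\frac{2\log 2}{\log 2+\frac12\log 100}\approx 0.54$, while $P(E{=}1)=1/2$. The statement is only used in the paper with $X=\pi(1)$, $\pi\sim\bS_n$ uniform, where $H(X)=\log n=\log|\mathcal{X}|$, and there your argument (and the lemma) are correct, giving exactly the bound needed for Lemma~\ref{SpecializedFano}. So your proof is fine for the regime in which the lemma is actually applied, but as a proof of the statement as written, the step $H(X\mid E{=}1,\hat{X})\le H(X)$ must either be replaced by $\log|\mathcal{X}|$ (weakening the conclusion accordingly) or be supported by an added hypothesis such as uniformity of $X$; you should also note explicitly that the final division requires $H(X)>H(X\mid E{=}0,\hat{X})$, the bound being vacuous otherwise.
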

To apply the bound, let $\pi \sim \bS_n$, let $X = \pi(1)$, let $Y$ denote the transcript $\widehat{\Trs}$ under the distribution $\Sim(\nu^{(\pi(1))}$, and let $E = \I(\{\pi(1) \in \mathcal{A}\}) = \I(\{X \in \mathcal{A}\})$. Then $X \to Y \to E$ forms a markov chain. Since $|\mathcal{A}| = m$, on the event $E = 0$, $X$ can take at most $m$ values, namely those in $\mathcal{A}$. Hence, using a standard entropy bound \cite{cover2012elements}, $H(X|E) \le \log m$. On the other hand, since $X$ is uniform, $H(X) = \log n$, and thus $H(X) - H(X|E = 0,\hat{X}) \ge \log n/m$. 

Thus, to conclude, it suffices to show that $ I(X;Y) \le \tau \Delta^2$. Let $\bar{\Pr}$ denote the marginal of $Y$, that is, $\Pr_{X}$, where $X \overset{unif}{\sim} \{1,\dots,n\}$. Then, a standard application of Jensen's inequality (see \cite{cover2012elements} for details) gives
\begin{eqnarray}
I(X;Y) &:=&  \sum_{i=1}^n\Pr(X = i)\KL(\Pr_i,\bar{\Pr}) \le \sum_{j,i=1}^M\Pr(X = j)\Pr(X = i)\KL(\Pr_i,\Pr_j)
\end{eqnarray}
For $i = j$, $\KL(\Pr_i,\Pr_i) = 0$. For $i \ne j$, we use the independence of the entries of the transcript to compute
\begin{eqnarray*}
\KL(\Pr_i,\Pr_j) &=& \sum_{a=1}^n \sum_{s=1}^{\infty} \KL(\widehat{X}_{[a,s]} \big{|} \Sim(\nu^{(i)}), \widehat{X}_{[a,s]} \big{|} \Sim(\nu^{(j)})\\
&\overset{(i)}{=}&  \sum_{s=1}^{\tau} \KL(\widehat{X}_{[i,s]} \big{|} \Sim(\nu^{(i)}), \widehat{X}_{[i,s]} \big{|} \Sim(\nu^{(j)}) + \KL(\widehat{X}_{[j,s]} \big{|} \Sim(\nu^{(i)}), \widehat{X}_{[j,s]} \big{|} \Sim(\nu^{(j)}) \\
&\overset{(ii)}{=}&  \sum_{s=1}^{\tau} \KL(\nu_1,\nu_2) + \KL(\nu_2,\nu_1) = \tau \Delta^2~,
\end{eqnarray*}
where $(i)$ follows since the law of $\widehat{X}_{[a,s]}$ differs between $\Sim(\nu^{(i)})$ and $\Sim(\nu^{(j)})$ for $a \in \{i,j\}$ and $s \in \{1,\dots,\tau\}$, and $(ii)$ follows from the construction of our simulator. Hence, 
\begin{eqnarray}
I(X;Y) &\le&  \sum_{j,i=1}^M\Pr(X = j)\Pr(X = i)\KL(\Pr_i,\Pr_j) = \sum_{j,i=1}^M \frac{\tau\Delta^2}{n^2}\I(i \ne j) \le \tau\Delta^2
\end{eqnarray}

\section{Lower Bounds for Distinct Measures\label{sec:distinct_measure_lb}}
\subsection{High Level-Intuition For Proposition~\ref{GaussianProp}\label{TiltingsExplanation}}

	As in the other results in this paper, the key step boils down to designing an effective simulator $\Sim$. Unlike the prior bounds, we need to take a lot of care to quantify how $\Sim$ limits information between instances. 

	To make things concrete, suppose that the base instance is $\nu$ with best arm index $1$, and where the measures $\nu_i$ are Gaussians with means $\mu_i$ and variance $1$. For clarity, suppose that the gaps are on the same order, say $\Delta \le \mu_1 - \mu_b \le 2\Delta$ for all $b \ge 2$. Since our goal is to show that the best arm must be pulled $\gtrsim \Delta^{-2} \log n$ times on average, a natural choice of a truthful event is $W = \{N_1(T) \le \tau\}$  for some $\tau \gtrsim \Delta^{-2} \log n$. This suggests that our simulator should always return the true samples $X_{[a,s]}$ from $\Trs$ for all arms $a \ne 1$, and the first $\tau$ samples from arm $1$. 

	Once $\tau$ samples are taken from arm $1$, our $\Sim$ will look at the first $\tau$ samples from each arm $j \ne 1$, and pick an index $\widehat{j}$ such that the first $\tau$ samples $X_{[\widehat{j},1]},\dots,X_{[\widehat{j},\tau]}$ ``look like'' they were drawn from the distribution $\nu_1$. We do this by defining events $E_j$ which depend on the first $\tau$-samples from arm $j$, as well as some internal random bits $\xi_j$, and choosing $\widehat{j}$ uniformly from the arms $j$ for which $E_j$ holds. In other word, our simulator is given by 
	\begin{eqnarray}
	&\Sim(\nu): \widehat{X}_{[a,s]} \mapsfrom \begin{cases} X_{[a,s]} & a \ne 1 \\
	X_{[1,s]} & a = 1, s \le \tau \\
	\overset{i.i.d}{\sim} \nu_{\widehat{j}} & a = 1, s > \tau
	\end{cases}\quad \text{where} \\
	\ &\widehat{j} = \begin{cases} \overset{unif}{\sim} \{j \ne 1: E_j \text{ holds}\} & \text{if at least one } E_j \text{ holds}\\
	1 & \text{otherwise}
	\end{cases}
	\end{eqnarray}
	Our construction will ensure that at least one $E_j$ will hold with constant probability. Hence, the only information which can distinguish between the arms $1$ and $\widehat{j} \ne 1$ are the first $\tau$ samples from each arm. But if the first $\tau$ samples from arm $\widehat{j}$ ``look'' as if they were drawn from $\nu_1$, then this information will be insufficient to tell the arms apart. In other words, we can think of $\Sim$ as forcing the learner to conduct an adversarially-chosen, \emph{data-dependent} two-hypothesis test: is the best arm $1$ or arm $\widehat{j}$ ?

	What's left is to understand why we should even expect to find an arm $\widehat{j}$ whose first $\tau = O(\Delta^{-2} \log n)$ samples resemble those from arm $1$. The intuition for this is perhaps best understood in terms of Gaussian large-deviations. Indeed, consider the empirical means of each arm $\widehat{\mu}_{j,\tau} = \frac{1}{\tau}\sum_{s = 1}^{\tau} X_{[j,s]}$. Then for any \emph{fixed} $j \in [n]$, we have that $|\mu_j - \widehat{\mu}_j| \lesssim \sqrt{1/\tau}$. However, Gaussian large deviations imply that for \emph{some} arm $\widehat{j} \in \{2,\dots,n\}$, the empirical mean will overestimate its true mean by a factor of $\approx \sqrt{\log(n)/\tau}$ (that is $\widehat{\mu}_{\widehat{j},\tau} \ge \mu_{\widehat{j}} + \Omega(\sqrt{\log(n/\delta)/\tau})$). By the assumption that $\Delta \le \mu_1 - \mu_{\widehat{j}} \le 2\Delta$, the large deviation combined with a confidence interval around arm $1$ implies that unless $\tau \gtrsim \Delta^{-2} \log n$, there will be an arm $\widehat{j}$ whose empirical mean is larger the empirical mean of arm $1$; thereby ``looking'' like the best arm.

	Unfortunately, this intuition is not quite enough for a proof. Indeed, if $\tau \ll \Delta^{-2} \log n$, then with good probability the the arm with the greatest empirical mean  will not be best arm. This leads to a paradox: suppose $\tau \ll \Delta^{-2} \log n$, and the learner is given a choice between two arms - one of which has the highest empirical mean, and one of which is assured to be the best arm. Then the learner should guess that the best arm is the one with the \emph{lesser} of the two empirical means!
\subsection{Tiltings}
	To get around this issue, we pick $\widehat{j}$ using a technique called ``tilting'', which is the key technical innovation behind this result. Given $\tau$ samples from arm $j$, and access to some random bits $\xi_j$, the goal is to construct an event $E_j$ (depending on the $\tau$ samples from arm $j$, as well as $\xi_j$) such that conditioning on $E_j$ ``tilts'' the distribution of the first $\tau$ samples from an arm $j$ to ``look like'' samples from arm $1$. Since the sample mean is a sufficient statistic for Gaussians, it is sufficient to ensure that the distribution of the sample means $\widehat{\mu}_{j,\tau}$ are close in distribution. The basic idea is captured in the following proposition:
	\begin{proposition}[Informal]\label{InformalProp}
	Let $\xi_j \sim \mathrm{Uniform}[0,1]$ and independent of everything else, and let $p \in (0,1)$. If  $\tau \ll \Delta^{-2} \log (1/p)$, then there exists a deterministic function $\calK_j: \R \to [0,1]$ such that the following holds: Define the event $E_{j} = \{\calK_j(\widehat{\mu}_{j,\tau}) \le \xi_j \}$. Then, the conditional distribution of $\widehat{\mu}_{j,\tau}$ on $E_j$  ``looks like'' the distribution of $\widehat{\mu}_{1,\tau}$, in the sense that the $\TV(\widehat{\mu}_{1,\tau}; \widehat{\mu}_{j,\tau} \big{|} E_j) = o(1)$. Moreover, $E_j$ holds with probability at least $p$.
	\end{proposition}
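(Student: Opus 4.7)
The plan is to build $\calK_j$ as a truncated rejection-sampling acceptance rule. Let $P$ denote the law of $\widehat{\mu}_{j,\tau} \sim \mathcal{N}(\mu_j,1/\tau)$ and $Q$ the law of $\widehat{\mu}_{1,\tau} \sim \mathcal{N}(\mu_1,1/\tau)$. A direct calculation gives the log-likelihood ratio
\begin{eqnarray*}
\log \frac{dQ}{dP}(x) = \tau \Delta \bigl(x - \tfrac{\mu_1+\mu_j}{2}\bigr), \qquad \Delta = \mu_1 - \mu_j.
\end{eqnarray*}
Because this ratio is unbounded above, we cannot dominate $Q$ by any finite multiple of $P$ globally; instead I will truncate to a region on which $dQ/dP$ is controlled. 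Define the half-line $I = \{x : x \le \bar{\mu} + L/(\tau\Delta)\}$ where $\bar{\mu} = (\mu_1+\mu_j)/2$ and $L$ is a threshold chosen below, and set
\begin{eqnarray*}
\calK_j(x) = \begin{cases} 1 - e^{-L}\,\dfrac{dQ}{dP}(x) & x \in I, \\ 1 & x \notin I. \end{cases}
\end{eqnarray*}
Note $\calK_j(x) \in [0,1]$ on $I$ by definition of $I$. With $E_j = \{\calK_j(\widehat{\mu}_{j,\tau}) \le \xi_j\}$ and $\xi_j$ uniform on $[0,1]$ independent of $\widehat{\mu}_{j,\tau}$, the conditional density of $\widehat{\mu}_{j,\tau}$ on $E_j$ is proportional to $(1-\calK_j(x))P(x) = e^{-L} Q(x) \mathbf{1}_I(x)$, i.e.\ exactly $Q$ truncated to $I$.

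Next I would tune $L$ to balance the two competing goals. The conditional TV distance from $Q$ is exactly $Q(I^c)$, since truncating $Q$ to $I$ only removes mass. Because $X \sim Q$ satisfies $\Pr[X > \bar{\mu} + L/(\tau\Delta)] = \Pr[\mathcal{N}(0,1) > -\sqrt{\tau}\Delta/2 + L/(\sqrt{\tau}\Delta)]$, setting $L = \tfrac{\tau\Delta^2}{2} + s\sqrt{\tau}\Delta$ for a slowly growing parameter $s \to \infty$ yields $Q(I^c) \le e^{-s^2/2} = o(1)$. The acceptance probability is
\begin{eqnarray*}
\Pr[E_j] = \int_I (1-\calK_j(x))P(x)\,dx = e^{-L}\int_I dQ = e^{-L}(1 - o(1)).
\end{eqnarray*}
Substituting the chosen $L$ gives $\Pr[E_j] \ge (1-o(1))\exp\bigl(-\tau\Delta^2/2 - s\sqrt{\tau}\Delta\bigr)$. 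Under the hypothesis $\tau \ll \Delta^{-2}\log(1/p)$, the dominant term $\tau\Delta^2/2 \ll \log(1/p)$, so choosing $s$ large enough to kill the $o(1)$ terms while keeping $s\sqrt{\tau}\Delta = o(\log(1/p))$ ensures $\Pr[E_j] \ge p$, completing the claim.

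The main obstacle is the calibration in the last step: we are trying to force two demands on the same threshold, namely enough mass on $I$ that the tilted law really resembles $Q$, and enough envelope slack that acceptance happens with probability $p$. The tightness of the tradeoff is precisely what forces the scaling $\tau \lesssim \Delta^{-2}\log(1/p)$; quantitatively, one must track how fast the slack parameter $s$ can grow relative to $\sqrt{\log(1/p)}$ so that both error terms are genuinely $o(1)$ with explicit rates usable in Proposition~\ref{GaussianProp}. A secondary, mostly bookkeeping, issue is that the simulator ultimately uses this construction applied to every arm $j \ne 1$, so the rates need to be uniform enough that one can take a union / large deviation bound showing at least one $E_j$ holds with constant probability; this is where the $\log(n)$ factor in Proposition~\ref{GaussianProp} will come from when $p \asymp 1/n$.
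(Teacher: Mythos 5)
Your construction is correct and is essentially the paper's own: the formal proof (the ``censored tilting'' kernels $\calK_j(x) \propto e^{\tau(\theta_1-\theta_j)x}\I(e^{\tau(\theta_1-\theta_j)x}\le c_j)$, the identity $\TV(\Pr[\cdot],\Pr[\cdot\,|\,B])=\Pr[B^c]$, and the exact change-of-measure computation of $\Pr(E_j)$ in Lemmas~\ref{Censoring}--\ref{Lem:Balance}) is precisely your truncated rejection-sampling rule, with the conditional law of $\widehat{\mu}_{j,\tau}$ given $E_j$ equal to the top arm's law restricted to the region where the likelihood ratio is bounded. The only substantive difference is that the paper chooses the cutoff via a Markov/MGF bound valid for general one-parameter exponential families, giving the exponent $\kl(\theta_1,\theta_j)+\kl(2\theta_1-\theta_j,\theta_j)$, whereas your direct Gaussian tail calibration yields the sharper exponent $\tau\Delta^2/2 + s\sqrt{\tau}\Delta$ in the Gaussian case --- immaterial for the informal $o(1)$/$\ll$ statement.
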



	\begin{figure}[t!]
\centering
\begin{subfigure}[b]{.45\textwidth}
   \includegraphics[width=\textwidth]{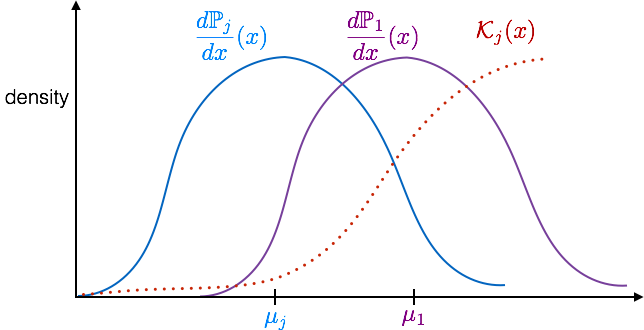}
  \caption{Before Tilting}
  \label{fig:sub1}
\end{subfigure}%
\begin{subfigure}[b]{.45\textwidth}
  \includegraphics[width=\textwidth]{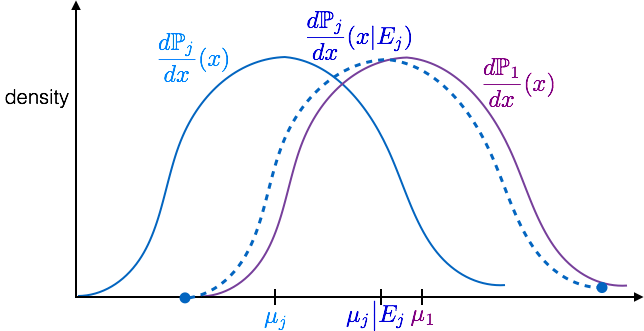}
  \caption{After Tilting}
  \label{fig:sub2}
\end{subfigure}
\caption{ The event $E_j$ depends on the samples from arm $j$. Thus, conditioning on $E_j$ ``tilts'' the distribution of the those samples.}
\label{fig:reweight}
\end{figure}

	Since $\xi_j$ is uniform, $\calK_j(\widehat{\mu}_{j,\tau}) = \Pr_{\xi_j}(E_j\big{|} \widehat{\mu}_{j,\tau}) $. Thus, up to normalization, conditioning on the event $E_j$ reweights the density of $\widehat{\mu}_{j,\tau}$ by the value of $\calK_j(\widehat{\mu}_{j,\tau})$, thereby tilting its shape to resemble the distribution of $\widehat{\mu}_{1,\tau}$. This is depicted in Figure~\ref{fig:reweight}. The random numbers $\xi_j$ are essential to this construction, since they let us reweight the distribution of $\widehat{\mu}_{j,\tau}$ by fractional values. Since $\calK_j$ is bounded above by one, reweighting doesn't come for free, and our major technical challenge is to choose $\calK_j$ so as to ensure that $\Pr(E_j) = \Exp[\calK_j(\widehat{\mu}_{j,\tau})]$ is at least $p$. This sort of construction is known in the probability literature as ``tilting'', and is used in the Herbst argument in the concentration-of-measure literature (Chapter 3 of \cite{raginsky2014concentration})\footnote{Unlike our construction, the Herbst argument tilting reweights by an unbounded function $\calK_j$ (rather than a function bounded in $[0,1]$), and thus those tiltings \emph{cannot} be interpreted as a conditioning on an event}.  To the best our knowledge, this constitutes the first use of tiltings for proving information theoretic lower bounds.

	To conclude our simulator argument, we apply Proposition~\ref{InformalProp} with $p = (10/n)$ and $\tau \approx \Delta^{-2} \log\frac{n-1}{10} \approx \Delta^{-2} \log n$. Then for any fixed arm $j$, $E_j$ will hold with  probability at least say $10/(n-1)$ (say $n \gg 10$), on which the first $\tau$ samples from arm $j$ will ``look-like'' samples from $\nu_1$, in $\TV$ distance. Hence, with probability $1 - (1 - 10/(n-1))^{n-1} \ge 1 - e^{-10} \ge .999$, there will exists an arm $\widehat{j}$ such that $E_{\widehat{j}}$ holds, and thus the first $\tau$ samples from $\widehat{j}$ ``look-like'' samples from $\nu_1$, in $\TV$. In particular, if our simulator chooses $\widehat{j}$ uniformly from the arms $j$ such that $E_j$ holds (and takes $\widehat{j} = 1$ otherwise), then with probability $.999$, our simulator can confuse the learner by showing her two arms the distribution of whose samples look like $\nu_1$, as needed.

 	\subsubsection{Data-Dependent Two-Hypothesis Testing}

 	Recall above that $\Sim$  forces the learner to perform a data-dependent two hypothesis test  - ``is the best arm $1$ or $\widehat{j}$ '' - chosen adversarially from the set of two-hypothesis tests ``is the best arm $1$ or $j$'' for $j \in \{2,\dots,n\}$. We emphasize that the argument from Proposition~\ref{InformalProp} is very different than the familiar reductions to $n$-way or composite hypothesis testing problems. Observe that
	\begin{enumerate}
		\item By giving the learner the choice between only arms $1$ and $\widehat{j}$, the adversarial two-hypothesis test reduces the learner's number of possible  hypotheses for the best arm from $n$ down to $2$. Thus, this problem is potentially easier than the $n$-way hypothesis test corresponding to best-arm identification. In particular, Proposition~\ref{InformalProp} is \emph{not implied by Fano's Equality or other $n$-way testing lower bounds} 
	\item By the same token, the adversarial two-hypothesis test is also potentially easier than the \emph{composite} hypothesis test: is $1$ the best arm, or is another arm $j \in \{2,\dots,n\}$ the best arm? Hence, Proposition~\ref{InformalProp} is \emph{not implied by lower bounds on composite hypothesis tests}.
		\item On the other hand, since $\widehat{j}$ depends on the observed data in this adversarial way, the adversarial two-way hypothesis is strictly \emph{harder} than the standard oblivious two-hypothesis test which \emph{fixes} $j$ \emph{in advance} and asks: is the best arm $1$ and some $j$? Indeed, fixing the two-hypothesis test in advance does not force the learner to incur a log-factor in the sample complexity.
	\end{enumerate}


\subsection{Statement of the Main Technical Theorem\label{sec:main_tec_theorem}}

	Our main theorem is stated for single parameter exponential families\citep{nielsen2009statistical}, which we define for the sake of completeness in Section~\ref{CensoredSection}.
	\begin{theorem}\label{Big Main Theorem}  Let $\nu$ be a measure with best arm such that each $\nu_j$ comes from an exponential family $\{p_{\theta}\}_{\theta \in \Theta}$ with corresponding parameter by $\theta_j \in \Theta$, and that $[\theta_j,2\theta_1 - \theta_j] \subset \Theta$.  Suppose that $\Alg$ is $\delta$-correct, in the sense that for any $\pi \in \bS_n$, $\Alg$ can identify the unique arm of $\pi(\nu)$ with density $\nu_1 = p_{\theta_1}$ among with probability of error at most $\delta$. Then, for all $\alpha > 0$
	\begin{equation}
	\begin{aligned}
	&\Exp_{\pi \sim \bS_n}\Pr_{\pi(\nu),\Alg}[\{N_{\pi(1)}(T) > \DeltaEff^{-2}\log(n/\alpha) \}] \ge \sup_{\kappa \in [0,1]}\frac{1}{2} \left(1 - e^{-\alpha\kappa(1-\kappa)}\right) \left(1 - 2\kappa \right) - \delta \\
	&\mathrm{where} \quad \DeltaEff^2 = \max_{j > 1} \kl(\theta_1,\theta_j) + \kl(2\theta_1  - \theta_j,\theta_j )
	\end{aligned}
	\end{equation}
	\end{theorem}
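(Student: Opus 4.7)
}
The argument follows the simulator-plus-tilting roadmap sketched in Section~\ref{TiltingsExplanation}, generalizing the swap-based proof of Theorem~\ref{Thm:PermLB} so that the single ``swap partner'' $b$ is replaced by a \emph{data-dependent} partner $\widehat{j}$ chosen adversarially from among the $n-1$ suboptimal arms. My first step is to apply Lemma~\ref{SymmetryLemma} to reduce to a $\delta$-correct symmetric $\Alg$, after which it suffices to show
$\Pr_{\nu,\Alg}[N_1(T) > \tau]$ is at least the claimed RHS with $\tau = \DeltaEff^{-2}\log(n/\alpha)$. The truthful event will be $W := \{N_1(T) \le \tau\}$, so that the simulator may modify the samples $X_{[1,s]}$ for $s > \tau$ without being detected.

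The heart of the proof is a tilting lemma making Proposition~\ref{InformalProp} quantitative for exponential families. For each $j\ne 1$ and each $\kappa\in[0,1]$, I would construct a measurable map $\calK_j:\R^\tau\to[0,1]$ such that, for an auxiliary $\xi_j\sim\mathrm{Unif}[0,1]$ independent of everything else, the event $E_j := \{\xi_j \le \calK_j(X_{[j,1]},\dots,X_{[j,\tau]})\}$ satisfies (a) $\Pr_{\nu,\Alg}[E_j] \ge \alpha\kappa(1-\kappa)/(n-1)$ and (b) the conditional law of $(X_{[j,1]},\dots,X_{[j,\tau]})$ given $E_j$ is within $\TV$-distance $\kappa$ of the law of $\tau$ i.i.d.\ draws from $\nu_1 = p_{\theta_1}$. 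The natural candidate is a clipped likelihood ratio $\calK_j = c\cdot\min\{1,\,(dp_{\theta_1}/dp_{\theta_j})^{\otimes\tau}\}$; a Chernoff-style computation on the log-likelihood ratio, which in an exponential family is affine in the sufficient statistic, reduces the bound on $\Exp_{\nu_j^{\otimes\tau}}[\calK_j]$ to evaluating the log-partition function at the \emph{reflected} parameter $2\theta_1-\theta_j$. This is precisely where the hypothesis $[\theta_j,2\theta_1-\theta_j]\subset\Theta$ is needed, and where the quantity $\DeltaEff^2 = \max_{j>1}\kl(\theta_1,\theta_j)+\kl(2\theta_1-\theta_j,\theta_j)$ enters: substituting $\tau = \DeltaEff^{-2}\log(n/\alpha)$ exactly balances the tilting budget against (a) and (b).

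Given the tilting lemma, I define $\widehat{j}$ to be uniform over $\{j\ne1 : E_j \text{ holds}\}$ when that set is non-empty and $\widehat{j}=1$ otherwise, and let the simulator
\begin{eqnarray*}
\Sim(\nu):\ \widehat{X}_{[a,s]} \mapsfrom \begin{cases} X_{[a,s]} & a\ne 1, \\ X_{[1,s]} & a=1,\ s\le\tau,\\ \overset{iid}{\sim}\nu_{\widehat{j}} & a=1,\ s>\tau,\end{cases}
\end{eqnarray*}
using simulator randomness $\xi_{\Sim}$ independent of $\Trs$. On $W$, $\Alg$ never accesses the altered samples, so $W$ is truthful on $\Sim(\nu)$. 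For any fixed $j^*\ne 1$ and the swap instance $\nu^{(1,j^*)}$ (analogous to Section~\ref{Sec:Permutations}), the tilting property ensures that conditional on $\widehat{j}=j^*$ and ``some $E_j$ holds'', the laws of $\Trshat$ under $\Sim(\nu)$ and $\Sim(\nu^{(1,j^*)})$ are within $\TV\le\kappa$. I would then invoke Proposition~\ref{Prop:SimLeCam} with $W_1 = W$ and $W_2 = \{N_{j^*}(T)\le\tau\}$, taking the distinguishing event to be ``$\Alg$ outputs arm $1$'', which $\delta$-correctness forces to differ by $\ge 1-2\delta$ across the two instances. After rearranging and using symmetry of $\Alg$ to identify $\Pr_{\nu^{(1,j^*)},\Alg}[N_{j^*}(T)>\tau]$ with the permutation-averaged probability, averaging the resulting inequality over $j^*$ while conditioning on the event $\{\bigcup_j E_j\}$, which has probability $\ge 1-(1-\alpha\kappa(1-\kappa)/(n-1))^{n-1}\ge 1-e^{-\alpha\kappa(1-\kappa)}$, produces the claimed bound $\tfrac{1}{2}(1-e^{-\alpha\kappa(1-\kappa)})(1-2\kappa)-\delta$ once one optimizes over $\kappa$.

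The main obstacle will be Step~2: building $\calK_j$ with the right simultaneous control of its first moment and of the TV-distance of the tilted conditional law, and sharpening constants so the exponent is exactly $\DeltaEff^{-2}$ rather than a larger quantity. A secondary subtlety is the coupling in Step~3 — the identity of $\widehat{j}$ must be close to uniform on the set of firing indices so that averaging over $j^*$ really does correspond (after symmetrizing $\Alg$) to averaging over $\pi\in\bS_n$; here I expect to use that the $\xi_j$'s are i.i.d., so exchangeability of the firing events $E_j$ across suboptimal arms of the \emph{same} parameter class handles most cases, with the $\max_{j>1}$ in $\DeltaEff^2$ absorbing the worst arm when parameters differ.
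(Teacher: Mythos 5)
Your plan follows essentially the same route as the paper's own proof: symmetrize via Lemma~\ref{SymmetryLemma}, tilt the suboptimal arms with a censored likelihood-ratio kernel whose Chernoff computation at the reflected parameter $2\theta_1-\theta_j$ yields exactly the paper's Lemma~\ref{Lem:Balance} (and hence $\DeltaEff$), build the simulator that swaps in $\nu_{\widehat{j}}$ after $\tau$ pulls of arm $1$, and conclude by a conditional Le Cam argument over the events $\{\widehat{j}=j\}$ combined with the independence of the $E_j$'s, just as in Lemma~\ref{ConditionalLeCam} and Proposition~\ref{BreakingLowerBound}. The one place your sketch is slightly optimistic is the conditional coupling step, where the paper's Fact~\ref{TVfact} (via the sufficient-statistic argument of Claim~\ref{SufficienStatisticClaim}) only gives a transcript-level $\TV$ bound of $2\kappa$ rather than $\kappa$, which is precisely where the $(1-2\kappa)$ factor in the final bound comes from.
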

	Furthermore, observe that for Gaussian rewards with unit variance, $\DeltaEff^2$ corresponds exactly with the largest squared gap $(\theta_1 - \theta_j)^2$. By considering best-arm subproblems with the top $m \le n$ arms, we arrive at the following corollary, which immediate specializes to Proposition~\ref{GaussianProp} with Gaussian rewards: 
	\begin{corollary}\label{CorollaryToBigMain}
	In setting of Theorem~\ref{Big Main Theorem}, we have the following lower bound for every $m \le n$ and $\alpha > 0$,
	\begin{equation}
	\begin{aligned}
	&\Exp_{\pi \sim \bS_n}\Pr_{\pi(\nu),\Alg}[\{N_{\pi(1)}(T) > \DeltaEff(m)^{-2}\log(m/\alpha) \}] \ge \sup_{\kappa \in [0,1]}\frac{1}{2} \left(1 - e^{-\alpha\kappa(1-\kappa)}\right) \left(1 - 2\kappa \right) - \delta \\
	&\mathrm{where} \quad \DeltaEff(m)^2 \text{ is the } (m-1)\text{-th smallest value of } \{\kl(\theta_1,\theta_j) + \kl(2\theta_1  - \theta_j,\theta_j )\}_{j} 
	\end{aligned}
	\end{equation}
	\end{corollary}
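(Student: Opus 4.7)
The plan is to deduce the corollary from Theorem~\ref{Big Main Theorem} applied to an $m$-arm sub-instance, via a padding reduction. Let $J = \{j_2,\dots,j_m\} \subset \{2,\dots,n\}$ index the $m-1$ arms that minimize $\kl(\theta_1,\theta_j) + \kl(2\theta_1 - \theta_j, \theta_j)$, set $j_1 := 1$, and define the $m$-arm instance $\nu^{(m)} := (\nu_{j_1},\dots,\nu_{j_m})$. By the choice of $J$, the effective gap of $\nu^{(m)}$ in the sense of Theorem~\ref{Big Main Theorem} equals exactly $\DeltaEff(m)^2$; arm $j_1 = 1$ is still the unique best arm (we only removed arms with smaller parameter); and the interval assumption $[\theta_{j_i}, 2\theta_1 - \theta_{j_i}] \subset \Theta$ is inherited from the hypothesis on $\nu$. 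So it suffices to produce, from $\Alg$, a $\delta$-correct algorithm $\Alg'$ on $\bS_m(\nu^{(m)})$ whose permutation-averaged number of pulls of the position of arm $1$ agrees with that of $\Alg$ on $\bS_n(\nu)$.

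To construct $\Alg'$, let $J^c := \{1,\dots,n\}\setminus \{j_1,\dots,j_m\}$, so $|J^c| = n-m$. Given an input of the form $\sigma(\nu^{(m)})$ for some $\sigma \in \bS_m$, $\Alg'$ draws an injection $\phi:[m]\hookrightarrow [n]$ and a bijection $\psi: J^c \to [n]\setminus \phi([m])$, each uniform and independent of everything else, and then simulates $\Alg$ on a virtual $n$-arm instance in which position $\phi(i)$ is served by the samples from position $i$ of $\sigma(\nu^{(m)})$, while samples from each remaining position $p$ are generated internally from $\nu_{\psi^{-1}(p)}$. When $\Alg$ halts and outputs position $p$, $\Alg'$ outputs $\phi^{-1}(p)$ if $p \in \phi([m])$ and a fixed default otherwise. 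The virtual instance is exactly $\pi(\nu)$ for the permutation $\pi \in \bS_n$ defined by $\pi(j_i) := \phi(\sigma(i))$ on $\{j_1,\dots,j_m\}$ and $\pi(j) := \psi(j)$ on $J^c$; in particular $\pi(1) = \phi(\sigma(1))$.

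Because $\Alg$ is $\delta$-correct on $\bS_n(\nu)$, it outputs $\pi(1)$ with probability at least $1-\delta$, so $\Alg'$ outputs $\sigma(1)$ with the same probability, proving $\Alg'$ is $\delta$-correct on $\bS_m(\nu^{(m)})$. Moreover, the number of times $\Alg'$ pulls position $\sigma(1)$ equals the number of times $\Alg$ pulls position $\pi(1)$. A direct counting argument shows that when $\sigma$, $\phi$, $\psi$ are drawn independently and uniformly, the induced $\pi$ is uniform on $\bS_n$, yielding
\[
\Exp_{\sigma \sim \bS_m}\Pr_{\sigma(\nu^{(m)}),\Alg'}\!\left[N_{\sigma(1)}(T) > \tau\right] \;=\; \Exp_{\pi \sim \bS_n}\Pr_{\pi(\nu),\Alg}\!\left[N_{\pi(1)}(T) > \tau\right].
\]
Applying Theorem~\ref{Big Main Theorem} to $\Alg'$ on $\nu^{(m)}$ at $\tau = \DeltaEff(m)^{-2}\log(m/\alpha)$ finishes the proof.

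The main obstacle I anticipate is cleanly verifying the uniformity of $\pi$ on $\bS_n$: $\phi$ makes $\phi([m])$ a uniform $m$-subset of $[n]$, $\sigma$ then randomizes the bijection between $\{j_1,\dots,j_m\}$ and $\phi([m])$, and $\psi$ uniformly fills in the complement; these three pieces must assemble into a uniformly random element of $\bS_n$. Once this coupling is checked, the rest is bookkeeping: confirming that $\Alg'$ inherits $\delta$-correctness and that the per-arm pull counts line up exactly under the simulation.
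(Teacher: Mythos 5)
Your proposal is correct and follows essentially the same route as the paper: the corollary is obtained by restricting to the sub-instance consisting of arm $1$ and the $m-1$ arms with smallest $\kl(\theta_1,\theta_j)+\kl(2\theta_1-\theta_j,\theta_j)$, and reducing via an algorithm that internally simulates the removed arms from their known distributions so that correctness and the pull count of the best arm transfer (this is exactly the content of Lemma~\ref{restriction_bound_lemma} and Claim~\ref{AlgBClaim}). The only cosmetic difference is that you fold the permutation-averaging into a random embedding $\phi,\psi$ and check uniformity of the induced $\pi$ directly, whereas the paper first symmetrizes the algorithm (Lemma~\ref{SymmetryLemma}) and then pads with a fixed embedding; the two devices are equivalent.
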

\subsection{Censored Tilting\label{CensoredSection}}
	In this section, we are going to formally construct the events $E_j$. We will first illustrate the idea for a a generic collection of random variables, and then show how to specialize for bandits. For each $j \in [n]$, we consider a Markov Chain $Z_j \to E_j = 1$, where $Z_j$ is a real valued random variable, and $E_j$ is an event depending only on $Z_j$. Under suitable technical conditions, the distribution $(Z_j,\I(E_j))$ is then defined by a Markov Kernel $\calK_j: \R \to [0,1]$, where $\Pr(E_j | Z_j = z) = \calK_j(z)$. Conversely, any such Markov Kernel induces a joint distribution on $(Z_j,\I(E_j))$. To replicate the malicious adversary from Proposition~\ref{InformalProp}, we can represent $E_j$ explictly by letting $\xi_j \sim \mathrm{Uniform}[0,1]$ and independent of everything else, and setting $E_j = \{\xi_j \le \calK_j(z)\}$. 

	We will say that $\calK_j$ is \emph{nondegenerate} if $\Pr(E_j = 1) \equiv \Exp[\calK_j(Z_j)] > 0$. When $Z_j$ has a density to a measure $\eta(x)$, and $\calK_j$ is nongenerate, then Baye's rule implies
	\begin{eqnarray}\label{EmpiricalTilting}
	\frac{d\Pr_{Z_j}}{d\eta}(x| E_j) = \dfrac{ \calK_j(x)}{\Exp[\calK_j(Z_j)]}\cdot \frac{d\Pr_{Z_j}}{d\eta}(x) 
	\end{eqnarray}
	In other words, conditioning on the event $E_j$ ``tilts'' the density of $Z_j$ by a function $\calK_j(x) / \Exp[\calK_j(Z_j)] $. We will call tiltings that arise in this fashion a \emph{censored tilting}. Indeed, imagine an observer who tries to measure $Z_j$. On $E_j$, she gets a proper measurement of $Z_j$, but on $E_j^c$ she is censored. Then the censored tilting $\Pr(Z_j | E_j)$ describes the distribution of the observers non-censored measurements. Keeping with this metaphor, we  will call $E_j$ the \emph{measuring event} induced by $\calK_j$. 
	\begin{remark}
	Tiltings appear as a step in the Herbst Argument for proving concentration of measure bounds from Log-Sobolev inequalitys. In that setting, one tilts by potentially unbounded functions $g_j \ge 0$ that need only satisfy the integrability condition $\Exp[g_j(Z_j)] < \infty$. In our setting, this tilting to arise from a function $\calK_j \in [0,1]$, since $\calK_j$ corresponds to a conditional probability operator. 
	\end{remark}
	To apply this idea to $\MAB$, fix a measure $\nu$ with decreasing means $\mu_1 > \mu_2 \ge \dots \mu_n$. Given a transcript $\Trs$ and $\tau \in \N$, let $\overline{X}_{j,\tau} = \frac{1}{\tau}\sum_{s = 1}^{\tau} X_{[j,s]}$. We will simply write $\overline{X}_j$ when $\tau$ is clear from context. To simplify things, we shall assume that all the measures $\nu_j$ come from a cannonical exponential family of densities $p_{\theta}(x) = \exp(\theta x - A(\theta))d\eta(x)$ with respect to a measure $\eta(x)$ where $\theta$ lie in a convex subset $\Theta$ of $\R$. It is well known that this implies that 
	\begin{lemma}[\cite{nielsen2009statistical}]
	Suppose that $\nu_j$ has density $p_{\theta_j}(x) = \exp(\theta_j x - A(\theta_j))$ with respect to a measure $\eta$. Then,
	\begin{enumerate}
	\item $\overline{X}_{j,\tau}$ is a sufficient statistic for $X_{[j,1]},\dots,X_{[j,\tau]}$
	\item There exists a measure $\eta_{\tau}(x)$ on $\R$, such that $\overline{X}_{j,\tau}$ has density $q_{\tau \theta_j} (x) := \exp( \tau \theta_j x - \tau A(\theta_j))d\eta_{\tau}(x)$ with respect $\eta_{\tau}(x)$.
		\end{enumerate}
		In particular, the densities $q_{\tau \theta}(x)$ for $\theta \in \Theta$ form an exponential family.

	\end{lemma}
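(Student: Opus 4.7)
The plan is to prove both claims by direct manipulation of the joint density, treating this as a standard exponential-family computation plus an application of the Fisher--Neyman factorization.

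For part (1), I would first write out the joint density of $X_{[j,1]},\dots,X_{[j,\tau]}$ with respect to the product measure $\eta^{\otimes \tau}$:
\[
\prod_{s=1}^{\tau} p_{\theta_j}(X_{[j,s]}) \;=\; \exp\!\left(\theta_j \sum_{s=1}^{\tau} X_{[j,s]} - \tau A(\theta_j)\right) \;=\; \exp\!\left(\tau \theta_j \overline{X}_{j,\tau} - \tau A(\theta_j)\right).
\]
Since this factors as $g(\overline{X}_{j,\tau},\theta_j) \cdot h(X_{[j,1]},\dots,X_{[j,\tau]})$ with $h \equiv 1$, the Fisher--Neyman factorization theorem immediately yields that $\overline{X}_{j,\tau}$ is sufficient for $\theta_j$.

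For part (2), the idea is to push forward $\eta^{\otimes \tau}$ under the averaging map. Let $\phi: \R^\tau \to \R$ denote the map $\phi(x_1,\dots,x_\tau) = \tfrac{1}{\tau}\sum_s x_s$, and define $\eta_\tau := \phi_* \eta^{\otimes \tau}$, the pushforward measure on $\R$. Then for any Borel set $B \subseteq \R$,
\[
\Pr(\overline{X}_{j,\tau} \in B) \;=\; \int_{\phi^{-1}(B)} \exp\!\left(\tau \theta_j \phi(x) - \tau A(\theta_j)\right) d\eta^{\otimes \tau}(x) \;=\; \int_B \exp\!\left(\tau \theta_j y - \tau A(\theta_j)\right) d\eta_\tau(y),
\]
where the second equality uses the change-of-variables rule for pushforwards together with the fact that the integrand depends on $x$ only through $\phi(x)$. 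This identifies $q_{\tau \theta_j}(y) = \exp(\tau \theta_j y - \tau A(\theta_j))$ as the density of $\overline{X}_{j,\tau}$ with respect to $\eta_\tau$. The final assertion --- that $\{q_{\tau\theta}\}_{\theta \in \Theta}$ is again a one-parameter exponential family --- is then immediate by inspection: the natural parameter is $\tau\theta$, the sufficient statistic is $y$ itself, the base measure is $\eta_\tau$, and the log-partition function is $\tau A(\theta)$.

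I do not expect any serious obstacles here; the only mild subtlety is justifying that $\eta_\tau$ as defined is a well-defined $\sigma$-finite measure on $\R$ (so that densities with respect to it are unambiguous) and that the pushforward change of variables applies to the unsigned integrand $\exp(\tau\theta y - \tau A(\theta))$, which is standard. No new ideas beyond the factorization theorem and a one-line change of variables are needed.
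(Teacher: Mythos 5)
The paper states this lemma only as a citation to standard exponential-family theory (\cite{nielsen2009statistical}) and gives no proof of its own, so there is nothing to compare against; your factorization-plus-pushforward argument is the standard textbook proof and is correct. The one subtlety you flag resolves itself: since the exhibited density $\exp(\tau\theta_j y - \tau A(\theta_j))$ is strictly positive and integrates to one against $\eta_\tau$, the sets $\{y: \exp(\tau\theta_j y - \tau A(\theta_j)) > 1/k\}$ have $\eta_\tau$-measure at most $k$ and exhaust $\R$, so $\eta_\tau = \phi_*\eta^{\otimes\tau}$ is automatically $\sigma$-finite (and the same argument applied to $p_{\theta_j}$ shows $\eta$ is $\sigma$-finite, which is what the Fisher--Neyman factorization theorem requires); moreover, the change-of-variables identity for pushforwards holds for any nonnegative measurable integrand, so no additional justification is needed there.
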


	Now, for each $j$, consider tiltings of the form $\calK_j(x) = \frac{e^{\tau (\theta_1 - \theta_j) x}}{c_j} \I( e^{\tau (\theta_1 - \theta_j) x} \le c_j)$. Then,
	\begin{eqnarray}\label{TiltedDensity}
	\frac{d\Pr_{Z_j}}{d\eta}(x| E_j) \propto e^{\tau \theta_j x}\cdot e^{\tau (\theta_1 - \theta_j) x)} \cdot \I( e^{\tau (\theta_1 - \theta_j) x} \le c_j) = e^{\tau \theta_1 x} \I( e^{\tau (\theta_1 - \theta_j) x} \le c_j)
	\end{eqnarray}
	Since $\frac{d\Pr_{Z_j}}{d\eta}(x| E_j)$ is a density, the uniquess of normalization implies the following facts:
	\begin{lemma}\label{Censoring} Let $\calK_j(x) = \frac{e^{\tau (\theta_1 - \theta_j) x}}{c_j} \I( e^{\tau \theta_j x} \le c_j)$, and let $E_j$ be the corresponding measuring event. Then,
	\begin{enumerate} 
	\item The censored tilting of $\overline{X}_j | E_j$ has the distribution of $\overline{X}_1 \big{|}\{ e^{\tau(\theta_1 - \theta_j) \overline{X}_j} \le c \}$
	\item $\TV( \Pr_{\overline{X}_1}, \Pr_{\overline{X}_j}[\cdot \big{|} E_j]) = \Pr(e^{\tau(\theta_1 - \theta_j) \overline{X}_1} > c) $
	\item $\Pr(E_j) = \frac{1}{c}(1- Q_j(E_j))\cdot e^{\tau  \{A (\theta_1) - A(\theta_j)\}} $
	\end{enumerate}
	\end{lemma}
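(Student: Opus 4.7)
The plan is to prove all three claims by direct calculation, starting from the explicit form of the tilted density given in Equation~\ref{TiltedDensity} and exploiting the fact that in the exponential family $q_{\tau \theta}$, tilting by $e^{\tau(\theta_1 - \theta_j)x}$ translates the natural parameter from $\theta_j$ to $\theta_1$.

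For part 1, I would note that by Equation~\ref{TiltedDensity}, the conditional density of $\overline{X}_j \mid E_j$ with respect to $\eta_\tau$ is proportional to $e^{\tau \theta_1 x}\,\I(e^{\tau(\theta_1 - \theta_j)x} \le c_j)$. On the other hand, $\overline{X}_1$ has density $q_{\tau \theta_1}(x) = \exp(\tau \theta_1 x - \tau A(\theta_1))$ w.r.t.\ $\eta_\tau$, so its conditional density given the event $\{e^{\tau(\theta_1 - \theta_j)\overline{X}_1} \le c_j\}$ is proportional to exactly the same function. Uniqueness of the normalization constant forces the two conditional distributions to coincide. (I would also flag what is likely a typo in the statement: the conditioning event should involve $\overline{X}_1$ rather than $\overline{X}_j$, which is the only reading consistent with part~2 below.)

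For part 2, I would invoke the elementary identity that for any probability measure $P$ and event $A$ with $P(A) > 0$, we have $\TV(P, P[\,\cdot\mid A]) = P(A^c)$, since $P[\,\cdot\mid A]$ agrees with $P$ on subsets of $A$ up to a reweighting and puts zero mass on $A^c$. Combining this with part~1, taking $P = \Pr_{\overline{X}_1}$ and $A = \{e^{\tau(\theta_1 - \theta_j)\overline{X}_1} \le c\}$, yields the claim.

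For part 3, I would directly compute $\Pr(E_j) = \Exp[\calK_j(\overline{X}_j)]$ by integrating against the density $q_{\tau \theta_j}$:
\begin{align*}
\Pr(E_j) &= \frac{1}{c}\int e^{\tau(\theta_1 - \theta_j)x}\,\I\!\left(e^{\tau(\theta_1 - \theta_j)x} \le c\right) e^{\tau \theta_j x - \tau A(\theta_j)}\,d\eta_\tau(x)\\
&= \frac{e^{\tau(A(\theta_1) - A(\theta_j))}}{c}\int \I\!\left(e^{\tau(\theta_1 - \theta_j)x} \le c\right) e^{\tau \theta_1 x - \tau A(\theta_1)}\,d\eta_\tau(x)\\
&= \frac{e^{\tau(A(\theta_1) - A(\theta_j))}}{c}\,\Pr_{\overline{X}_1}\!\left(e^{\tau(\theta_1 - \theta_j)\overline{X}_1} \le c\right),
\end{align*}
and the bracketed probability equals $1 - \Pr(e^{\tau(\theta_1 - \theta_j)\overline{X}_1} > c)$, which by part~2 is exactly $1$ minus the quantity denoted $Q_j(E_j)$ in the statement.

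None of the steps are hard; the proof is essentially bookkeeping. The only subtle point, and the step I would double-check most carefully, is the algebraic identity $e^{\tau(\theta_1-\theta_j)x}\cdot e^{\tau\theta_j x - \tau A(\theta_j)} = e^{\tau(A(\theta_1) - A(\theta_j))}\cdot e^{\tau\theta_1 x - \tau A(\theta_1)}$, which is the mechanism by which the tilt swaps one canonical parameter for another and is the real source of the ``censoring'' interpretation: conditioning on $E_j$ converts an observation from $p_{\theta_j}$ into an observation from $p_{\theta_1}$ truncated to $\{e^{\tau(\theta_1 - \theta_j)\overline{X}_1} \le c\}$.
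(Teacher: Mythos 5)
Your proof is correct and follows essentially the same route as the paper's: part 1 from the tilted density in Equation~\ref{TiltedDensity} and uniqueness of normalization, part 2 from the conditioning identity $\TV(P, P[\cdot\mid B]) = P(B^c)$ (the paper's Lemma~\ref{ConditionalTV}), and part 3 by the same parameter-swapping integral computation. Your remark that the conditioning event in part 1 should read $\overline{X}_1$ rather than $\overline{X}_j$ is also the correct reading of what is indeed a typo in the statement.
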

	\begin{proof}
	The first point follows from Equation~\ref{TiltedDensity}. The second point follows directly from Lemma~\ref{ConditionalTV}, and the last point follows from the following computation:
	\begin{eqnarray*}
	\Pr(E_j) &=& \frac{1}{c}\cdot\Exp\left[\exp(\tau(\theta_1 - \theta_j) \overline{X}_j) \cdot \I(e^{\tau(\theta_1 - \theta_j)\overline{X}_j} \le c) \right] \\
	&=& \frac{1}{c}\cdot \int \exp(\tau(\theta_1 - \theta_j) x)\exp(\tau\theta_jx - \tau A(\theta_j))\I(e^{\tau(\theta_1 - \theta_j)x} \le c) d\eta_{\tau}(x)\\
	&=& \frac{1}{c}\cdot \int \exp(\tau \theta_1x - \tau  A (\theta_j) ))\I(e^{\tau(\theta_1 - \theta_j)x} \le c) d\eta_{\tau}(x)\\
	&=& \frac{1}{c}\cdot \int \exp(\tau \theta_1x - \tau  A (\theta_1) + \tau  \{A (\theta_1) - A(\theta_j)\} ))\I(e^{\tau(\theta_1 - \theta_j)x} \le c) d\eta_{\tau}(x)\\
	&=& \frac{e^{\tau  (A (\theta_1) - A(\theta_j)}}{c}\cdot \int \exp(\tau \theta_1 x - \tau  A (\theta_1) )\I(e^{\tau(\theta_1 - \theta_j)x} \le c) d\eta_{\tau}(x)\\
	&=& \frac{e^{\tau  \{A (\theta_1) - A(\theta_j)\} }}{c}\Pr[e^{\tau(\theta_1 - \theta_j)\overline{X}_1} \le c] 
	\end{eqnarray*}
	\end{proof}
	The last point follows from the following lemma, proved in Section~\ref{ConditionalTVProof}:
	\begin{lemma}[$\TV$ under conditioning]\label{ConditionalTV} Let $\Pr$ be a probability measure on a space $(\Omega,\calF)$, and let $B \in \calF$ have $\Pr(B) > 0$. Then,
	\begin{eqnarray} 
	\TV( \Pr[\cdot], \Pr[\cdot | B]) = \Pr[B^c] 
	\end{eqnarray}. 
	\end{lemma}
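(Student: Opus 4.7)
The plan is to invoke the equivalent ``overlap'' formulation of total variation distance, namely $\TV(P,Q) = 1 - \int \min(dP, dQ)$, and then exploit the simple form of the Radon--Nikodym derivative of $\Pr[\cdot \mid B]$ with respect to $\Pr[\cdot]$. Writing $P = \Pr[\cdot]$ and $Q = \Pr[\cdot \mid B]$, the assumption $\Pr[B] > 0$ ensures $Q \ll P$ with density $dQ/dP = \I_B/\Pr[B]$, so that $\min(dP, dQ) = dP \cdot \min(1, \I_B/\Pr[B])$. Since $\Pr[B] \le 1$, the pointwise minimum equals $\I_B$ almost surely, and integrating yields $\int \min(dP, dQ) = \Pr[B]$. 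Therefore $\TV(P,Q) = 1 - \Pr[B] = \Pr[B^c]$, as claimed.

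A self-contained alternative that avoids appealing to the overlap identity is to work directly with $\TV(P,Q) = \sup_{A \in \calF} |P(A) - Q(A)|$. For any $A$, decompose $P(A) = \Pr[A \cap B] + \Pr[A \cap B^c]$ and write $Q(A) = \Pr[A \cap B]/\Pr[B]$, giving
\begin{equation*}
P(A) - Q(A) \;=\; \Pr[A \cap B^c] \;-\; \Pr[A \cap B] \cdot \frac{\Pr[B^c]}{\Pr[B]}.
\end{equation*}
Setting $\alpha := \Pr[A \cap B] \in [0, \Pr[B]]$ and $\beta := \Pr[A \cap B^c] \in [0, \Pr[B^c]]$, one checks that the right-hand side is maximized at $(\alpha,\beta) = (0, \Pr[B^c])$ and minimized at $(\alpha,\beta) = (\Pr[B], 0)$, each giving absolute value $\Pr[B^c]$. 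Hence $|P(A) - Q(A)| \le \Pr[B^c]$ for all $A$, and choosing $A = B^c$ (so $Q(B^c) = 0$, $P(B^c) = \Pr[B^c]$) attains this bound.

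Neither approach presents a real obstacle; the only ``subtlety'' is remembering to handle the $P$-null events on which $dQ/dP$ can be defined arbitrarily, and to verify that the pointwise inequality $\min(1, \I_B/\Pr[B]) = \I_B$ holds $P$-almost everywhere (it does, since $1/\Pr[B] \ge 1$). I would present the overlap-formulation proof as primary since it is the most transparent, and perhaps remark that the attainment $A = B^c$ makes the identity intuitive: the only way $P$ and $Q$ can disagree is that $Q$ places zero mass on $B^c$ while $P$ places mass $\Pr[B^c]$ there.
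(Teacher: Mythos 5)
Your proposal is correct. Your second, ``self-contained'' argument is essentially the paper's own proof: the paper also works directly from $\TV(\Pr,\Q)=\sup_{A\in\calF}|\Pr[A]-\Q[A]|$ with $\Q=\Pr[\cdot\mid B]$, splits each event $A$ into its parts inside $B$ and inside $B^c$, notes that on the $B$-part one has $\Q[A\cap B]=\Pr[A\cap B]/\Pr[B]\ge \Pr[A\cap B]$ so that part only pushes the difference down, and that on the $B^c$-part $\Q$ vanishes, so the supremum is $\Pr[B^c]$, attained at $A=B^c$; your parametrization by $\alpha=\Pr[A\cap B]$, $\beta=\Pr[A\cap B^c]$ is just a slightly more explicit way of running the same optimization. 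Your primary route via the overlap identity $\TV(P,Q)=1-\int\min(dP,dQ)$ together with the density $dQ/dP=\I_B/\Pr[B]$ is a genuinely different (and more compact) derivation: it buys brevity and makes the structural reason for the identity transparent (the conditional measure simply discards the mass $\Pr[B^c]$), at the cost of importing the overlap formulation of $\TV$ and the small measure-theoretic care you already note about $P$-null sets; the paper's argument is more elementary in that it uses nothing beyond the supremum definition. Either version would serve as a valid proof of the lemma, and both use the same (supremum, no factor $1/2$) normalization of $\TV$ as the paper does here.
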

\subsection{Building the Simulator}
	\subsubsection{Defining the Simulator on $\nu$}
	Again, let $\nu$ be an instance with best arm $\nu_1$, and fix a $\tau \in \N$. Our simulator will always return the true samples $X_{[a,s]}$ from $\Trs$ for all arms $a \ne 1$, and for the first $\tau$ samples from arm $1$. After $\tau$ samples are taken from arm $1$, the samples will be drawn independently from the measure $\nu_{\hat{j}}$, where $\hat{j} \in [n]$ is a maliciously chosen index which we will define shortly, using the events $E_j$ in the previous section. To summarize,
	\begin{eqnarray}
	\Sim(\nu): \widehat{X}_{[a,s]} \mapsfrom \begin{cases} X_{[a,s]} & a \ne 1 \\
	X_{[1,s]} & a = 1, s \le \tau \\
	\overset{i.i.d}{\sim} \nu_{\widehat{j}} & a = 1, s > \tau
	\end{cases}
	\end{eqnarray}

	\begin{fact}\label{BtruthfulFact} If $W = \{N_1(T) \le \tau\}$, then $\Alg$ is truthful on $W$ under $\Sim(\nu)$.
	\end{fact}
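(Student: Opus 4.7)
The plan is to reduce the claim to a coupling argument: couple the runs of $\Alg$ under $\nu$ and under $\Sim(\nu)$ using the same underlying transcript $\Trs$ and the same internal randomness $\xi_{\Alg}$, and show that on $W$ the two runs agree pointwise.

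First, I would recall exactly which entries of the transcript the simulator alters. By construction, $\widehat{X}_{[a,s]} = X_{[a,s]}$ whenever $a \ne 1$ or $s \le \tau$; the only entries whose distribution can differ between $\nu$ and $\Sim(\nu)$ are $\widehat{X}_{[1,s]}$ for $s > \tau$. Second, I would note that $W = \{N_1(T) \le \tau\}$ is $\mathcal{F}_T$-measurable, since $N_1(t) = \sum_{s \le t} \I(a_s = 1)$ and both the $a_s$ and the stopping time $T$ are $\{\mathcal{F}_t\}$-adapted. Hence deciding whether $W$ occurs requires only inspection of the sampling sequence and stopping time, which depend on the samples $\widehat{X}_{[a_t, N_{a_t}(t)]}$ the algorithm actually reads --- it does not require touching the ``modified'' tail entries $\widehat{X}_{[1,s]}$ with $s > \tau$.

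Next, the coupling: fix $\xi_{\Alg}$ and the underlying $\Trs$, and run $\Alg$ in parallel against $\Trs$ and against $\Trshat := \Sim(\Trs)$. I would argue by induction on $t$ that as long as $\Alg$ has not yet queried any $\widehat{X}_{[1,s]}$ with $s > \tau$, the two runs produce the same action $a_t$, the same observation $X_{a_t,t}$, and the same stopping/output decisions --- because every sample read so far coincides between $\Trs$ and $\Trshat$. On the event $W$, the number of queries to arm $1$ never exceeds $\tau$, so only entries $\widehat{X}_{[1,s]}$ with $s \le \tau$ are ever accessed; these are unchanged, and therefore the entire trajectories $(a_1, X_{a_1,1}, \ldots, a_T, X_{a_T,T}, \widehat{S})$ coincide on $W$ under the coupling.

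Therefore, for any $E \in \mathcal{F}_T$, the indicator $\I(E \wedge W)$ evaluated on the $\nu$-run equals the indicator $\I(E \wedge W)$ evaluated on the $\Sim(\nu)$-run, which on taking expectations over $(\xi_{\Alg}, \Trs)$ gives $\Pr_{\nu,\Alg}[E \wedge W] = \Pr_{\Sim(\nu),\Alg}[E \wedge W]$, as required. The main (mild) subtlety is the measurability point in the second step: one must verify that ``$W$ occurs'' can itself be decided from the observed samples before committing to inspect any altered entry, so that the inductive coupling doesn't secretly rely on information from the modified tail; this is immediate from the $\mathcal{F}_T$-measurability of $N_1(T)$.
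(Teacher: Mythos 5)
Your proof is correct and follows the same reasoning as the paper, which disposes of this fact in one sentence by observing that $\Sim(\nu)$ only alters the samples $\widehat{X}_{[1,s]}$ with $s>\tau$, which are never accessed on $W$. Your explicit coupling (shared $\Trs$ and $\xi_{\Alg}$, with the runs agreeing until arm $1$ would be pulled a $(\tau+1)$-st time, at which point $W$ fails in both runs) is simply a more detailed write-up of that same argument.
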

	\begin{proof}
	The only samples which are altered by $\Sim(\nu)$ are those taken from arm $1$ after arm $1$ has been sampled $> \tau$ times.
	\end{proof}
	Next, let's define $\widehat{j}$. Let $\overline{X}_j = \frac{1}{\tau}\sum_{s=1}^{\tau} X_{[j,s]}$, fix constants $c_2,\dots,c_n \in \R_{> 0}$ to be chosen later, and let $\calK_j$ be the corresponding Markov Kernel from Lemma~\ref{Censoring}. For each $j \in \{2,\dots,n\}$, $\Sim$ draws a i.i.d random number $\xi_j \overset{unif}{\sim} [0,1]$. The following fact is just a restatement of this definition in the language of Section~\ref{CensoredSection}:
	\begin{fact} Let $E_j = \{\xi_j \le 
	\calK_j(\overline{X}_j)\}$. Then $E_j$ is the measuring event corresponding to the Markov Kernel $\calK_j$, and are mutually independent.
	\end{fact}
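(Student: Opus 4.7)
The plan is to unpack the definitions directly and then read off the two claims. Recall from Section~\ref{CensoredSection} that an event $E$ is the measuring event of a Markov kernel $\calK_j: \R \to [0,1]$ with respect to the random variable $Z_j$ whenever $\Pr(E \mid Z_j = z) = \calK_j(z)$ for almost every $z$. Here I will take $Z_j = \overline{X}_j$, the empirical mean of the first $\tau$ samples drawn from arm $j$.

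First I would verify part (a): that $E_j := \{\xi_j \le \calK_j(\overline{X}_j)\}$ is the measuring event of $\calK_j$. Since $\xi_j \sim \mathrm{Uniform}[0,1]$ and is drawn independently of every other source of randomness (in particular, independently of the transcript samples that generate $\overline{X}_j$), we can condition on $\overline{X}_j = x$ and factor:
\begin{eqnarray*}
\Pr\bigl(E_j \,\big|\, \overline{X}_j = x\bigr) \;=\; \Pr\bigl(\xi_j \le \calK_j(x)\bigr) \;=\; \calK_j(x),
\end{eqnarray*}
where the last equality uses that $\calK_j(x) \in [0,1]$ and $\xi_j$ is uniform on $[0,1]$. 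This is exactly the defining relation of a measuring event. As a corollary, taking expectations also recovers the marginal $\Pr(E_j) = \Exp[\calK_j(\overline{X}_j)]$ used in Lemma~\ref{Censoring}.

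For part (b), mutual independence, the key observation is that each $E_j$ is a function of the pair $(\overline{X}_j, \xi_j)$ alone. The $\xi_j$'s were introduced as i.i.d.\ uniform variables, drawn independently of the transcript $\Trs$, so the family $\{\xi_j\}_{j=2}^n$ is mutually independent and jointly independent of $\Trs$. On the transcript side, $\overline{X}_j$ is a function of $\{X_{[j,s]}\}_{s \le \tau}$, which by definition of $\Trs$ are independent of $\{X_{[j',s]}\}_{s \le \tau}$ for $j' \ne j$. Combining these two facts, the pairs $\{(\overline{X}_j, \xi_j)\}_{j=2}^n$ are mutually independent, and hence so are the events $E_j$, which are measurable with respect to these pairs.

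I do not anticipate any real obstacle: this fact is essentially a bookkeeping check that the construction $\{\xi_j \le \calK_j(\overline{X}_j)\}$ matches the abstract notion of a measuring event from Section~\ref{CensoredSection}, and that the freshness of the $\xi_j$'s and the independence of samples across arms together ensure the joint independence needed downstream. The one point I would be careful about is emphasizing that both the independence of $\xi_j$ from the transcript and the independence of samples across different arms are what drive the conclusion, since the subsequent arguments (in particular, the union-bound over $j$ to show some $E_j$ holds with high probability) rely critically on this product structure.
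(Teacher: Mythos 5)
Your proposal is correct and follows the same route the paper intends: the paper offers no separate proof, stating only that the fact ``is just a restatement of this definition'' from the censored-tilting section, and your argument simply fills in that routine verification (conditioning on $\overline{X}_j$ and using the uniformity and freshness of $\xi_j$ for the kernel identity, and the product structure of the pairs $(\overline{X}_j,\xi_j)$ across arms for mutual independence). Nothing is missing or off-track.
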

	We now define the index $\widehat{j}$ and corresponding ``malicious events'' $M_j$ by
	\begin{eqnarray}
	M_j := \{\widehat{j} = j\} & \text{where} & \widehat{j} = \begin{cases} \overset{unif}{\sim} \{j: E_j \text{ occurs}\} & \text{on } \bigcup_j E_j \\
	1 & \text{ otherwise}
	\end{cases}  
	\end{eqnarray}

	\subsubsection{Defining $\Sim$ on Alternate Measures} 
	The next step is to construct our alternative hypotheses. Let $\pi_{(\ell)}$ denote the permutation which swaps arms $1$ and $\ell$, and define the measures $\nu^{(2)},\dots,\nu^{(n)}$, where $\nu^{(\ell)} = \pi_{(\ell)}(\nu)$ (note that $\pi_{(\ell)} = \pi_{(\ell)}^{-1}$). To define the simulator on these instances, we still let $\xi_j \overset{unif}{\sim} [0,1]$, and now define, for $j \in \{2,\dots,n\}$
	\begin{eqnarray}
	E_j^{(\ell)} &:=& \{\xi_{j} \le \calK_j(\overline{X}_{\pi_{(\ell)}(j)})\} \\
	\widehat{j}_{\ell}  &\overset{unif}{\sim}& \{j: E_j^{(\ell)} \text{ holds} \} \\ 
	M_j^{(\ell)} &:=& \{\widehat{j}_{\ell} = j\}
	\end{eqnarray}
	and set
	\begin{eqnarray}
	\Sim(\nu^{(\ell)}): X_{[a,s]} \mapsto \begin{cases} X_{[a,s]} & a \ne \ell \\
	X_{[a,s]} & a = \ell, s \le \tau \\
	\overset{i.i.d}{\sim} \nu_{\widehat{j}_{\ell}} & a = \ell, s > \tau
	\end{cases}
	\end{eqnarray}
	Note that this esnsure that if $\widehat{\Trs}$ is a transcript from $\Sim(\nu^{(\ell)})$, and $\widehat{\Trs}^{(\ell)}$, then $\pi_{(\ell)}^{-1}(\widehat{\Trs}^{(\ell)})$ (that is, the transcript obtained by swapping indices $1$ and $\ell$ in $\widehat{\Trs}^{(\ell)}$) has the same distribution as $\widehat{\Trs}$.


	Our construction is symmetric in the following sense:
	\begin{fact}\label{WjFact} $\Alg$ is truthful on $W_{j}:= \{N_{j)}(T) \le \tau\}$ under $\Sim(\nu^{(j)})$. Moreover, for each $j \in \{2,\dots,n\}$, $\Pr_{\Sim(\nu)}[W | M_j] = \Pr_{\Sim(\nu^{(j)})}[W_j | M_j^{(j)}]$ and $\Pr_{\Sim(\nu)}[\{\hat{y} \ne 1\} | M_j] = \Pr_{\Sim(\nu^{(j)})}[\{\hat{y} \ne j\} | M_j^{(j)}]$
	\end{fact}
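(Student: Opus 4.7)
The plan is to handle the three claims of Fact~\ref{WjFact} in sequence; the first is immediate, and the remaining two follow from a single synchronous coupling that exploits the built-in $\pi_{(j)}$-symmetry of the simulator construction together with the symmetry of $\Alg$.

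Truthfulness on $W_j$ is dispatched exactly as in Fact~\ref{BtruthfulFact}: the only entries of $\widehat{\Trs}$ that $\Sim(\nu^{(j)})$ alters are $\widehat{X}_{[j,s]}$ with $s > \tau$, and on $W_j = \{N_j(T) \le \tau\}$ these entries are never read by $\Alg$, so every $\calF_T$-measurable $E$ satisfies $\Pr_{\Sim(\nu^{(j)}),\Alg}[E \wedge W_j] = \Pr_{\nu^{(j)},\Alg}[E \wedge W_j]$.

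For the two conditional identities, the plan is to realize both experiments on a common probability space. Draw a single raw transcript $\Trs \sim \nu$ and set $\Trs' := \pi_{(j)}(\Trs)$, which has the law of a raw transcript under $\nu^{(j)}$; share the tilting bits $\xi_2, \ldots, \xi_n$ and the i.i.d.\ past-$\tau$ substitution noise between the two runs. Three matching properties need to be checked: (a) on this space, $E^{(j)}_k = E_k$ for every $k$, because the sample mean of arm $k$ in $\Trs'$ equals the sample mean of arm $\pi_{(j)}(k)$ in $\Trs$, which is exactly the quantity appearing in the definition of $E^{(j)}_k$; (b) consequently $\widehat{j}_j = \widehat{j}$ and $M_j^{(j)} = M_j$ as events; (c) with coupled substitution noise, $\widehat{\Trs}^{(j)} = \pi_{(j)}(\widehat{\Trs})$ pointwise (the substituted samples on both sides are i.i.d.\ $\nu_{\widehat{j}}$ draws, equal under coupling).

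To finish, invoke Lemma~\ref{SymmetryLemma} so that WLOG $\Alg$ is symmetric. Then running $\Alg$ on $\widehat{\Trs}^{(j)} = \pi_{(j)}(\widehat{\Trs})$ produces an action sequence and output that are the $\pi_{(j)}$-images of those produced on $\widehat{\Trs}$. Since $\pi_{(j)}$ swaps labels $1$ and $j$, this yields $N^{(j)}_j(T) = N_1(T)$ and $\{\widehat{y}^{(j)} = j\} = \{\widehat{y} = 1\}$ on the coupled space, so $W_j \cap M_j^{(j)} = W \cap M_j$ and $\{\widehat{y}^{(j)} \ne j\} \cap M_j^{(j)} = \{\widehat{y} \ne 1\} \cap M_j$; dividing by the shared probability $\Pr(M_j) = \Pr(M_j^{(j)})$ produces both conditional equalities. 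The main obstacle is step~(a): the asymmetric-looking $\overline{X}_{\pi_{(\ell)}(j)}$ in the definition of $E^{(\ell)}_j$ is precisely calibrated to cancel the $\pi_{(j)}$-relabeling on the coupled transcript, so one must track indices carefully; everything else is routine once the coupling is in place.
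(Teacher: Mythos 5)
Your proof is correct and follows essentially the same route as the paper: the paper's two-line argument is precisely the observation that swapping the labels $1$ and $j$ carries $\Sim(\nu^{(j)})$ (and the events $W_j$, $M_j^{(j)}$, $\{\hat{y}\ne j\}$) onto $\Sim(\nu)$ (and $W$, $M_j$, $\{\hat{y}\ne 1\}$), and your synchronous coupling --- shared $\xi_k$'s, shared selection/substitution randomness, and $\Trs' = \pi_{(j)}(\Trs)$, under which $E_k^{(j)} = E_k$, $\widehat{j}_j = \widehat{j}$, and $\Trshat^{(j)} = \pi_{(j)}(\Trshat)$ --- is just an explicit realization of that symmetry, with the reduction to symmetric $\Alg$ (Lemma~\ref{SymmetryLemma}) made explicit where the paper leaves it as a standing assumption. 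The only caveat, which does not affect the conclusion since $M_j = M_j^{(j)}$ is measurable with respect to the transcript-and-simulator randomness alone, is that for a symmetric $\Alg$ the claim ``the run on $\pi_{(j)}(\Trshat)$ is the $\pi_{(j)}$-image of the run on $\Trshat$'' holds in distribution conditionally on the transcript rather than pointwise, which still yields the two conditional identities after integrating over $M_j$.
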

	\begin{proof}
	The first point just follows since $\Sim(\nu^{(j)})$ only changes samples once arm $j$ has been pulled more than $\tau$ times. The second point follows since, the event $M_j^{(j)}$ (resp $\{\hat{y} \ne j\}$) and $W_j$ correspond to the events $M_j$ (resp $\{\hat{y} \ne 1\}$) and $W$ if the labels of arms $1$ and $j$ are swapped. But if we swap the labels of $1$ and $j$, distribution of $\widehat{\Trs}^{(j)}$ under $\Sim(\nu^{(j)})$ is identical to the distribution of $\widehat{\Trs}$ under $\Sim(\nu)$. 
	\end{proof}
	Using this symmetry, the total variation between the transcripts returned by $\Sim(\nu) $ given $E_j$ and $\Sim(\nu^{(j)}) $ given $E_j$ can be bounded as follows
	\begin{fact}\label{TVfact} Let $\overline{X}_{\ell}$ denote a sample with the distribution of $\sum_{s=1}^{\tau} X_{\ell,s}$, where each $X_{\ell,s} \sim \nu_{\ell}$. For $j \in \{2,\dots,n\}$, $\TV\left[ \Sim(\nu) \big{|} M_j; \Sim(\nu^{(j)}) \big{|} M_j^{(j)}\right] \le 2\TV(\overline{X}_j \big{|} E_j, \overline{X}_1) $.
	\end{fact}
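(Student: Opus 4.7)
The plan is to construct an explicit coupling between the two conditional laws by identifying a large block of auxiliary randomness that is shared, and then bounding the residual discrepancy via subadditivity of TV across product measures.

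First, I would write each conditional law as a deterministic function of a canonical set of random variables: (i) the first-$\tau$ sample means $\{\overline{X}_a\}_{a=1}^n$; (ii) the conditional distribution of the raw samples $\{X_{[a,s]}\}_{s \le \tau}$ given the sufficient statistic $\overline{X}_a$, which in an exponential family is independent of the parameter $\theta_a$; (iii) the internal bits $\{\xi_k\}_{k\geq 2}$ together with the uniform tiebreak used to pick $\widehat{j}$; and (iv) the post-$\tau$ ``fill'' samples drawn from $\nu_{\widehat j}$ and from the other $\nu_a$. For arms $a \neq 1, j$, all four pieces have identical marginals under $\Sim(\nu)$ and $\Sim(\nu^{(j)})$, using $E_k^{(j)} = E_k$ for $k \ne 1, j$ and $\nu^{(j)}_a = \nu_a$. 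On $M_j$ the post-$\tau$ fills for arm $1$ under $\Sim(\nu)$ are i.i.d.\ $\nu_j$, and under $\Sim(\nu^{(j)})$ arm $1$'s post-$\tau$ samples are untouched draws from $\nu^{(j)}_1 = \nu_j$; by symmetry arm $j$'s post-$\tau$ samples are $\nu_j$-distributed in both scenarios. Hence all of these pieces may be coupled perfectly.

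Second, I would isolate the only non-trivial discrepancy in the joint law of $(\overline{X}_1, \overline{X}_j)$. Since $M_j$ depends on $\overline{X}_j$ only through $E_j = \{\xi_j \leq \calK_j(\overline{X}_j)\}$, and the conditional tiebreak factor $\E[1/|R| \mid E_j, \overline{X}_j]$ is a function of $R \setminus \{j\}$ (which depends only on $(\overline{X}_k, \xi_k)_{k \neq 1, j}$) and hence is independent of $\overline{X}_j$, conditioning on $M_j$ reweights $\overline{X}_j$ precisely by $\calK_j$. By Lemma~\ref{Censoring}, this yields the censored tilt $\tilde{\nu}_j$, while $\overline{X}_1$ remains $\nu_1$-distributed because $M_j$ is independent of it. Under $\Sim(\nu^{(j)})$ the roles reverse: $E_j^{(j)} = \{\xi_j \leq \calK_j(\overline{X}_1)\}$ and $\overline{X}_1 \sim \nu_j$, so conditioning on $M_j^{(j)}$ tilts $\overline{X}_1$ to $\tilde{\nu}_j$ and leaves $\overline{X}_j \sim \nu_1$. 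Thus the two conditional joint laws of $(\overline{X}_1, \overline{X}_j)$ are $\nu_1 \otimes \tilde{\nu}_j$ and $\tilde{\nu}_j \otimes \nu_1$, respectively.

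The bound then follows from the standard subadditivity of TV across product measures,
\begin{equation*}
\TV\bigl(\nu_1 \otimes \tilde{\nu}_j,\; \tilde{\nu}_j \otimes \nu_1\bigr) \leq 2\,\TV(\nu_1, \tilde{\nu}_j) = 2\,\TV\bigl(\overline{X}_1,\, \overline{X}_j \mid E_j\bigr),
\end{equation*}
together with the data processing inequality, since the full transcripts under both measures are deterministic functions of $(\overline{X}_1, \overline{X}_j)$ and the perfectly-coupled shared randomness. The main obstacle I anticipate is the bookkeeping in the second paragraph---in particular, verifying that the uniform tiebreak does not contribute extra $\overline{X}_j$-dependence to the posterior beyond the factor $\calK_j$, and checking that the identity $E_k^{(j)} = E_k$ for $k \neq 1,j$ really does let us match the conditional laws of all of the auxiliary variables across the two measures so that a joint coupling on the shared inputs is valid.
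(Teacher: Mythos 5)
Your proposal is correct and follows essentially the same route as the paper's proof: both reduce the transcript $\TV$ to the first-$\tau$ samples of arms $1$ and $j$ (matching all other components via the symmetry $E_k^{(j)}=E_k$ for $k\neq 1,j$ and the i.i.d.\ $\nu_j$ post-$\tau$ fills), show $\overline{X}_j \mid M_j$ has the law of $\overline{X}_j \mid E_j$ because the tiebreak is conditionally independent of $\overline{X}_j$ given $E_j$, use sufficiency of the sample mean to pass from raw samples to empirical means, and obtain the factor $2$ from subadditivity of $\TV$ over product measures. Your coupling/data-processing packaging is only a cosmetic difference from the paper's chain of $\TV$ identities.
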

	This fact takes a bit of care to verify, and so we defer its proof to Section~\ref{TVfactsection}.

	\subsection{Coupling together $\nu$ and $\{\nu^{(j)}\}$}
	Facts~\ref{BtruthfulFact} and~\ref{WjFact}, we can couple together the measures using a conditional analogue of the the Simulator Le Cam (Proposition~\ref{Prop:SimLeCam}), proved in Section~\ref{Conditional-LeCam-Proof}.
	\begin{lemma}[Conditional Le Cam's]\label{ConditionalLeCam} Suppose that any events $W$, $\{W_j\}$ and $M_j$ satisfy the conclusions of Facts~\ref{BtruthfulFact} and~\ref{WjFact}. Then, if $\Alg$ is symmetric, then for all $j \in \{2,\dots,n\}$
	\begin{eqnarray}\label{ConditionalEq}
	2\Pr_{\nu,\Alg}\left[W^c | M_j\right] \ge 1 - 2\Pr_{\nu,\Alg}\left[\{\hat{y} \ne 1\}\big{|} M_j\right] - \TV\left[\Sim(\nu\big{|} M_j)-\Sim(\nu^{(j)}\big{|}  M_j)\right]
	\end{eqnarray}
	\end{lemma}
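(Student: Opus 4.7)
The plan is to establish Lemma~\ref{ConditionalLeCam} by running the two-step argument behind Proposition~\ref{Prop:SimLeCam} \emph{inside} the conditioning events $M_j$ and $M_j^{(j)}$, using the symmetry in Fact~\ref{WjFact} to pair the two alternate instances so that the bound collapses to the single term $\Pr_{\nu,\Alg}[W^c\mid M_j]$ on the left of Equation~\ref{ConditionalEq}, rather than the sum over two instances that appears in the unconditional Le Cam.

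First I would establish a conditional truthfulness bound: for every event $A$,
\begin{eqnarray*}
|\Pr_{\Sim(\nu),\Alg}[A\mid M_j]-\Pr_{\nu,\Alg}[A\mid M_j]|\;\le\;\Pr_{\nu,\Alg}[W^c\mid M_j].
\end{eqnarray*}
This follows by the same telescoping argument as in the proof of Proposition~\ref{Prop:SimLeCam}: since $M_j$ is determined by the first $\tau$ samples of arms $a\neq 1$ together with $\xi_{\Sim}$, all of which the simulator leaves unchanged, one has $\Pr_{\Sim(\nu),\Alg}[M_j]=\Pr_{\nu,\Alg}[M_j]$, and truthfulness of $W$ on $\Sim(\nu)$ extends to the conditional identity $\Pr_{\Sim(\nu),\Alg}[A\cap W\mid M_j]=\Pr_{\nu,\Alg}[A\cap W\mid M_j]$. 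Splitting $A$ on $W$ vs.\ $W^c$ and using $|a-b|\le\max\{a,b\}$ for $a,b\ge 0$ closes the bound.

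Next I would apply Pinsker/Le Cam at the simulator level with $A=\{\hat y=1\}$ to get
\begin{eqnarray*}
\Pr_{\Sim(\nu),\Alg}[\hat y=1\mid M_j]\;\le\;\Pr_{\Sim(\nu^{(j)}),\Alg}[\hat y=1\mid M_j^{(j)}]\;+\;\TV\!\left(\Sim(\nu)\mid M_j,\;\Sim(\nu^{(j)})\mid M_j^{(j)}\right).
\end{eqnarray*}
Disjointness of $\{\hat y=1\}$ and $\{\hat y=j\}$ under $\Sim(\nu^{(j)})$ gives $\Pr_{\Sim(\nu^{(j)}),\Alg}[\hat y=1\mid M_j^{(j)}]\le 1-\Pr_{\Sim(\nu^{(j)}),\Alg}[\hat y=j\mid M_j^{(j)}]$, and taking complements in the identity from Fact~\ref{WjFact} equates $\Pr_{\Sim(\nu^{(j)}),\Alg}[\hat y=j\mid M_j^{(j)}]$ with $\Pr_{\Sim(\nu),\Alg}[\hat y=1\mid M_j]$. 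Rearranging collapses both sides to $\Pr_{\Sim(\nu),\Alg}[\hat y=1\mid M_j]\le\tfrac{1}{2}(1+\TV)$.

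Chaining the two steps gives $\Pr_{\nu,\Alg}[\hat y=1\mid M_j]\le\tfrac{1}{2}(1+\TV)+\Pr_{\nu,\Alg}[W^c\mid M_j]$; substituting $\Pr_{\nu,\Alg}[\hat y=1\mid M_j]=1-\Pr_{\nu,\Alg}[\hat y\neq 1\mid M_j]$ and multiplying by $2$ yields exactly Equation~\ref{ConditionalEq}. The one subtlety I expect to treat carefully is verifying the conditional truthfulness identity rigorously: one must check that $M_j$ is measurable with respect to a $\sigma$-algebra on which $\Sim(\nu)$ and $\nu$ induce the same joint law, which is guaranteed by the construction because $\hat j$ depends only on the first $\tau$ samples of arms $2,\dots,n$ (untouched by the simulator) together with the independent randomness $\xi_2,\dots,\xi_n$. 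The symmetry hypothesis on $\Alg$ does not enter the core algebra above but ensures that the conditional symmetry in Fact~\ref{WjFact} holds; once these structural checks are in place, the rest of the proof is a handful of lines.
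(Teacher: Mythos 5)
Your proposal is correct, and it reaches Equation~\ref{ConditionalEq} by a mildly but genuinely different decomposition than the paper. The paper first uses symmetry to rewrite $\Pr_{\nu,\Alg}[W^c\mid M_j]$ as the average $\tfrac{1}{2}\left(\Pr_{\nu,\Alg}[W^c\mid M_j]+\Pr_{\nu^{(j)},\Alg}[W_j^c\mid M_j^{(j)}]\right)$, then invokes Proposition~\ref{Prop:SimLeCam} as a black box on the conditional measures $\nu\mid M_j$ and $\nu^{(j)}\mid M_j^{(j)}$ (so it needs the truthfulness of \emph{both} $W$ and $W_j$ under the respective conditioned simulators), and finally lower bounds the supremum over events by choosing $A=\{\hat y=1\}$ and appealing to the output-symmetry identity of Fact~\ref{WjFact}. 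You instead re-run the Le Cam computation inline on the single event $\{\hat y = 1\}$: you transfer between the real and simulated measures only once, on the base instance $\nu$, via the conditional truthfulness bound $|\Pr_{\Sim(\nu),\Alg}[A\mid M_j]-\Pr_{\nu,\Alg}[A\mid M_j]|\le\Pr_{\nu,\Alg}[W^c\mid M_j]$, and you apply the Fact~\ref{WjFact} identity at the simulated level to collapse the alternative-instance term, giving $\Pr_{\Sim(\nu),\Alg}[\hat y=1\mid M_j]\le\tfrac{1}{2}(1+\TV)$ before chaining back. What this buys is economy: you never need the truthfulness of $W_j$ under $\Sim(\nu^{(j)})\mid M_j^{(j)}$, nor the real-measure version of the symmetry identity $\Pr_{\nu,\Alg}[W^c\mid M_j]=\Pr_{\nu^{(j)},\Alg}[W_j^c\mid M_j^{(j)}]$ that underlies the paper's step (i) (and which the paper justifies somewhat loosely, since Fact~\ref{WjFact} is stated for the simulated measures); what the paper's route buys is modularity, since it reuses Proposition~\ref{Prop:SimLeCam} verbatim rather than redoing its telescoping argument conditionally. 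Both routes share the same delicate point, which you correctly flag and justify: the conditioning events $M_j$ live on an enlarged $\sigma$-algebra including the simulator randomness $\xi_2,\dots,\xi_n$ and the first $\tau$ samples of arms $2,\dots,n$, all of which the simulator leaves untouched, so the truthfulness of $W$ extends to the conditional identities you need; your treatment of this is at least as careful as the paper's.
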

	Effectively, the above lemma paritions the space into malicious events $M_j$, and applies Proposition~\ref{Prop:SimLeCam} on each part of the partition. 

	Since the events $M_j$ are disjoint, multipling the left and right hand side of Equation~\ref{ConditionalEq} by $\Pr_{\nu,\Alg}\left[M_j\right]$, setting $\overline{M} := \bigcup_{j=2}^M$ and summing yields
	\begin{eqnarray*}
	 2\Pr_{\nu,\Alg}\left[W^c \wedge \overline{M} \right]  &\ge& \Pr_{\nu,\Alg}\left[ \overline{M}\right] -  2\Pr_{\nu,\Alg}\left[\{\hat{y} \ne 1\} \wedge \overline{M} \right] \\ 
	&-& \sum_j \Pr_{\nu,\Alg}\left[W^c \wedge M_j \right]\TV\left[\Sim(\nu\big{|} M_j)-\Sim(\nu^{(j)}\big{|}  M_j)\right]
	\end{eqnarray*}
	We can bound $\Pr_{\nu,\Alg}\left[W^c \wedge \overline{M} \right] \le \Pr_{\nu,\Alg}\left[W^c \right]$ and, if $\Alg$ is $\delta$-correct, then  $\Pr_{\nu,\Alg}\left[\{\hat{y} \ne 1\} \wedge \overline{M} \right] \le \Pr_{\nu,\Alg}\left[\{\hat{y} \ne 1\}\right] \le \delta$. Finally, Holder's Inequality, the disjointness of $M_j$ and Fact~\ref{TVfact} imply
	\begin{eqnarray*}
	&&\sum_j \Pr_{\nu,\Alg}\left[W^c \wedge M_h \right]\TV\left[\Sim(\nu\big{|} M_j)-\Sim(\nu^{(j)}\big{|}  M_j)\right] \\
	&\le& \left(\sum_j\Pr_{\nu,\Alg}[M_j]\right) \cdot \max_{j}\TV\left[\Sim(\nu\big{|} M_j)-\Sim(\nu^{(j)}\big{|}  M_j)\right] \\
	&=& \Pr_{\nu,\Alg}[\overline{M}] \cdot  \max_{j}\TV\left[\Sim(\nu\big{|} M_j)-\Sim(\nu^{(j)}\big{|}  M_j)\right] \\
	&\le& \Pr_{\nu,\Alg}[\overline{M}] \cdot  2\max_{j}\TV(\overline{X}_j \big{|} E_j, \overline{X}_1) 
	\end{eqnarray*}
	where the last step is a consequence of Fact~\ref{TVfact}. Combining these bounds, and noting that $\overline{M} = \bigcup_{j=2}^n M_j \equiv \bigcup_{j=2}^n E_j$ and $W = \{N_1(T) > \tau\}$ implies the following proposition:
	\begin{proposition}\label{BreakingLowerBound} Suppose that $\Alg$ is $\delta$-correct and symmetric. Then
	\begin{eqnarray}
	2\Pr_{\nu,\Alg}[\{N_1(T) > \tau\}] \ge \Pr_{\nu}[\bigcup_{j=2}^n E_j](1 - 2\max_j \TV(\overline{X}_j \big{|} E_j, \overline{X}_1) ) - 2\delta
	\end{eqnarray}
	where we note that the probability of $\bigcup_{j=2}^n E_j$ does not depend on $\Alg$.
	\end{proposition}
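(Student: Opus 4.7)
The plan is to assemble Proposition~\ref{BreakingLowerBound} by combining, in a bookkeeping fashion, the Conditional Le Cam bound (Lemma~\ref{ConditionalLeCam}), the truthfulness assertions of Facts~\ref{BtruthfulFact} and~\ref{WjFact}, the $\TV$ comparison of Fact~\ref{TVfact}, and the $\delta$-correctness of $\Alg$. The pivotal structural observation is that the malicious events $\{M_j\}_{j=2}^n$ form a disjoint partition of $\overline{M}:=\bigcup_{j=2}^n M_j$, and moreover $\overline{M} = \bigcup_{j=2}^n E_j$ by construction of $\widehat{j}$ (since $\widehat{j}\neq 1$ holds precisely when at least one $E_j$ fires). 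Since each $E_j$ depends only on $\overline{X}_j$ (the first $\tau$ samples from arm $j\ne 1$) together with the independent external bits $\xi_j$, and the simulator passes these samples through unmodified, this probability does not depend on $\Alg$.

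First, I would instantiate Lemma~\ref{ConditionalLeCam} for each $j\in\{2,\dots,n\}$. Facts~\ref{BtruthfulFact} and~\ref{WjFact} supply exactly the truthfulness hypotheses required: $W$ is truthful on $\Sim(\nu)$, and $W_j$ is truthful on $\Sim(\nu^{(j)})$. This produces the pointwise-in-$j$ bound
\begin{equation*}
2\Pr_{\nu,\Alg}[W^c\mid M_j] \;\ge\; 1 - 2\Pr_{\nu,\Alg}[\{\hat y\ne 1\}\mid M_j] - \TV\!\left[\Sim(\nu\mid M_j),\,\Sim(\nu^{(j)}\mid M_j)\right].
\end{equation*}

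Second, I would multiply both sides by $\Pr_{\nu,\Alg}[M_j]$ and sum over $j$. Because the $M_j$ are pairwise disjoint, the conditional probabilities aggregate into unconditional ones over $\overline{M}$: the LHS becomes $2\Pr_{\nu,\Alg}[W^c\wedge \overline{M}]$, the leading RHS term becomes $\Pr_{\nu,\Alg}[\overline{M}]$, and the middle RHS term becomes $2\Pr_{\nu,\Alg}[\{\hat y\ne 1\}\wedge \overline{M}]$. For the sum of $\TV$ contributions, I would apply a Hölder-type inequality, $\sum_j \Pr_{\nu,\Alg}[M_j]\cdot \TV_j \le \Pr_{\nu,\Alg}[\overline{M}]\cdot \max_j \TV_j$, and then invoke Fact~\ref{TVfact} to upper bound each $\TV[\Sim(\nu\mid M_j),\Sim(\nu^{(j)}\mid M_j)]$ by $2\TV(\overline{X}_j\mid E_j,\overline{X}_1)$.

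Third, I would loosen the LHS using monotonicity, $\Pr_{\nu,\Alg}[W^c\wedge\overline{M}]\le \Pr_{\nu,\Alg}[W^c]=\Pr_{\nu,\Alg}[\{N_1(T)>\tau\}]$, and bound the error term using $\delta$-correctness, $\Pr_{\nu,\Alg}[\{\hat y\ne 1\}\wedge \overline{M}]\le \Pr_{\nu,\Alg}[\hat y\ne 1]\le \delta$. Finally, identifying $\Pr_{\nu,\Alg}[\overline{M}]=\Pr_{\nu}[\bigcup_{j=2}^n E_j]$ (justified by the remark above that this event is measurable with respect to data the algorithm cannot influence) and rearranging yields the desired inequality.

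The main obstacle is conceptual rather than computational: one must be careful that the identification $\overline{M}=\bigcup_{j=2}^n E_j$ and the independence of $\Pr_{\nu,\Alg}[\overline{M}]$ from $\Alg$'s sampling rule genuinely hold. This requires tracing that $\widehat{j}$ is a deterministic function of $(\overline{X}_2,\dots,\overline{X}_n,\xi_2,\dots,\xi_n)$, and that under $\Sim(\nu)$ these quantities have the same joint law as under $\nu$ (as the simulator leaves arms $a\ne 1$ untouched and $\overline{X}_j$ uses only the first $\tau$ samples, which are available regardless of the algorithm's actions). Once this bookkeeping is verified, the calculation is a straightforward combination of the displayed inequalities.
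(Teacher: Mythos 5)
Your proposal is correct and follows essentially the same route as the paper: apply Lemma~\ref{ConditionalLeCam} for each $j$ (with Facts~\ref{BtruthfulFact} and~\ref{WjFact} as hypotheses), multiply by $\Pr_{\nu,\Alg}[M_j]$ and sum over the disjoint $M_j$, bound the $\TV$ sum via H\"older and Fact~\ref{TVfact}, then use monotonicity, $\delta$-correctness, and the identification $\overline{M}=\bigcup_{j\ge 2}E_j$ (whose law is unaffected by $\Alg$ since the $E_j$ depend only on the untouched first $\tau$ samples of arms $j\ne 1$ and the independent $\xi_j$). Your bookkeeping is, if anything, slightly cleaner than the paper's (which carries $\Pr[W^c\wedge M_j]$ rather than $\Pr[M_j]$ as the weight in the $\TV$ sum before applying the same H\"older step), but the argument is the same.
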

	Our goal is now clear: choose the Kernel's $\calK_j$ so as to balance the terms $Pr_{\nu}[\bigcup_{j=2}^n E_j]$ and $\max_j Q_j(E_j)$ in Equation~\ref{BreakingLowerBound}.
\subsection{Proving Theorem~\ref{Big Main Theorem}}
	To conclude Theorem~\ref{Big Main Theorem}, we first introduce the following technical lemma.
	\begin{lemma}\label{Lem:Balance}
	Suppose that $\nu_j$ comes from an exponential family $\{p_{\theta}\}_{\theta \in \Theta}$ with corresponding parameter by $\theta_j \in \Theta$. If $[\theta_j,2\theta_1 - \theta_j] \subset \Theta$, then for any $\kappa > 0$, there exists a choice of $c_j$ for which the corresponding kernel $\calK_j$ has
	\begin{eqnarray}
	\TV(\overline{X}_j \big{|} E_j , \overline{X}_1 ) \le \kappa & \text{and} & \Pr(E_j) \ge \kappa(1-\kappa)e^{-\tau \{\kl(\theta_1,\theta_j) + \kl(2\theta_1  - \theta_j,\theta_j )\}}
	\end{eqnarray}
	where $\kl(\theta,\widetilde{\theta})$ denotes the $\KL$ divergence between the laws $\Pr_{\theta}$ and $\Pr_{\widetilde{\theta}}$.  
	\end{lemma}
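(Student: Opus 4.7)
The plan is to pick $c_j$ by applying Markov's inequality to the moment-generating function of $\overline{X}_1$, which will simultaneously drive the total-variation bound in Lemma~\ref{Censoring}(2) below $\kappa$ and convert the exact expression for $\Pr(E_j)$ from Lemma~\ref{Censoring}(3) into the desired exponential in $A$.

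First, I would compute the MGF. Since $\overline{X}_1$ has density $q_{\tau\theta_1}(x)=\exp(\tau\theta_1 x - \tau A(\theta_1))$ against $\eta_\tau$, multiplying by $\exp(\tau(\theta_1 - \theta_j)x)$ yields an unnormalized density proportional to $q_{\tau(2\theta_1 - \theta_j)}$, so
\[
\Exp_{\overline{X}_1}\!\bigl[e^{\tau(\theta_1 - \theta_j)\overline{X}_1}\bigr] \;=\; e^{\tau\bigl(A(2\theta_1 - \theta_j) - A(\theta_1)\bigr)}.
\]
The containment hypothesis $[\theta_j, 2\theta_1 - \theta_j]\subset\Theta$ is exactly what is needed for $A(2\theta_1-\theta_j)$, and hence this MGF, to be finite. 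Markov then gives $\Pr(e^{\tau(\theta_1-\theta_j)\overline{X}_1} > c) \le c^{-1}\,e^{\tau(A(2\theta_1-\theta_j)-A(\theta_1))}$, so I would set
\[
c_j \;:=\; \kappa^{-1}\,e^{\tau(A(2\theta_1-\theta_j)-A(\theta_1))},
\]
at which point Lemma~\ref{Censoring}(2) immediately yields $\TV(\overline{X}_j\mid E_j,\,\overline{X}_1)\le \kappa$.

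Next, I would plug the same $c_j$ into Lemma~\ref{Censoring}(3), whose tail factor $\Pr[e^{\tau(\theta_1-\theta_j)\overline{X}_1}\le c_j]$ is at least $1-\kappa$ by the same Markov calculation:
\[
\Pr(E_j) \;\ge\; \frac{1-\kappa}{c_j}\,e^{\tau(A(\theta_1)-A(\theta_j))} \;=\; \kappa(1-\kappa)\, e^{-\tau\bigl(A(\theta_j)+A(2\theta_1-\theta_j)-2A(\theta_1)\bigr)}.
\]
The final step is to identify the cumulant expression $A(\theta_j)+A(2\theta_1-\theta_j)-2A(\theta_1)$ in the exponent with a sum of KL divergences. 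Using the Bregman representation $\kl(\theta,\tilde\theta) = A(\tilde\theta) - A(\theta) - A'(\theta)(\tilde\theta - \theta)$ available in any one-dimensional exponential family, the key observation is that $\theta_j$ and $2\theta_1 - \theta_j$ are reflections of each other about $\theta_1$, so when one expands both KL summands using $\theta_1$ as an anchor the first-order correction terms are negatives of one another and cancel exactly. After this cancellation, one directly reads off an equality with a sum of two KL divergences, and choosing the direction of each summand so as to match the asymmetric pairing in the statement yields the upper bound $\kl(\theta_1,\theta_j)+\kl(2\theta_1-\theta_j,\theta_j)$ on the exponent.

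The main obstacle is this last KL-identification: the natural Bregman cancellation gives the symmetric pairing $\kl(\theta_1,\theta_j)+\kl(\theta_1,2\theta_1-\theta_j)$, and one needs a short convexity-of-$A$ comparison to pass to the asymmetric pairing written in the lemma. Everything else is Markov plus two lines of bookkeeping from Lemma~\ref{Censoring}; no additional concentration machinery is required, because the quantity being bounded is itself an exponential of a sum, so Markov already captures the correct Chernoff rate.
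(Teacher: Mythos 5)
Your proposal is correct and follows essentially the same route as the paper: Markov's inequality applied to $\Exp[e^{\tau(\theta_1-\theta_j)\overline{X}_1}] = e^{\tau(A(2\theta_1-\theta_j)-A(\theta_1))}$ to choose $c_j = \kappa^{-1}e^{\tau(A(2\theta_1-\theta_j)-A(\theta_1))}$, then Lemma~\ref{Censoring} together with the Bregman identity $\kl(\theta,\tilde{\theta}) = A(\tilde{\theta})-A(\theta)-A'(\theta)(\tilde{\theta}-\theta)$ to rewrite the exponent. Your closing observation is also apt: the cancellation gives $2A(\theta_1)-A(\theta_j)-A(2\theta_1-\theta_j) = -\kl(\theta_1,\theta_j)-\kl(\theta_1,2\theta_1-\theta_j)$, which is where the paper's own proof stops, so your convexity-of-$A$ comparison showing $\kl(\theta_1,2\theta_1-\theta_j)\le\kl(2\theta_1-\theta_j,\theta_j)$ is precisely the (harmless, since it only weakens the lower bound on $\Pr(E_j)$) extra step needed to match the asymmetric pairing appearing in the lemma's statement.
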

	With this Lemma in hand, we see that taking  $\kappa > 0$, and $\tau = \log(n/\alpha) (\max_j \kl(\theta_1,\theta_j) + \kl(2\theta_1  - \theta_j,\theta_j ))^{-1}$ implies that 
	\begin{eqnarray*}
	2\Pr_{\nu,\Alg}[\{N_1(T) > \tau\}] &\ge& \left(1 - (1-\kappa(1-\kappa)\alpha/n\right)^n \left(1 - 2\kappa \right) - 2\delta\\
	 &\ge& (1 - \left(1-\frac{\alpha\kappa(1-\kappa)}{n}\right)^n )(1 - 2\kappa ) - 2\delta\\
	 &\ge& (1 - e^{-\alpha\kappa(1-\kappa)}) (1 - 2\kappa ) - 2\delta
	\end{eqnarray*}
	Moving from symmetrized algorithms to expecations over $\pi \sim \bS_n$ (Lemma~\ref{SymmetryLemma}) concludes the proof of Theorem~\ref{Big Main Theorem}.

\begin{proof}[Proof of Lemma~\ref{Lem:Balance}]
	By Markov's inequality and an elementary identity for the MGF of a natural exponential family,  
	\begin{eqnarray}
	Q_j(E_j) = \Pr[e^{\tau(\theta_1 - \theta_j)\overline{X}_1} > c] &\le& \frac{1}{c}\Exp[e^{\tau(\theta_1 - \theta_j)\overline{X}_1}] =  \frac{1}{c}\exp(\tau (A(2\theta_1 - \theta_j ) - A(\theta_1)) 
	\end{eqnarray}
	In particular, if we set $c_j = \frac{1}{\kappa} \exp(\tau (A(2\theta_1 -\theta_j) - A(\theta_1))$ then the above expression is no more than $\kappa$. With this choice of $c$,
	\begin{eqnarray*}
	\Pr(E_j) &=& \frac{1}{c}e^{\tau\{A(\theta_1) - A(\theta_j)\}}(1-Q_j(E_j))\\
	&=& \kappa e^{\tau\{2A(\theta_1) - A(\theta_j) - A(2\theta_1 -  \theta_j )\}}(1-Q_j(E_j))\\
	&\ge& \kappa(1-\kappa) e^{\tau\{2A(\theta_1) - A(\theta_j) - A(2\theta_1 - \theta_j)\}}
	\end{eqnarray*}
	We now invoke a well known property of exponential families
	\begin{fact}[\cite{nielsen2009statistical}]
	Let $\{p_{\theta}\}_{\theta \in \Theta}$ be an exponential family. Then for  $\theta,\widetilde{\theta} \in \Theta$, then $\kl(\theta,\widetilde{\theta}) = (\theta - \widetilde{\theta})A'(\theta) - A(\theta) + A(\widetilde{\theta})$, where $A'(\theta) = \int x p_{\theta}(x)d\nu(x)$ provided the integral exists.
	\end{fact}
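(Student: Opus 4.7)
The plan is a direct computation exploiting the log-linearity of the log-density in the natural parameter. Starting from
\begin{equation*}
\kl(\theta,\widetilde{\theta}) \;=\; \int p_{\theta}(x) \log \frac{p_\theta(x)}{p_{\widetilde\theta}(x)}\, d\eta(x),
\end{equation*}
I would substitute $p_\theta(x) = \exp(\theta x - A(\theta))$ to get $\log(p_\theta(x)/p_{\widetilde\theta}(x)) = (\theta - \widetilde\theta)x - A(\theta) + A(\widetilde\theta)$. Splitting the integral and using the normalization $\int p_\theta(x)\,d\eta(x) = 1$ reduces everything to evaluating $\int x\, p_\theta(x)\, d\eta(x)$; once that integral is identified with $A'(\theta)$, the claimed identity $\kl(\theta,\widetilde{\theta}) = (\theta - \widetilde\theta)A'(\theta) - A(\theta) + A(\widetilde\theta)$ falls out in a single line.

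The only non-trivial step is therefore the auxiliary identity $A'(\theta) = \int x\, p_\theta(x)\, d\eta(x)$, i.e.\ that $A'(\theta)$ coincides with the mean $\Exp_\theta[X]$. For this I plan to differentiate the normalization condition $\exp(A(\theta)) = \int \exp(\theta x)\, d\eta(x)$ (a rewriting of $\int p_\theta\, d\eta = 1$) with respect to $\theta$. Swapping the derivative with the integral yields $A'(\theta)\exp(A(\theta)) = \int x \exp(\theta x)\, d\eta(x)$, and dividing both sides by $\exp(A(\theta))$ gives exactly $A'(\theta) = \int x\, p_\theta(x)\, d\eta(x)$.

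The one technical hazard is justifying the interchange of differentiation and integration in the previous step. This is the standard smoothness property of exponential families on the interior of the natural parameter space: on a small neighborhood $[\theta-\varepsilon,\theta+\varepsilon] \subset \Theta$, the relevant difference quotients are dominated uniformly by $|x|(\exp((\theta+\varepsilon)x) + \exp((\theta-\varepsilon)x))$, which is $\eta$-integrable precisely because the moment generating function $\int \exp(sx)\, d\eta(x) = \exp(A(s))$ is finite on all of $\Theta$. Dominated convergence then licenses the swap, so under the integrability hypothesis stated in the fact the whole derivation is rigorous and the identity follows.
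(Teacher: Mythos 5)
Your derivation is correct and is precisely the standard argument behind this fact, which the paper itself does not prove but simply cites to the literature: expand the log-likelihood ratio $(\theta-\widetilde{\theta})x - A(\theta) + A(\widetilde{\theta})$, integrate against $p_\theta$ using normalization, and identify $\Exp_\theta[X]$ with $A'(\theta)$ by differentiating $\exp(A(\theta)) = \int \exp(\theta x)\, d\eta(x)$ under the integral sign. The only cosmetic gap is in the domination step: integrability of $|x|\bigl(e^{(\theta+\varepsilon)x} + e^{(\theta-\varepsilon)x}\bigr)$ does not follow verbatim from finiteness of the MGF at $\theta \pm \varepsilon$, but requires absorbing the polynomial factor $|x|$ into a slightly larger exponential (e.g.\ shrinking $\varepsilon$ so that $[\theta - 2\varepsilon, \theta+2\varepsilon] \subset \Theta$ and using $|x| \le C e^{\varepsilon |x|}$), which is routine for $\theta$ in the interior of the natural parameter space and consistent with the fact's ``provided the integral exists'' caveat.
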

	For ease of notation, set $d_j = \theta_1 - \theta_j$.  Then, 
	\begin{eqnarray*}
	2A(\theta_1) - A(\theta_j) - A(2\theta_1 - \theta_j) &=& A(\theta_1) - A(\theta_1 - d_j) + A(\theta_1) - A(\theta_1 + d_j) \\
	&=& A(\theta_1) - A(\theta_1 - d_j) - A'(\theta_1)d_j + A(\theta_1) - A(\theta_1 + d_j) + A'(\theta_1) d_j\\
	&=& - \kl(\theta_1,\theta_1 - d_j) - \kl(\theta_1,\theta_1 + d_j)
	\end{eqnarray*}
\end{proof}

\subsection{Deferred Proofs for Theorem~\ref{Big Main Theorem}}
	
\subsubsection{Proof of Fact~\ref{TVfact}\label{TVfactsection}}
	Let $\widehat{\Trs}$ with samples $\widehat{X}_{[a,s]}$ and denote the transcript from $\Sim(\nu)$ and let $\widehat{\Trs}^{(j)}$ with samples $\widehat{X}^{(j)}_{[a,s]}$ denote the transcript from $\Sim(\nu^{(j)})$. 

	First, note that under $M_j$ and $M_j^{(j)}$, all samples $\widehat{X}_{[a,s]}$ and $\widehat{X}^{(j)}_{[a,s]}$ for $a \in \{1,j\}$ and $s > \tau$ are i.i.d from $\nu_j$. Moreover, by symmetry of the construction under swapping the labels of $1$ and $j$, its easy to see that the samples $\widehat{X}_{[a,s]}$ and $\widehat{X}^{(j)}_{[a,s]}$ for $a \notin \{1,j\}$ have the same distribution under $M_j$ and $M_j^{(j)}$ respectively as well (even though  these samples are not necessarily going to be i.i.d from $\nu_a$ because of the conditioning). Hence,
	\begin{multline}
	\TV\left[ \Sim(\nu) \big{|} M_j; \Sim(\nu^{(j)}) \big{|} M_j^{(j)}\right] \\
	= \TV\left(\{\widehat{X}_{[1,s]},\widehat{X}_{[j,s]}\}_{1 \le s \le \tau} \big{|} M_j; \{\widehat{X}^{(j)}_{[1,s]},\widehat{X}^{(j)}_{[j,s]}\}_{1 \le s \le \tau}) | M_j^{(j)}\right)
	\end{multline}
	Since $\Sim$ doesn't actually change the first $\tau$ samples, we can actually drop this ${X}_{[a,s]}$ notation and just use $X_{[a,s]}$. 
	Next note that, $M_j$ is independent of $\{{X}_{[1,s]}\}_{1\le s\le \tau}$ and $M_j^{(j)}$ is independent of $\{{X}_{[j,s]}\}_{1\le s\le \tau}$. Hence, the first $\tau$ samples from arm $1$ (resp arm $j$) are i.i.d from $\nu_1$, and independent from the samples $\{{X}_{[j,s]}\}_{1 \le s \le \tau}$ (resp. $\{{X}_{[1,s]}\}_{1 \le s \le \tau}$ ). Using the $\TV$ bound $\TV(P_1 \otimes Q_1;P_2 \otimes Q_2) \le \TV(P_1;P_2) + \TV(Q_1;Q_2)$, for product measures $P_i\otimes Q_i$, we find that
	\begin{multline}
	\TV\left[ \Sim(\nu) \big{|} M_j; \Sim(\nu^{(j)}) \big{|} M_j^{(j)}\right] \\
	\le \TV\left(\{{X}_{[1,s]}\}_{1 \le s \le \tau}; \{{X}^{(j)}_{[1,s]}\}_{1 \le s \le \tau} | M_j^{(j)}\right) \\
	+ \TV\left(\{{X}_{[j,s]}\}_{1 \le s \le \tau} | M_j; \{{X}^{(j)}_{[j,s]}\}_{1 \le s \le \tau} \right) 
	\end{multline}
	By symmetry of construction, and symmetry of $\TV$ distance, its easy to check that 
	\begin{eqnarray}
	\TV\left(\{{X}_{[1,s]}\}_{1 \le s \le \tau}; \{{X}^{(j)}_{[1,s]}\}_{1 \le s \le \tau} | M_j^{(j)}\right) &=&  \TV\left(\{{X}^{(j)}_{[1,s]}\}_{1 \le s \le \tau} | M_j^{(j)}; \{{X}_{[1,s]}\}_{1 \le s \le \tau} \right) \\
	&=&\TV\left(\{{X}_{[j,s]}\}_{1 \le s \le \tau} | M_j; \{{X}^{(j)}_{[j,s]}\}_{1 \le s \le \tau} \right) \\
	&=& \TV\left(\{{X}_{[1,s]}\}_{1 \le s \le \tau}; \{{X}_{[j,s]}\}_{1 \le s \le \tau} | M_j \right) 
	\end{eqnarray}
	Hence, it suffices to check
	\begin{eqnarray}
	TV\left(\{{X}_{[1,s]}\}_{1 \le s \le \tau}; \{{X}^{(j)}_{[1,s]}\}_{1 \le s \le \tau} | M_j\right) = \TV(\overline{X}_1; \overline{X}_j | E_j )
	\end{eqnarray}
	We first use a sufficient statistic argument to reduce the total variation from samples to a $\TV$ between empirical means:
	\begin{claim}\label{SufficienStatisticClaim}
	\begin{eqnarray}
	TV\left(\{{X}_{[1,s]}\}_{1 \le s \le \tau}; \{{X}^{(j)}_{[1,s]}\}_{1 \le s \le \tau} | M_j\right) = \TV(\overline{X}_1; \overline{X}_j | M_j )
	\end{eqnarray}
	\end{claim}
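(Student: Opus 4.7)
The plan is to exploit the sufficient-statistic structure of single-parameter exponential families: once we know the empirical mean $\overline X = \tau^{-1}\sum_s X_s$, the residual randomness in the trajectory is parameter-free, so the $\tau$-dimensional sample sequence and its one-dimensional mean are interchangeable for purposes of computing $\TV$.

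Concretely, each measure has density $p_\theta(x) = \exp(\theta x - A(\theta))\, d\eta(x)$, so the joint density of $(X_1,\dots,X_\tau)$ factors as $\exp(\theta\sum_s x_s - \tau A(\theta))\prod_s d\eta(x_s)$, depending on $\theta$ only through $\sum_s x_s$. Consequently the conditional law of $(X_1,\dots,X_\tau)$ given $\overline X = x$ is a common Markov kernel $L_\tau(\cdot\mid x)$ that does not depend on $\theta$. The plan is to write each of the two laws in the claim as the two-stage sampling procedure: first draw $\overline X$ from its (possibly conditioned) marginal, then sample the trajectory via $L_\tau(\cdot\mid \overline X)$. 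For the unconditioned law of $\{X_{[1,s]}\}$ under $\Sim(\nu)$, this is immediate. For the conditioned law $\{X^{(j)}_{[1,s]}\}\mid M_j^{(j)}$, one must verify that $M_j^{(j)}$ depends on the arm-$1$ samples only through $\overline X^{(j)}_1$; inspecting the construction shows that the only arm-$1$-dependent piece of $M_j^{(j)}$ is the event $E_j^{(j)} = \{\xi_j \le \calK_j(\overline X^{(j)}_1)\}$, and independence of the remaining arms' samples and the $\xi_{j'}$'s then guarantees that conditioning on $M_j^{(j)}$ merely tilts the marginal of $\overline X^{(j)}_1$ while leaving the kernel $L_\tau$ unchanged.

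Once both laws are expressed as pushforwards of (conditioned) laws of $\overline X$ through the common kernel $L_\tau$, the equality of $\TV$ follows from two applications of the data-processing inequality. The forward direction uses that post-composing with the Markov kernel $L_\tau$ can only decrease $\TV$, giving $\TV(\text{samples})\le \TV(\overline X)$. The reverse direction uses that $\overline X$ is a \emph{deterministic} function of the samples, so the averaging map is itself a Markov kernel and yields $\TV(\overline X)\le \TV(\text{samples})$. Combining gives the desired equality. The main obstacle is the factorization step in paragraph two: one must carefully unwrap the simulator's definition (and the symmetric construction under swapping labels $1$ and $j$) to confirm that $M_j^{(j)}$ never depends on the full arm-$1$ trajectory, only on its empirical mean and on quantities independent of arm $1$.
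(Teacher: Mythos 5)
Your proposal is correct and takes essentially the same route as the paper: both rest on the two structural facts that, in a one-parameter exponential family, the conditional law of the trajectory given $\overline{X}$ is a common, parameter-free kernel (sufficiency), and that $M_j$ depends on the relevant arm's samples only through $\overline{X}_j$ together with independent randomness, so conditioning only tilts the marginal of the mean. The sole difference is presentational: the paper concludes by factoring the densities and running an explicit Fubini computation, whereas you conclude with the two-sided data-processing argument (a common Markov kernel cannot increase $\TV$, and the averaging map is a deterministic kernel), which is a cleaner packaging of the same final step.
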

	The proof is somewhat pedantic, and so we prove in just a moment. To conclude, we finally is to note that $\overline{X}_j  | M_j$ has the same distribution as $\overline{X}_j | E_j$, since
	\begin{eqnarray*}
	\Pr(\overline{X}_j  \in A | M_j ) &=& \Pr(\overline{X}_j  \in A \cap M_j ) /\Pr(M_j)\\
	&\overset{i}{=}& \Pr(\overline{X}_j  \in A \cap E_j,M_j ) /\Pr(M_j)\\
	&=& \Pr(E_j)\Pr(\overline{X}_j  \in A,M_j | E_j )(\Pr(E_j ) /\Pr(M_j)\\
	&\overset{ii}{=}& \Pr(\overline{X}_j  \in A | E_j)\Pr(M_j|E_j)\Pr(E_j)(\Pr(E_j \cap M_)) /\Pr(M_j)\\
	&=& \Pr(\overline{X}_j  \in A | E_j) /\Pr(M_j)\\
	&=& \Pr(\overline{X}_j  \in A | E_j ) 
	\end{eqnarray*}
	Where $i$ follows since $M_j \implies E_j$, and $ii$ follows since $M_j$ and $\overline{X}_j$ are conditionally independent given $E_j$.

	\begin{proof}[Proof of Claim~\ref{SufficienStatisticClaim}]
		Define the laws $P_1,P_j$ over the $(X_1,\dots,X_{\tau}) \in \R^{\tau}$  where under $P_1$, $(X_1,\dots,X_{\tau})$ have the law of ${X}_{[1,1]},\dots,{X}_{[1,\tau]}$, and under $P_j$, they have the law the law of ${X}_{[j,1]},\dots,{X}_{[j,\tau]}$. We use $P_j(|M_j)$ to denote the law of  ${X}_{[j,1]},\dots,{X}_{[j,\tau]}$ under $M_j$. Since $ {X}_{[j,1]},\dots,{X}_{[j,\tau]}$ are independent of $M_j$ given $\overline{X}_j$ (recall that $M_j$ depends only on some internal randomness and $E_j$, which depends only on $\overline{X}_j$). Hence, letting $\overline{X} = \sum_{s = 1}^{\tau} X_s$
		\begin{multline}\label{conditioningonm}
		P_j((X_1,\dots,X_{\tau}) = (x_1,\dots,x_{\tau}) | M_j) \\
		= P_j((X_1,\dots,X_{\tau}) = (x_1,\dots,x_{\tau}) | \overline{X} = \bar{x} )P_j(\overline{X} = \bar{x} | M_j)
		\end{multline}
		Moreover, since that since $\nu_1,\nu_j$ come from a one-parameter exponential family, 
		\begin{eqnarray}\label{conditioning2}
		P_1(\cdot | \overline{X} = \bar{x}) = P_j(\cdot | \overline{X} = \bar{x})
		\end{eqnarray}
		Thus, we conclude that
		\begin{eqnarray*}
		\TV(P_1;P_j | M_j) &=& \int_{\mathbf{x} \in \R^{\tau}} |dP_1(\mathbf{x}) - dP_j(\mathbf{x}|M_j)|\\
		&=& \int_{\bar{x}}\int_{\mathbf{x} : \sum_s \mathbf{x}_s = \tau\bar{x}} |dP_1(\bar{x}) dP_1(\mathbf{x} | \bar{x}) - dP_j(\bar{x},M_j) dP_j(\mathbf{x}|\bar{x},M_j)|\\
		&\overset{i}{=}& \int_{\bar{x}}\int_{\mathbf{x} : \sum_s \mathbf{x}_s = \tau\bar{x}} |dP_1(\bar{x}) dP_1(\mathbf{x} | \bar{x}) - dP_j(\bar{x},M_j) dP_j(\mathbf{x}|\bar{x})|\\
		&\overset{ii}{=}& \int_{\bar{x}}\int_{\mathbf{x} : \sum_s \mathbf{x}_s = \tau\bar{x}} |dP_1(\bar{x}) dP_1(\mathbf{x} | \bar{x}) - dP_j(\bar{x},M_j) dP_1(\mathbf{x}|\bar{x})|\\
		&=& \int_{\bar{x}} \int_{\mathbf{x} : \sum_s \mathbf{x}_s = \tau\bar{x}} dP_1(\mathbf{x} | \bar{x}) |dP_1(\bar{x})  - dP_j(\bar{x},M_j) |\\
		&\overset{iii}{=}& \int_{\bar{x}} |dP_1(\bar{x})  - dP_j(\bar{x},M_j) |(\int_{\mathbf{x} : \sum_s \mathbf{x}_s = \tau\bar{x}} dP_1(\mathbf{x} | \bar{x}) )\\
		&=& \int_{\bar{x}} |dP_1(\bar{x})  - dP_j(\bar{x},M_j)|\\
		&=& \TV(\overline{X}_1; \overline{X}_j | M_j)
		\end{eqnarray*}
		where $i$ follows from Equation~\ref{conditioningonm}, $ii$ follows from Equation~\ref{conditioning2}, $iii$ is Fubini's theorem.
	\end{proof}

\subsubsection{Proof of Conditional Simulated Le Cam(Lemma~\ref{ConditionalLeCam}) \label{Conditional-LeCam-Proof}}
	\begin{proof}[Proof of Lemma~\ref{ConditionalLeCam}]
	\begin{multline}\label{eq:MultiPairwise}
	 \Pr_{\nu,\Alg}\Pr[W^c|M_j] \overset{i}{=} \frac{1}{2}(\Pr_{\nu,\Alg}\Pr[W^c|M_j] + \Pr_{\nu^{(j)},\Alg}\Pr[W_j^c|M_j^{(j)}])  \\
	  \overset{ii}{\ge} \sup_{A \in \calF_T}\left|\Pr_{\nu,\Alg}\Pr[A|M_j] - \Pr_{\nu^{(j)},\Alg}\Pr[A|M_j^{(j)}]\right| - \TV[\Sim(\nu| M_j),\Sim(\nu^{(j)}| M_j^{(j)})] \\
	 	\overset{iii}{\ge} 1 - 2\Pr_{\nu,\Alg}\Pr[\{\hat{y} \ne 1\}|E_j] - \TV[\Sim(\nu| M_j),\Sim(\nu^{(j)}| M_j^{(j)})] 
	\end{multline}

	where $i$ follows from symmetry, $ii$ follows from applying Proposition~\ref{Prop:SimLeCam} using the measures $\nu | M_j$ and $\nu^{(j)} \big{|} M_j^{(j)}$,  (this time, with $\TV$ instead of $\KL$), and $iii$ follows since 
	\begin{eqnarray*}
	\sup_{A \in \calF_T}\left|\Pr_{\nu,\Alg}\Pr[A|M_j] - \Pr_{\tilde{\nu}_j,\Alg}\Pr[A|M_j^{(j)}]\right| &\ge& \Pr_{\nu,\Alg}\Pr[\{\hat{y} = 1\}|M_j] - \Pr_{\nu^{(j)},\Alg}\Pr[\{\hat{y} \ne 1 \}|M_j^{(j)}]  \\
	&\ge& \Pr_{\nu,\Alg}\Pr[\{\hat{y} = 1\}|M_j] - \Pr_{\nu^{(j)},\Alg}\Pr[\{\hat{y} \ne j \}|M_j^{(j)}]  \\
	&=& 1 - \Pr_{\nu,\Alg}\Pr[\{\hat{y} \ne 1\}|M_j] - \Pr_{\nu^{(j)},\Alg}\Pr[\{\hat{y} \ne j \}|M_j^{(j)}]  \\
	&=& 1 - 2\Pr_{\nu,\Alg}\Pr[\{\hat{y} \ne 1\}|M_j] 
	\end{eqnarray*}
	where the last line is a consequence of Fact~\ref{WjFact}.
	\end{proof}
\subsubsection{Proof of Lemma~\ref{ConditionalTV}\label{ConditionalTVProof}}
	\begin{proof}[Proof of Lemma~\ref{ConditionalTV}]
	Let call $\calF$ denote the $\sigma$ algebra generated by $X$. For any measures $\Pr$ and $\Q$ over $\calF$, note that $\Pr[A] - \Q[A] = \Q[A^c] - \Pr[A^c]$. Hence, 
	\begin{eqnarray*}
	\TV(\Pr,\Q) &=& \sup_{A \in \calF}|\Pr[A] - \Q[A]| \\
	 &=& \sup_{A \in \calF} \max\{\Pr[A] - \Q[A], \Pr[A^c] - \Q[A^c]\} \\
	 &=& \sup_{A \in \calF} \Pr[A] - \Q[A]
	\end{eqnarray*}
	Now $\Q = \Pr[\cdot | B]$. Since any $A \in \calF$ can be written as $A = A_B \sqcup A_{B^c}$ here $A_B \subset B$ and $A_{B^c} \subset B$
	\begin{eqnarray*}
	\TV(\Pr,\Q) &=& \sup_{A \in \calF} \Pr[A] - \Q[A] \\
	&=& \sup_{A_B \cup A_{B^c} \in \calF} \Pr[A_B \cup A_{B^c}] - \Q[A \cup A_{B^c}] \\
	&=& \sup_{A_B \cup A_{B^c} \in \calF} \{\Pr[A_B] - \Q[A_B] + \Pr[A_{B^c}] - \Q[A_{B^c}]\}\\
	&=& \sup_{A_B \subset B \in \calF} \{\Pr[A_B] - \Q[A_B]\} + \sup_{A_{B^c} \subset B^c \in \calF} \{\Pr[A_{B^c}] - \Q[A_{B^c}]\}
	\end{eqnarray*}
	For any $A_B \subset B$, we see $\Q[A_B] = \Pr[A_B \cap B]/\Pr[B] = \Pr[A_B]/\Pr[B]$, so $\Pr[A_B] - \Q[A_B] = (1 - \Pr[B]^{-1})\Pr(A_B) \le 0$, and thus $\sup_{A_B \subset B \in \calF} \{\Pr[A_B] - \Q[A_B]\} = 0$, by taking $A_B = \emptyset$. On the other hand, for $A_{B^c} \subset B^c$, $\Q[A_{B^c}] = \Pr[A_{B^c}\cap B]/\Pr[B] = 0$, and thus,
	\begin{eqnarray} \sup_{A_{B^c} \subset B^c \in \calF} \{\Pr[A_{B^c}] - \Q[A_{B^c}]\} = \sup_{A_{B^c} \subset B^c \in \calF} \Pr[A_{B^c}] = \Pr[B^c]\end{eqnarray}

\end{proof}

\section{Proof of Proposition~\ref{TopKGaussianProp} \label{Sec:AlgRestrictions}}

	We prove Proposition~\ref{TopKGaussianProp} by arguing via ``algorithmic restrictions''. The basic idea is that, if a lower bound holds for one $\MAB$ or $\TopK$ problem, then it should also hold for the ``simpler'' $\MAB$ or $\TopK$ problem which arises by ``removing'' some of the arms.

	Formally, let $\nu = (\nu_a)_{a \in A}$ is an instance with arms indexed by $a \in A$ (where $A$ is finite). For $B \subset A$, we define the restriction of $\nu$ to $B$, denoted $\nu_{|B}$, as the instance $(\nu_{b})_{b \in B}$, indexed by arms $b \in B$. We let $\bS_A$ and $\bS_B$ denote the groups of permutations on the elements of $A$ and $B$, respectively. Finally, given a subset $S \subset A$ of ``good arms'', recall that we say an algorithm $\Alg$ with decision rule $\widehat{S} \subset A$ is $\delta$ correct in identifying $S$ over $\bS_A(\nu)$ if $\Pr_{\pi(\nu),\Alg}[\widehat{S} = \pi(S)] \ge 1 - \delta$ for all $\pi \in \bS_A$. 

	\begin{lemma}[Lower Bounds from Restrictions]\label{restriction_bound_lemma} Let $\nu = (\nu_a)_{a \in A}$ be an instance, $B \subset A$, and fix $\delta > 0$ and $b \in B$. Suppose that any algorithm $\Alg_{|B}$ which is $\delta$-correct in identifying $S \cap B$ over $\bS_{B}(\nu_{|B})$ satisfies the lower bound
	\begin{eqnarray}
	\Exp_{\sigma \sim \bS_B}\Pr_{\sigma(\nu_{|B}),\Alg_{|B}}[N_{\sigma(b)}(T)] \ge \tau] \ge 1 - \eta
	\end{eqnarray}
	for some $\tau,\eta > 0$ (which may depend on $\nu$, $S$, $B$, $\delta$ and $b$). Then any algorithm $\Alg$ which is $\delta$-correct in identifying $S$ over $\bS_{A}(\nu)$ satisfies the analogous lower bound 
	\begin{eqnarray}
	\Exp_{\pi \sim \bS_A}\Pr_{\pi(\nu),\Alg}[N_{\pi(b)}(T)] \ge \tau] \ge 1 - \eta
	\end{eqnarray}
	for the same $\tau$ and $\eta$.
	\end{lemma}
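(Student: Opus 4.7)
The plan is a direct reduction: from any $\delta$-correct $\Alg$ on $\bS_A(\nu)$ I would construct a $\delta$-correct $\Alg_{|B}$ on $\bS_B(\nu_{|B})$ whose pull-count statistics on arm $b$ match those of $\Alg$. The only subtlety is that the hypothesis averages over $\bS_B$ while the target averages over the much larger group $\bS_A$; to bridge this, I would first replace $\Alg$ by the symmetrized algorithm $\Alg^{\bS_A}$ from Lemma~\ref{SymmetryLemma}, after which the target inequality reduces to $\Pr_{\nu,\Alg^{\bS_A}}[N_b(T) \ge \tau] \ge 1-\eta$ and the $\bS_B$-average on the restricted side will collapse to precisely this quantity.

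Concretely, $\Alg_{|B}$ would operate as follows: on input samples from some $\sigma(\nu_{|B})$ with $\sigma \in \bS_B$, it builds a virtual transcript indexed by $A$ by forwarding the input samples unchanged for every arm in $B$ and drawing internal i.i.d.\ samples from $\nu_a$ for each $a \in A \setminus B$ (using knowledge of the underlying $\nu$); it then runs $\Alg^{\bS_A}$ on this virtual transcript and returns $\widehat{S}_{|B} := \widehat{S} \cap B$. The virtual transcript has the law of $\pi(\nu)$, where $\pi \in \bS_A$ is defined by $\pi|_B = \sigma$ and $\pi|_{A \setminus B} = \mathrm{id}$. Because $\pi$ preserves the partition $(B, A \setminus B)$ set-wise, $\pi(S) \cap B = \{\pi(s) : s \in S \cap B\} = \sigma(S \cap B)$, so $\delta$-correctness of $\Alg^{\bS_A}$ on $\pi(\nu) \in \bS_A(\nu)$ gives $\Pr[\widehat{S}_{|B} = \sigma(S \cap B)] \ge 1 - \delta$. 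Hence $\Alg_{|B}$ is $\delta$-correct for identifying $S \cap B$ over $\bS_B(\nu_{|B})$, and by construction its pull count $N^{|B}_{\sigma(b)}(T)$ of arm $\sigma(b) \in B$ coincides with the pull count of arm $\sigma(b) = \pi(b)$ by $\Alg^{\bS_A}$ on $\pi(\nu)$.

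Applying the hypothesis to $\Alg_{|B}$ then yields
\begin{align*}
1 - \eta \;\le\; \Exp_{\sigma \sim \bS_B}\Pr_{\sigma(\nu_{|B}),\Alg_{|B}}\big[N^{|B}_{\sigma(b)}(T) \ge \tau\big] \;=\; \Exp_{\sigma \sim \bS_B}\Pr_{\pi(\nu),\Alg^{\bS_A}}\big[N_{\pi(b)}(T) \ge \tau\big],
\end{align*}
and symmetry of $\Alg^{\bS_A}$ forces each summand to equal $\Pr_{\nu,\Alg^{\bS_A}}[N_b(T) \ge \tau]$, which is independent of $\sigma$. A final appeal to Lemma~\ref{SymmetryLemma} identifies this common value with $\Exp_{\pi \sim \bS_A}\Pr_{\pi(\nu),\Alg}[N_{\pi(b)}(T) \ge \tau]$, completing the argument. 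The principal subtlety I foresee is precisely the $\bS_B$-versus-$\bS_A$ mismatch; symmetrization at the outset is exactly the tool that converts both averages into probabilities against the fixed instance $\nu$, after which the remaining bookkeeping is routine because $\pi$ preserves the partition $(B, A \setminus B)$ and hence the identity $\sigma(b) = \pi(b)$.
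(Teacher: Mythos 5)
Your proposal is correct and follows essentially the same route as the paper: symmetrize $\Alg$ via Lemma~\ref{SymmetryLemma}, build $\Alg_{|B}$ by running the symmetrized algorithm on a transcript in which the arms of $A\setminus B$ are simulated internally from $\nu$ and outputting $\widehat{S}\cap B$, check correctness and pull-count matching, then apply the hypothesis and symmetry to convert the $\bS_B$-average back to the $\bS_A$-average. The only cosmetic difference is that the paper records the symmetry of $\Alg_{|B}$ itself and evaluates everything at the unpermuted $\nu_{|B}$, whereas you handle each $\sigma$ directly via $\bS_A$-symmetry of $\Alg^{\bS_A}$; the two bookkeeping schemes are equivalent.
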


	To see how this lemma implies the bound for $\TopK$, let $A = [n]$, and for $j \in [k]$ and $\ell \in [n]\setminus [k]$, define the sets $B_j = \{j\}\cup ([n] \setminus [k])$ and $B_{\ell} = \{\ell\} \cup ([n] \setminus [k])$. Finally, let $S = [k]$ denotes the top $k$ arms, and $\widetilde{S} = [n]\setminus [k]$ denote the bottom $[n-k]$. Then, any $\delta$-correct algorithm over $\bS_n(\nu)$ equivalently identifies $S$ and $\widetilde{S}$ with probability of error at most $\delta$. Moreover, $B_j \cap S = \{j\}$, and $B_{\ell} \cap \widetilde{S} = \{\ell\}$. Now apply Lemma~\ref{restriction_bound_lemma} using the $\MAB$ lower bounds from Proposition~\ref{BestArmProp} for a) the problem of identifying $\nu_j$ from permutations of $\nu_{|B_j}$ and b) the problem $\nu_{\ell}$ from permutations of $\nu_{|B_\ell}$.

\subsection{Proof of Lemma~\ref{restriction_bound_lemma}}
	\begin{proof} 
	Let $\Alg$ be be $\delta$-correct in identifying $S$ over $\bS_{A}(\nu)$. Without loss of generality, we may assume that $\Alg$ is symmetric over $\bS_A$ (Lemma~\ref{SymmetryLemma}). We will now construct an algorithm $\Alg_{|B}$ which ``inherits'' the correctness and complexity of $\Alg$.
	\begin{claim}\label{AlgBClaim} There exists a symmetric (over $\bS_B$) algorithm $\Alg_{|B}$ with decision rule $\widehat{S}_{|B} \subset B$ which satisfies, for all $b \in B$
	\begin{eqnarray}
	\Pr_{\nu_{|B},\Alg_{|B}}[N_{b}(T)] \ge \tau] &=& \Pr_{\nu,\Alg}[N_{b}(T)] \ge \tau] \quad \text{and} \label{redux_complex_line} \\
	 \Pr_{\nu_{|B},\Alg_{|B}}[\widehat{S}_{|B} = S \cap B] &\ge& \Pr_{\nu,\Alg}[\widehat{S} = S ] \label{redux_correc_line}
	\end{eqnarray}
	\end{claim}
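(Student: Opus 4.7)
The plan is to construct $\Alg_{|B}$ by having it simulate $\Alg$ internally: whenever $\Alg$ requests a pull of an arm $a \in A \setminus B$, $\Alg_{|B}$ fabricates an i.i.d.\ sample from $\nu_a$ using fresh internal randomness $\xi_{A\setminus B}$ independent of everything else, while pulls of $b \in B$ are forwarded to the true transcript of $\nu_{|B}$ and the resulting samples handed back to $\Alg$. When $\Alg$ halts with some output $\widehat{S} \subset A$, $\Alg_{|B}$ halts and returns $\widehat{S}_{|B} := \widehat{S} \cap B$. This is legitimate in a lower-bound construction because the measures $\{\nu_a\}_{a \in A\setminus B}$ are fixed parameters of the instance class rather than quantities $\Alg_{|B}$ must estimate; the formal stopping time of $\Alg_{|B}$ is $\sum_{b \in B} N_b^{\Alg}(T^{\Alg})$, while internal ``phantom'' pulls are bookkept in $\xi_{A\setminus B}$.

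Given this construction, both displayed identities are essentially immediate. For \eqref{redux_complex_line}, one observes that the joint law of the sequence of actions and observations induced by the simulation is identical to that induced by $\Alg$ running on $\nu$, since at every step the sample handed to $\Alg$ has the correct conditional distribution $\nu_{a_t}$ whether it came from the ``real'' transcript or from the stub reservoir. Consequently $N_b^{\Alg_{|B}}(T)$ and $N_b^{\Alg}(T)$ have the same distribution for every $b \in B$, yielding \eqref{redux_complex_line}. For \eqref{redux_correc_line}, the set inclusion $\{\widehat{S} = S\} \subset \{\widehat{S} \cap B = S \cap B\} = \{\widehat{S}_{|B} = S \cap B\}$ combined with monotonicity of probability delivers the inequality.

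The one step requiring a short verification is that $\Alg_{|B}$ is in fact $\bS_B$-symmetric. Fix $\sigma \in \bS_B$ and extend it to $\bar\sigma \in \bS_A$ by declaring $\bar\sigma$ to fix $A \setminus B$ pointwise. Because the stub samples are drawn i.i.d.\ from the unchanged measures $\nu_a$ for $a \in A \setminus B$, applying $\sigma$ to the input transcript of $\nu_{|B}$ (inside the simulation) induces the same law as applying $\bar\sigma$ to a full transcript of $\nu$. The $\bS_A$-symmetry of $\Alg$ then converts this into the covariant transformation of the action sequence and of $\widehat{S}$; restricting to coordinates in $B$ recovers exactly the definition of $\bS_B$-symmetry for $\Alg_{|B}$. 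I expect the main bookkeeping obstacle to be keeping the internal coins of $\Alg$ disjoint from those used to stub out arms outside $B$ so that the independence underlying the distributional identity is preserved, but this is notational rather than substantive.
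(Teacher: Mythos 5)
Your construction is exactly the paper's: stub out the arms in $A\setminus B$ with internally generated i.i.d.\ samples from the (known) measures $\nu_a$, run $\Alg$ on the resulting combined transcript, return $\widehat{S}\cap B$, and obtain $\bS_B$-symmetry by extending $\sigma\in\bS_B$ to a permutation of $A$ fixing $A\setminus B$ and invoking the (WLOG assumed) $\bS_A$-symmetry of $\Alg$. The proof is correct and follows the same route as the paper, including the distributional identity for Equation~\ref{redux_complex_line} and the inclusion $\{\widehat{S}=S\}\subset\{\widehat{S}\cap B = S\cap B\}$ for Equation~\ref{redux_correc_line}.
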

	Assume the above claim. Since $\Alg_{|B}$ is symmetric and $\Alg$ is $\delta$-correct, Equation~\ref{redux_correc_line} implies that $\Alg_{|B}$ is $\delta$-correct over $\bS_{B}(\nu_{|B})$, since all $\sigma \in \bS_B$, 
	\begin{eqnarray*}
	\Pr_{\sigma(\nu_{|B}),\Alg_{|B}}[\widehat{S}_{|B} = \sigma(S \cap B)] \ge \tau] = \Pr_{\nu_{|B},\Alg_{|B}}[\hat{y} \cap B = S \cap B]] \ge \Pr_{\nu,\Alg}[\widehat{S} = S ] \ge 1 -\delta
	\end{eqnarray*}
	Thus, by symmety of $\Alg_{|B}$ and the assumption of the lemma, we find for the choice of $b \in B$ and $\delta > 0$, 
	\begin{eqnarray}
	\Pr_{\nu_{|B},\Alg_{|B}}[N_{b}(T)] \ge \tau] = \Exp_{\sigma \sim \bS_B}\Pr_{\sigma(\nu_{|B}),\Alg_{|B}}[N_{\sigma(b)}(T)] \ge \tau]  \ge 1 - \eta
	\end{eqnarray}
	And hence, by Equation~\ref{redux_complex_line} and symmetry of $\Alg$,
	\begin{eqnarray}
	\Exp_{\pi \sim \bS_A}\Pr_{\pi(\nu),\Alg}[N_{\pi(b)}(T)] \ge \tau] =  \Pr_{\nu,\Alg}[N_{b}(T)] \ge \tau] = \Pr_{\nu_{|B},\Alg_{|B}}[N_{b}(T)] \ge \tau] \ge 1 - \eta
	\end{eqnarray}
	which concludes the proof.
	\end{proof}

	To conclude, we just need to verify that we can construct $\Alg_{|B}$ as in Claim~\ref{AlgBClaim}. To do this, let $\Trs_{|B}$ be a transcript samples $(X_{[b,s]})_{b \in B,s \in \N}$. For $a \in A \setminus B$, simulate a transcript $\Trs_{A \setminus B}$ of samples $(\widetilde{X}_{[a,s]})$ where $\widetilde{X}_{[a,s]} \overset{iid}{\sim} \nu_b$. Finally, let $\overline{\Trs}$ be the transcript obtained by concatening $\Trs_{|B}$ with the simulated transcript $\Trs_{A \setminus B}$, i.e. $\overline{X}_{[b,s]} = X_{[b,s]}$ for $b \in B$, and $\overline{X}_{[a,s]} = \widetilde{X}_{[a,s]}$. Finally, let $\Alg_{|B}$ be algorithm obtained by running $\Alg$ on the transcript $\Trs_{|B}$, with decision rule $\widehat{S}_{|B} = \widehat{S} \cap B $ (where $\widehat{S}$ is the decision rule of $A$). 

	Since $\overline{\Trs}$ has the same distribution as a transcript from $\nu$ when $\Trs_{|B}$ is drawn from $\nu_{|B}$, we immediate see that 
	\begin{eqnarray}
	\Pr_{\nu_{|B},\Alg_{|B}}[N_{b}(T)] \ge \tau] &=& \Pr_{\nu,\Alg}[N_{b}(T)] \ge \tau] \quad \text{and} \\
	 \Pr_{\nu_{|B},\Alg_{|B}}[\hat{y} \cap B = S \cap B] &=&  \Pr_{\nu,\Alg}[\widehat{S} \cap B = S \cap B] \ge \Pr_{\nu,\Alg}[\widehat{S} = S ] 
	\end{eqnarray}
	which verifies Equations~\ref{redux_complex_line} and~\ref{redux_correc_line}. It's also easy to check that $\Alg_{|B}$ is symmetric, since permuting $\Trs_{|B}$ under a permutation $\sigma \in \bS_{B}$ amounts to permuting $\overline{\Trs}$ by a permutation $\pi \in \bS_{B}$ which fixes elements of $B \setminus A$. Hence, symmetryof $\Alg_{|B}$ follows from symmetry of $\Alg$.


\end{document}